\documentclass[twoside,11pt]{article}

\usepackage{blindtext}

\usepackage[preprint]{jmlr2e}

\usepackage{amsmath}
\usepackage{algorithm}
\usepackage{algorithmic}
\newtheorem{assumption}[theorem]{Assumption}

\usepackage{lastpage}

\ShortHeadings{Robust Conformalized Selection with Noisy Responses}{YU, Wei and Jing}
\firstpageno{1}

\begin{document}

\title{Robust Conformalized Selection with Noisy Responses}

\author{\name Chengyao Yu
\email 12532239@mail.sustech.edu.cn\\
\addr Department of Statistics and Data Science\\
Southern University of Science and Technology\\
Shenzhen 518055, China
\AND
\name Hongxin Wei
\email weihx@sustech.edu.cn\\
\addr Department of Statistics and Data Science\\
Southern University of Science and Technology\\
Shenzhen 518055, China
\AND
\name Bingyi Jing\thanks{Corresponding author.} \email bingyijing@cuhk.edu.cn \\
       \addr School of Artificial Intelligence\\
       The Chinese University of Hong Kong, Shenzhen\\
       Shenzhen Loop Area Institute\\
       Shenzhen 518172, China}

\editor{My editor}

\maketitle

\begin{abstract}
Conformalized selection has been widely applied to select high-quality candidates from large datasets with rigorous uncertainty quantification, such as reliable labeling, drug discovery, and the alignment of large language models. Nevertheless, existing methods assume clean responses on calibration data, an assumption that rarely holds in practice. In this paper, we formulate the above tasks as selecting candidates with true predicted labels or with responses exceeding certain values. We demonstrate that existing conformal selection methods fail to control the false discovery rate (FDR) or suffer from severe power loss under contaminated calibration data. To that end, we propose \textit{Robust Conformalized Selection} (RCS), a unified framework for selective classification with valid FDR control under general label contamination. The key insight of RCS lies in a novel statistical reduction: by separately conditioning on different classes, we translate the intractable label noise into a localized covariate shift problem, which then enables a covariate-adjusted empirical-Bayes-type estimate of the number of false selections. Statistical properties such as the asymptotic FDR control, power optimality, and robustness of RCS are established. We further develop an instantiation of RCS under randomized response model, and also apply RCS to the task of selecting candidates with large response values. Extensive experiments on both simulated and real-world datasets demonstrate the effectiveness of RCS.
\end{abstract}

\begin{keywords}
  Conformalized selection, Covariate-adjusted empirical-Bayes, Data contamination, False discovery rate
\end{keywords}

\section{Introduction}
Many scientific discovery and decision-making tasks often revolve around filtering a large pool of candidates to find a subset of promising ones. For classification tasks such as automatic data labeling, the goal is to select candidates that have the correct predicted labels~\citep{wang2022unsupervised,candes2025probably}. For regression tasks such as drug discovery, scientists aim to find drug candidates that have high binding affinity to a target from huge chemical spaces~\citep{szymanski2011adaptation, scannell2022predictive,zhang2025artificial}.
For these problems, the selection is usually built on the predicted responses (e.g., labels) of (black-box) machine learning models, since the true test responses are often too expensive to obtain. Furthermore, to ensure the reliability of selected candidates, the selection algorithm is required to rigorously control false selections.
These considerations highlight the necessity of model-free, principled selection methods.

Several approaches have been proposed to tackle the above problems, among which a popular direction is the \textit{conformalized selection}: constructing conformal p-values based on the spirit of conformal prediction~\citep{vovk2005algorithmic} and subsequently applying a multiple testing framework. A pioneering method is conformal selection (CS)~\citep{jin2023selection}, a model-free method that selects candidates whose unobserved outcomes exceed user-specified values while ensuring finite-sample false discovery rate (FDR) control~\citep{benjamini1995controlling}. Subsequently, numerous studies such as \cite{jin2025model, lu2025feedback, bai2025multivariate, gui2025acs,gui2024conformal, wang2024conformalized} expanded CS in different directions. Another important line includes the work of \cite{rava2021burden,sun2025unified}, which aims to select candidates that are correctly classified while guaranteeing FDR control. Nonetheless, all aforementioned studies assume \textit{perfect response accuracy} of the calibration data. To the best of our knowledge, no work has explored conformalized selection under \textit{contaminated} calibration data.

In many scenarios, however, it can be expensive or even infeasible to acquire data with accurate responses~\citep{snow2008cheap, natarajan2013learning, sukhbaatar2014training, kennedy2020shape, song2022learning}:
\begin{itemize}
\item In \textit{label annotation}, crowd-sourced platforms such as Amazon Mechanical Turk enable scalable annotation~\citep{sorokin2008utility}, but the resulting labels may be low-quality due to limited expertise, inattention, or task ambiguity. Similar imperfections also arise in expert-driven domains such as medical imaging, where inter-observer variability is common~\citep{karimi2020deep}.
\item In \textit{differential privacy}, for responses that contain personal information, the recorded value is usually a randomized version of the true response to protect user privacy~\citep{evfimievski2003limiting, kasiviswanathan2011can, ghazi2021deep}.
\item In \textit{drug discovery}, responses are often assay-derived measurements or thresholded active/inactive calls. Assay variability, plate or batch effects, and chemical interference can therefore contaminate the recorded responses~\citep{kramer2012experimental}.
\end{itemize}
Applying prior conformal methods naively to such noisy data is concerning, as contamination directly undermines the exchangeability between calibration and test data, which is a fundamental assumption for the validity of these methods. Although a growing body of work has begun to study conformal prediction under contaminated data~\citep{barber2023conformal,cauchois2024predictive, einbinder2024label,penso2024conformal, xi2025exploring, sesia2025adaptive}, their results focus on the prediction of a single test point; selection under noisy data remains unexplored.
The primary challenge lies in constructing suitable test statistics and handling their complex dependence as well as multiplicity issues~\citep{angelopoulos2023conformal}. This paper aims to fill these gaps.

\subsection{Our Contributions}
We focus on \textit{selection under contamination} for two important tasks:
\begin{itemize}
\item[1.] Given unlabeled test samples, a pre-trained classifier, and calibration data with noisy labels, select test samples that are correctly classified while controlling FDR.
\item[2.] Given the features of a pool of candidates, select those whose unobserved responses exceed certain values while maintaining FDR control.
\end{itemize}
We theoretically demonstrate that prior frameworks, including \cite{jin2023selection,sun2025unified}, can fail to control FDR or suffer severe power loss under contaminated calibration data for the above tasks.
To this end, we develop \textit{Robust Conformalized Selection} (RCS), a unified framework for selective classification that provides valid group-wise FDR control under general noisy calibration data (Task 1). We then specialize the RCS approach under the randomized response model. Finally, to tackle Task 2, a robust alternative to CS~\citep{jin2023selection} is developed by leveraging RCS.

Methodologically, the key insight of RCS lies in translating the label noise problem to \textit{localized covariate shift}: by conditioning on each predicted class, we employ a \textit{covariate-adjusted} empirical-Bayes type estimate of FDP to facilitate FDR control; see Remark~\ref{bayes_fdp} for detailed discussions. Under this perspective, the proposed RCS is surprisingly simplified for the common randomized response model. Theoretically, we establish the asymptotic FDR control, the power optimality, and the robustness of RCS.
Numerical results in simulations and real data analysis demonstrate that RCS always controls FDR and can be substantially more powerful than existing methods.

\paragraph*{Paper outline.}
The rest of the paper is organized as follows. Section~\ref{sec2} formulates the two tasks and recaps the related methods.
Section \ref{sec3} investigates the effects of response contamination on existing methods from both empirical and theoretical perspectives.
Section \ref{sec4} develops the RCS framework for Task 1 under general label contamination, establishes its theoretical properties, and discusses its implementation under class-conditional label noise.
Section \ref{sec5} develops a simplified specialization of RCS under randomized response model and applies RCS to Task 2.
Simulation results and real data applications are presented in Sections \ref{sec6} and \ref{sec7}, respectively. Proofs of all theoretical results are deferred to Appendix~\ref{proofs}.
The reproduction codes for all experiments can be found at \url{https://github.com/ChengyaoYu1/Robust-Conformalized-Selection}.

\section{Preliminary}\label{sec2}
\subsection{Problem Setup}
Let $X\in\mathcal{X} $ denote the feature and $Y \in \mathcal{Y}\subseteq \mathbb{R} $ denote the response variable. The joint distribution of $(X,Y)$ is denoted by $P_{XY}$. Assume access to a calibration dataset $\mathcal{D}_{\text{cal}} = \{(X_i, \tilde{Y}_i)\}_{i=1}^n$ and a set of unlabeled (test) data $\mathcal{D}_{\text{test}} = \{X_{n+j}\}_{j=1}^m$, where $\{\tilde{Y}_i\}_{i=1}^n$ are contaminated responses following $(X,\tilde{Y})\sim P_{X\tilde{Y}}$.
The clean responses $\{Y_{i}\}_{i=1}^{n}$ for the calibration data are unobserved, while the responses $\{Y_{n+j}\}_{j=1}^{m}$ for the test data (also unobserved) are what we aim to infer.
We assume that $\{(X_i,Y_i,\tilde{Y}_i)\}_{i=1}^{n+m}$ are i.i.d. triples across $i\in[n+m]:=\{1,\dots, n+m\}$.
To make inference on $\{Y_{n+j}\}_{j=1}^{m}$, we let $\hat{f}:\mathcal{X}\to \mathcal{Y}$ be any fixed pre-trained predictor of response $Y$ and write $\hat{Y}_i=\hat f(X_{i})$ for $i\in[n+m]$.
This paper focuses on the following two tasks, which cover the main settings considered in the existing conformalized selection literature.

\paragraph{Task 1.} The first task concerns trustworthy classification, which is closely related to several applications such as reliable automatic labeling and classification with fairness. Specifically, let $\mathcal{Y}=\{1,\dots,K\}$. This task is formulated as a multiple testing problem:
\begin{equation}\label{task2}
H_j: \hat{Y}_{n+j}\neq Y_{n+j},\quad j=1,\dots,m.
\end{equation}
The goal is to determine the largest rejection set $\mathcal{R}$ while controlling the overall FDR that
\begin{equation}\label{global_control}
\text{FDR}_{\text{o}}=\mathbb{E}[\text{FDP}_{\text{o}}],~
\text{FDP}_{\text{o}}=\frac{\sum_{j=1}^m \mathbb{I}\{Y_{n+j}\neq \hat{Y}_{n+j},j\in\mathcal{R} \}}{1\vee |\mathcal{R}|}.
\end{equation}
Furthermore, let $\mathcal{A}_1,\dots,\mathcal{A}_G\subset [K]$ be a disjoint partition of $[K]$ and let $\mathcal{R}_{\mathcal{A}_g}$ be the rejection set for hypotheses $\{H_j:\hat Y_{n+j}\in A_g,j\in[m]\}$.
We aim to achieve a stricter target: controlling FDR for different groups simultaneously. That is, given $(\alpha_{\mathcal{A}_1},\dots,\alpha_{\mathcal{A}_{G}})\in(0,1)^{G}$,
\begin{equation}\label{group_fdr}
\text{FDR}_{\mathcal{A}_g}=\mathbb{E}[\text{FDP}_{\mathcal{A}_{g}}]\leq\alpha_{\mathcal{A}_g},~\text{FDP}_{\mathcal{A}_{g}}=\frac{\sum_{j=1}^m\mathbb{I}\{Y_{n+j}\neq \hat{Y}_{n+j},j\in\mathcal{R}_{\mathcal{A}_g}\}}{1\vee |\mathcal{R}_{\mathcal{A}_g}|}, g\in[G].
\end{equation}

\paragraph{Task 2.} 
In applications such as drug detection and LLM alignment,
outcomes with higher values are of interest. This task can be formulated as the multiple testing problem that
\begin{equation}\label{jin_ying}
H_j: Y_{n+j}\leq c_j, \quad j=1,\dots,m,
\end{equation}
where $\{c_j\}_{j=1}^m$ are (random) thresholds.
Letting $\mathcal{R}\subseteq\{1,\dots,m\}$ be the selection set, we aim to determine $\mathcal{R}$ with as large a size as possible while controlling the FDR below the preset level, where 
\begin{equation}\label{task_2_fdr}
\text{FDR}=\mathbb{E}[\text{FDP}],~
\text{FDP}=\frac{\sum_{j=1}^m \mathbb{I}\{Y_{n+j}\leq c_j,j\in\mathcal{R} \}}{1\vee |\mathcal{R}|}.
\end{equation}

The goal of this paper is to reveal the shortcomings of existing methods for addressing the two tasks under contaminated calibration data, and develop a unified and robust alternative.

\subsection{Recap: Methods for Clean Calibration Data}
When calibration data $\mathcal{D}_{\text{cal}}$ contain only clean responses, \cite{jin2023selection} and \cite{sun2025unified} have developed CS (also called \texttt{cfBH}) and PSP for Task 2 and Task 1, respectively. We briefly describe the two methods below.

\paragraph{\texttt{cfBH}.} To test $\{H_{j}\}_{j=1}^m$ \eqref{jin_ying}, \texttt{cfBH} first introduces a \textit{monotone nonconformity score} $V(\cdot,\cdot):\mathcal{X}\times \mathcal{Y}\to \mathbb{R}$. That is, $V(x,y)\leq V(x,y')$ holds for any $x\in \mathcal{X}$ and any $y,y'\in\mathcal{Y}$ satisfying $y\leq y'$, and $V(x,y)$ quantifies how extreme the value $y$ is relative to the conditional behavior of $Y$ given $X=x$ (e.g., $V(x,y)=y-\hat\mu(x)$, where $\hat\mu(x)$ is an estimate of the conditional mean function). \texttt{cfBH} constructs conformal p-values for hypothesis $H_j$ by
\begin{equation}\label{cfbh}
p_j^{\text{cs}}=\frac{1+\sum_{i=1}^{n}\mathbb{I}\{V(X_i,Y_i)<V(X_{n+j},c_j)\}}{n+1},\quad j=1,\dots,m.
\end{equation}
It then runs the BH procedure~\citep{benjamini1995controlling} with $\{p_j^{\text{cs}}\}_{j=1}^m$ given above. That is, $\mathcal{R}=\{j\in[m]:p_j^{\text{cs}}\leq \alpha \hat{l}/m\}$, where $\hat{l}=\max\{l\in[m]:\sum_{j=1}^{m}\mathbb{I}\{p_j^{\text{cs}}\leq\alpha l /m\}\geq l\}$.

\paragraph{\texttt{PSP}.} We describe \texttt{PSP} that controls $\text{FDR}_{\text{o}}$ for clarity. It constructs selective p-values
 \begin{equation*}
p_j^{\text{psp}}=\frac{1+\sum_{i=1}^{n}\mathbb{I}\left\{\hat{Y}_i \neq Y_{i}, \mu^{\text{psp}}_{\hat{Y}_i}(X_i)\geq\mu^{\text{psp}}_{\hat{Y}_{n+j}}(X_{n+j})\right\}}{1+\sum_{i=1}^n \mathbb{I}\{\hat{Y}_i \neq Y_{i}\}},\quad j=1,\dots,m,
 \end{equation*}
where $\mu^{\text{psp}}_{k}$ represents the confidence for class $k$.
Let $p_{(1)}^{\text{psp}}\leq\dots\leq p_{(m)}^{\text{psp}}$ denote the order statistics of $\{p_j^{\text{psp}}\}$. \texttt{PSP} computes $\hat{l}=\max\{l\in[m]:p^{\text{psp}}_{(l)}\leq l\alpha/(m\hat{\theta}_n) \}$, where $\hat{\theta}_n=(1+\sum_{i=1}^{n}\mathbb{I} \{\hat{Y}_i\neq Y_i\})/(1+n)$. Finally, each hypothesis $H_j$ with $p_j^{\text{psp}}\leq p_{\hat{l}}^{\text{psp}}$ is rejected.

The FDR control guarantees of both \texttt{cfBH} and \texttt{PSP} depend on the validity of the constructed ``p-values'' and their dependence. These ideal properties are derived from the i.i.d. (or exchangeable) assumption of $\mathcal{D}_{\text{cal}}\cup\{(X_{n+j},Y_{n+j})\}_{j=1}^m$, which implies that \texttt{cfBH} and \texttt{PSP} may not be reliable when $\{Y_i\}_{i=1}^n$ are contaminated.
This paper aims to investigate the effects of response contamination in calibration dataset on these methods and develops a unified framework for robust conformalized selection;
further discussion of the connections to and distinctions from related work is deferred to Appendix~\ref{related_work}.

\section{Effects of Contamination on Conformalized Selective Inference}\label{sec3}
In this section, we explore the effects of data contamination on \texttt{cfBH}~\citep{jin2023selection} and \texttt{PSP}~\citep{sun2025unified}. Both empirically and theoretically, we show that response contamination in the calibration data can result in a substantial loss of power or compromise FDR control in different cases.

\subsection{Empirical Evidence}

\subsubsection{Evidence on \texttt{cfBH}}\label{s}

We first empirically investigate the effect of response contamination.
Following Setting 8 of \cite{jin2023selection}, we generate i.i.d. covariates $X_i\sim\text{Unif}[-1,1]^{20}$ and responses $Y_i=\mu(X_i)+\epsilon_i$, where $\mu(x)=2(x_1x_2+x_3^2 +e^{x_4-1}-1)$ with $X_i=x:=(x_1,\dots,x_{20})$ and $\epsilon_i|X_i=x\sim N(0,\sigma(x)^{2})$ with $\sigma(x)=0.1\mu(x)^2\mathbb{I}\{|\mu(x)|<2\}+0.2|\mu(x)|\mathbb{I}\{|\mu(x)|\geq1\}$.
We set $n=1000$, $m=100$, and use an independent training sample of size $1000$ to fit the gradient-boosting regressor.
The contaminated calibration responses are generated by $\tilde{Y}_{i}=Y_{i}+\eta_{i}$, where $\eta_i\sim N(\mu_{\text{drift}},0.5^2)$ and $\mu_{\mathrm{drift}}\in\{-0.5,0,0.5\}$.
We compare the specialization of \texttt{cfBH}, named \texttt{BH\_clip}, under each contaminated calibration distribution with its clean-calibration counterpart
over target FDR levels $\alpha\in\{0.05,0.1,0.15,0.2\}$. 

\begin{figure}[ht]
\vskip 0.2in
\centering
    \includegraphics[width=0.8\textwidth]{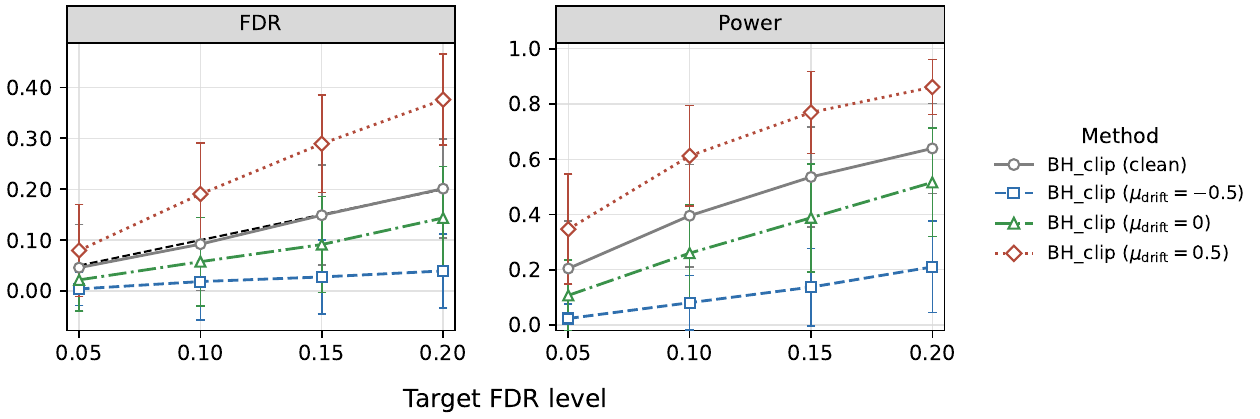}
    \caption{
    Empirical FDR and power of \texttt{BH\_clip} under clean and contaminated calibration responses as the target FDR level varies. The dashed diagonal in the FDR panel represents the nominal target level. Points and error bars denote the mean and one standard deviation over 500 independent repetitions.
    }
    \label{cfBH_noisy}
\end{figure}

Results are presented in Figure~\ref{cfBH_noisy}.
It shows that the additive noise can lead to a severe power loss (small $\mu_{\text{drift}}$) or to a significant expansion of FDR (large $\mu_{\text{drift}}$). Even for $\mu_{\text{drift}}=0$, the power loss of \texttt{BH\_clip} is greater than 20\%.

\subsubsection{Evidence on \texttt{PSP}}

We next examine the performance of \texttt{PSP} under the randomized response model, where the noisy response retains the clean label with probability $1-\epsilon$ and otherwise replaces it by a label sampled uniformly from the $K$ classes.
We fix $K=10$ and leverage a LightGBM~\citep{ke2017lightgbm} to obtain $\{\mu_k^{\text{psp}}\}_{k\in[K]}$ and the base classifier.
The training, calibration, and test sample sizes are set as $n_{\mathrm{tr}}=n=m=100K$. The probabilities $\mathbb{P}(Y=k), k\in[K]$ are given by $(Z_1,\dots,Z_K)/\sum_{k\in[K]}Z_k$, where $Z_k 
\overset{i.i.d}{\sim} \text{Uniform(1,2)}$ and $X_j|Y_j=k \sim N(m_k,I_p)$ with $m_k=2k \times1_{p} /p^{1/4},p=10$. We compare \texttt{PSP} using clean calibration labels with \texttt{PSP} under $\epsilon\in\{0.05,0.10,0.15\}$ over target FDR levels $\alpha\in\{0.05,0.10,0.15,0.20\}$.
Results in Figure~\ref{PSP_noisy} demonstrate that the randomized response contamination makes \texttt{PSP} over conservative, consistently resulting in a significant loss of power.

\begin{figure}[ht]
\vskip 0.2in
\centering
    \includegraphics[width=0.8\textwidth]{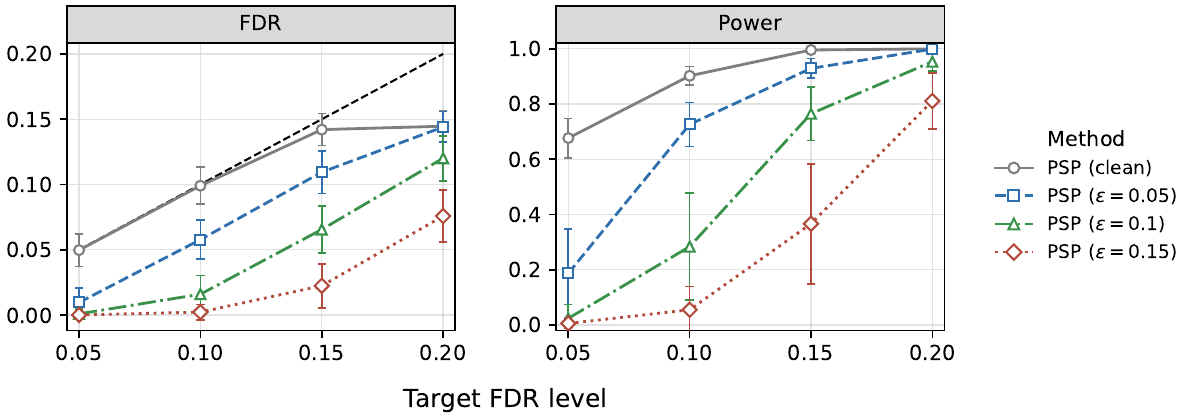}
    \caption{
    Empirical FDR and power of \texttt{PSP} under clean calibration labels and randomized-response contamination as the target FDR level varies.
    All results are averaged over 500 independent repetitions.
    }
    \label{PSP_noisy}
\end{figure}

\subsection{Theoretical Results}
\subsubsection{Theory for \texttt{cfBH}}
We first establish theory for \texttt{cfBH} under \textit{general} response contamination.
Let
\begin{equation}\label{contaminated_p-value}
\tilde{p}_j^{\text{cs}}=\frac{1+\sum_{i=1}^{n}\mathbb{I}\{V(X_i,\tilde{Y}_i)<V(X_{n+j},c_j)\}}{n+1},\quad j=1,\dots,m.
\end{equation}
The following theorem demonstrates that directly applying \texttt{cfBH} to contaminated data, i.e., applying the BH procedure to $\{\tilde{p}_j^{\text{cs}}\}$, can result in an inflated FDR or power loss.

\begin{theorem}\label{cs_jingYIN}
Let $V$ be a fixed monotone nonconformity score and $c_j\equiv c\in\mathcal{Y}$ be a fixed constant, and suppose $\{(X_i,Y_i,\tilde{Y}_i)\}_{i=1}^{n+m}$ are i.i.d. triples.
Denote $V(X,Y)$, $V(X,\tilde{Y})$, $V(X,c)$ by $V^{\circ}$, $\tilde{V}$, and $\hat{V}$, respectively.
Denote the distribution functions of $V^{\circ}$ and $\tilde{V}$ by $F_V$ and $F_{\tilde{V}}$, respectively.
Suppose that $F_{\tilde V}$ is continuous and strictly
increasing on its support, $F_{\tilde V}(\hat{V})$ has a continuous distribution, and $F_V\ll F_{\tilde V}$ with bounded Radon-Nikodym derivative $dF_{V}/d F_{\tilde{V}}$.
For any $\alpha\in(0,1)$, let $\overline{\mathcal{R}}_{\text{cs}}^{*}$ be the output of BH procedure with inputs $(0,F_{\tilde{V}}(V(X_{n+2},c)),\dots,F_{\tilde{V}}(V(X_{n+m},c)))$, whose randomness comes from the distribution of $(X_{n+2},\dots,X_{n+m})$.
Then the output of the BH procedure with inputs $\{\tilde{p}_{j}^{\text{cs}}\}_{j=1}^{m}$~\eqref{contaminated_p-value} satisfies
\begin{equation*}
\limsup_{n\to \infty}\mathrm{FDR}\leq  m\mathbb{E}\left[\frac{1}{|\overline{\mathcal{R}}^*_{\text{cs}}|} F_V \left(F^{-1}_{\tilde{V}}\left(\frac{\alpha |\overline{\mathcal{R}}^*_{\text{cs}}|}{m}\right)\right) \right].
\end{equation*}
\end{theorem}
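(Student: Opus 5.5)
The plan is to pass to the oracle limit $n\to\infty$, in which the contaminated conformal p-values become deterministic functions of the test points, and then compute the FDR of BH applied to these oracle p-values with the usual leave-one-out argument.

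\emph{Step 1: uniform convergence.} Write $\hat V_j=V(X_{n+j},c)$ and let $\hat F_n$ be the empirical CDF of $\tilde V_1,\dots,\tilde V_n$. Then $\tilde p_j^{\text{cs}}=(1+n\hat F_n(\hat V_j^-))/(n+1)$. By Glivenko--Cantelli and continuity of $F_{\tilde V}$, we have $\max_j|\tilde p_j^{\text{cs}}-F_{\tilde V}(\hat V_j)|\to0$ almost surely, with $m$ fixed.

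\emph{Step 2: stability of BH.} The oracle p-values $p_j^*=F_{\tilde V}(\hat V_j)$ are i.i.d. and continuously distributed. Hence, almost surely, no $p_j^*$ equals any threshold $\alpha l/m$, and no two of them are tied. On this event, BH applied to $\{\tilde p_j^{\text{cs}}\}$ eventually gives exactly the same rejection set as BH applied to $\{p_j^*\}$. Because FDP is bounded by 1, dominated convergence shows that the FDR converges to the oracle FDR. The same stability holds for the leave-one-out rule used below, so all limits can be taken together.

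\emph{Step 3: oracle FDR.} Write the FDR as
\[
\sum_j \mathbb E\!\left[\frac{\mathbb I\{Y_{n+j}\le c,\ j\in\mathcal R\}}{|\mathcal R|}\right].
\]
By the standard BH identity, $j\in\mathcal R$ with $|\mathcal R|=k$ holds if and only if $p_j^*\le\alpha k/m$ and $|\mathcal R_j^{(0)}|=k$. Here $\mathcal R_j^{(0)}$ is the BH output with $p_j^*$ replaced by $0$, and it is a function of the other test points only. By exchangeability, this equals
\[
m\,\mathbb E\!\left[\frac{1}{|\overline{\mathcal R}^*_{\text{cs}}|}\,\mathbb P\!\left(Y_{n+1}\le c,\ F_{\tilde V}(V(X_{n+1},c))\le \tfrac{\alpha|\overline{\mathcal R}^*_{\text{cs}}|}{m}\right)\right],
\]
where $(X_{n+1},Y_{n+1})$ is independent of $\overline{\mathcal R}^*_{\text{cs}}$.

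\emph{Step 4: bounding the inner probability.} On $\{Y_{n+1}\le c\}$, monotonicity of $V$ gives $V^\circ_{n+1}\le \hat V_{n+1}$. Since $F_{\tilde V}$ is nondecreasing, $F_{\tilde V}(V^\circ_{n+1})\le F_{\tilde V}(\hat V_{n+1})\le t$. Because $F_{\tilde V}$ is strictly increasing and continuous on its support, this becomes $V^\circ_{n+1}\le F_{\tilde V}^{-1}(t)$, up to a null set handled by $F_V\ll F_{\tilde V}$. So the inner probability is at most $F_V(F_{\tilde V}^{-1}(t))$ with $t=\alpha|\overline{\mathcal R}^*_{\text{cs}}|/m$, which gives the stated bound.

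\emph{Main obstacle.} The delicate part is Step 2. One must justify exchanging the limit with the BH selection, including the leave-one-out sets, and the continuity of the distribution of $F_{\tilde V}(\hat V)$ is exactly what rules out boundary ties at the thresholds. The bounded Radon--Nikodym derivative ensures that $F_V\circ F_{\tilde V}^{-1}$ is well behaved, in particular continuous and free of atoms where $F_{\tilde V}$ is flat off its support. This lets the inequality in Step 4 go through with limsup rather than requiring exact convergence.
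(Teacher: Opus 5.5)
Your argument is correct, but it is organized differently from the paper's.

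\textbf{Your route.} You take $n\to\infty$ first. With $m$ fixed, the contaminated p-values converge almost surely to $p_j^*=F_{\tilde V}(V(X_{n+j},c))$. BH with finitely many inputs is locally constant away from the grid $\{\alpha l/m\}$, so the FDP itself converges almost surely. By dominated convergence, $\mathrm{FDR}$ then converges to the oracle FDR of BH applied to independent p-values. The leave-one-out identity at the oracle level gives the exact limit
$m\,\mathbb E[|\overline{\mathcal R}^*_{\text{cs}}|^{-1}\mathbb P(Y\le c,\,F_{\tilde V}(\hat V)\le \alpha|\overline{\mathcal R}^*_{\text{cs}}|/m)]$.
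One monotone relaxation in Step 4 then yields the stated bound.

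\textbf{The paper's route.} The paper stays at finite $n$, following the clean-data conformal selection proof. It builds leave-one-out p-values in which the clean test score $V_{n+j}$ is inserted into the calibration pool. It shows the rejection set is unchanged on the event of a false rejection. It then computes the conditional rank of $V_{n+j}$ among $\{\tilde V_i\}\cup\{V_{n+j}\}$ by weighted exchangeability with weights $dF_V/dF_{\tilde V}$. Only after that does it send $n\to\infty$, using a weighted Glivenko--Cantelli argument, reverse Fatou, and stabilization of the leave-one-out BH sets.

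\textbf{What each buys.} Your route is shorter and shows the limit actually exists, with an exact expression that is slightly sharper before the final relaxation. It also never needs boundedness of $dF_V/dF_{\tilde V}$; the paper uses that boundedness only in its weighted law of large numbers. In your Step 4, $F_V\ll F_{\tilde V}$ is needed only to remove gaps in the support of $\tilde V$. Your closing remark crediting Step 4 to the bounded derivative is therefore off. The paper's route, in exchange, keeps a finite-$n$ weighted-conformal inequality that parallels the clean-data argument.

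\textbf{Housekeeping.}
\begin{itemize}
\item State the coupling explicitly: fix the test sample and let the calibration sequence grow. Without it, ``eventually almost surely'' in Step 2 has no meaning. The paper does this by renaming the test points.
\item The no-ties condition is unnecessary; only avoiding the thresholds $\alpha l/m$ matters.
\item The remark about stabilizing the leave-one-out rule is also unnecessary. Step 3 runs entirely at the oracle level, where $\mathcal R_1^{(0)}$ is exactly $\overline{\mathcal R}^*_{\text{cs}}$.
\end{itemize}
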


\begin{remark}
Theorem~\ref{cs_jingYIN} remains true for random $c_j:=W_{n+j}$ as long as $\{(X_{n+j},W_{n+j})\}_{j=1}^m$ are exchangeable.
This condition is usually directly guaranteed by the exchangeability of $\{X_{n+j}\}_{j=1}^m$, because $W_{n+j}$ can typically be represented as $h(X_{n+j})$ for some function $h(\cdot)$.
\end{remark}

If $\tilde{Y}_i=Y_i$ for $i\in[n]$, then $\limsup_{n\to \infty}\text{FDR}\leq \alpha$, which coincides with the finite-sample result of \cite{jin2023selection}. If contamination renders the nonconformity scores stochastically larger, then the \texttt{cfBH} tends to yield an inflated FDR. In contrast, if contamination satisfies $F_{\tilde{V}}(v)> F_{V}(v)$ for $v\leq F_{\tilde{V}}^{-1}(\alpha)$, \texttt{cfBH} maintains FDR control but tends to be conservative. As an example, Proposition~\ref{additive_noise_cs} reveals that \texttt{cfBH} tends to be conservative under the common zero-mean additive noise.

\begin{proposition}\label{additive_noise_cs}
Consider the additive noise that $\tilde{Y}_i=Y_i+\eta_i$, $i\in[n]$, where the i.i.d. noise variables $\{\eta_i\}_{i=1}^n$ are non-degenerate, symmetric around zero, and independent of $\{(X_i,Y_i)\}_{i=1}^{n+m}$.
Assume that $Y \mid X=x$ has a continuous density that is symmetric and strictly unimodal around a conditional mode $m(x)$ and the conditions of Theorem~\ref{cs_jingYIN} hold. Given any monotone nonconformity score $V(\cdot,\cdot)$, for any target FDR level satisfying $0<\alpha< \inf_{x\in \mathcal{X}} F_{\tilde{V}} (V(x,m(x)))$, the \texttt{cfBH} satisfies $\lim\sup_{n\to\infty} \mathrm{FDR}<\alpha$.
\end{proposition}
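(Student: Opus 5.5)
Our plan is to apply Theorem~\ref{cs_jingYIN} and show that its bound is strictly below $\alpha$. Write $t^*=|\overline{\mathcal{R}}^*_{\text{cs}}|$, so that $\alpha t^*/m\le\alpha$. If we prove the strict pointwise domination
\begin{equation*}
F_V(v)<F_{\tilde V}(v)\qquad\text{for all } v\le F_{\tilde V}^{-1}(\alpha),
\end{equation*}
then $F_V(F_{\tilde V}^{-1}(\alpha t^*/m))<\alpha t^*/m$ almost surely. Multiplying by $m/t^*$ and taking expectations gives a bound strictly below $\alpha$. Strictness survives the expectation because the integrand is strictly smaller on a set of probability one. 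The case $t^*\ge 1$ always holds because of the forced $0$ input.

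\textbf{Step 1: reduce to conditional comparisons.} We condition on $X=x$. By monotonicity of $V(x,\cdot)$ we have
\begin{equation*}
\{V(x,Y)\le v\}=\{Y\le y_x(v)\}\quad\text{and}\quad\{V(x,\tilde Y)\le v\}=\{\tilde Y\le y_x(v)\}
\end{equation*}
for a generalized inverse threshold $y_x(v)$, up to boundary conventions that are null events by continuity. Let $G_x$ denote the conditional cdf of $Y$. Since $\eta$ is independent of $(X,Y)$, the conditional cdf of $\tilde Y$ is $\tilde G_x=G_x*P_\eta$.

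\textbf{Step 2: key one-dimensional lemma.} Suppose $Y\mid X=x$ is symmetric and strictly unimodal about $m(x)$, and $\eta$ is symmetric and non-degenerate. We claim $\tilde G_x(y)>G_x(y)$ for every $y<m(x)$. The proof writes
\begin{equation*}
\tilde G_x(y)-G_x(y)=\mathbb{E}\bigl[G_x(y-\eta)-G_x(y)\bigr]=\tfrac12\,\mathbb{E}\bigl[G_x(y+|\eta|)+G_x(y-|\eta|)-2G_x(y)\bigr].
\end{equation*}
For $s>0$, the quantity $G_x(y+s)-G_x(y)-\bigl(G_x(y)-G_x(y-s)\bigr)$ compares the mass of the density $g_x$ on $[y,y+s]$ with its mass on $[y-s,y]$. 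Because $y<m(x)$, reflecting about $y$ and using symmetry with strict unimodality shows that $g_x$ on the right interval pointwise dominates its reflection. The density is strictly larger on a set of positive length, so the difference is strictly positive for every $s>0$. Non-degeneracy gives $\mathbb{P}(|\eta|>0)>0$, which makes the expectation strictly positive. Where $y$ lies near $m(x)$, the reflected point $2y-t$ must be compared through $|t-m|$ versus $|2y-t-m|$, and the inequality $|t-m|<|2y-t-m|$ reduces to $t>y$ when $y<m$.

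\textbf{Step 3: integrate over $x$ (main obstacle).} Take $v\le F_{\tilde V}^{-1}(\alpha)$. By the assumption on $\alpha$, $F_{\tilde V}(v)\le\alpha<\inf_x F_{\tilde V}(V(x,m(x)))$, so $v<V(x,m(x))$ for every $x$, and hence $y_x(v)<m(x)$ where $y_x(v)$ is finite. Step 2 then gives $\tilde G_x(y_x(v))\ge G_x(y_x(v))$ for every $x$, with strict inequality whenever $y_x(v)$ is a genuine interior threshold. Integrating over $P_X$,
\begin{equation*}
F_{\tilde V}(v)-F_V(v)=\int\bigl[\tilde G_x(y_x(v))-G_x(y_x(v))\bigr]\,dP_X(x)>0,
\end{equation*}
provided the strict set has positive $P_X$-mass. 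This is the delicate point. If $y_x(v)=-\infty$ for almost every $x$, then $F_{\tilde V}(v)=0$, which the relevant $v$ avoid: they lie in the support, where $F_{\tilde V}$ is strictly increasing, and the points $F_{\tilde V}^{-1}(\alpha t^*/m)$ have positive level. We would argue that $F_{\tilde V}(v)>0$ forces $y_x(v)>-\infty$ on a positive-mass set. There the convolution has full-line reach on the left tail, so strict inequality holds.

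Finally, Step 3 yields $F_V(F_{\tilde V}^{-1}(u))<u$ for $u\in(0,\alpha]$, using $F_{\tilde V}(F_{\tilde V}^{-1}(u))=u$ by continuity. Plugging this into Theorem~\ref{cs_jingYIN} gives $\limsup_{n\to\infty}\mathrm{FDR}<\alpha$.
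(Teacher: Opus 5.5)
Your proposal is correct and follows the paper's proof. Like the paper, you apply Theorem~\ref{cs_jingYIN} and show $F_{\tilde V}(v)>F_V(v)$ for $v\le F_{\tilde V}^{-1}(\alpha)$ by conditioning on $X=x$, symmetrizing the noise, and using $f_{Y|X=x}(y+t)>f_{Y|X=x}(y-t)$ for $y<m(x)$. Your treatment of the integration over $x$ is more careful than the paper's: you require a positive-$P_X$-mass set on which $y_x(v)$ is finite, whereas the paper asserts the strict conditional inequality for every $x$, which fails when $y_x(v)=-\infty$. You also handle a general symmetric law of $\eta$ rather than assuming it has a density.
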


\subsubsection{Theory for \texttt{PSP}}
Now we establish theory for \texttt{PSP} under general label noise. We write $V_i=V(X_i)=-\mu^{\text{psp}}_{\hat{Y}_i}(X_i)$ for $i\in[n+m]$. Let $E = \mathbb{I}\{\hat{f}(X) \neq Y\}$ and $\tilde{E} = \mathbb{I}\{\hat{f}(X) \neq \tilde{Y}\}$ with $\theta=\mathbb{P}(E)$ and $\tilde{\theta}=\mathbb{P}(\tilde{E})$. Write $\tilde{E}_i=\mathbb{I}\{\hat{f}(X_i)\neq \tilde{Y}_i\}$. \texttt{PSP} is exactly applying the BH procedure to noisy ``p-values'' $(\tilde{p}_1^{\text{psp}},\dots,\tilde{p}_m^{\text{psp}})$ under target FDR level $\alpha/\tilde{\theta}_n$, where
\begin{equation}\label{noisy_psp_inputs}
\tilde{p}_j^{\text{psp}}=\frac{1+\sum_{i=1}^{n}\mathbb{I}\left\{\tilde{E}_i=1, V_i\leq V_{n+j}\right\}}{1+\sum_{i=1}^n \tilde{E}_i},
\quad
\tilde{\theta}_n=\frac{1+\sum_{i=1}^n \tilde{E}_i}{1+n}.
\end{equation}
Theorem~\ref{thm:psp_general} establishes the upper bound of $\text{FDR}_\text{o}$ under general label noise.

\begin{theorem}\label{thm:psp_general}
Suppose $\{(X_i, Y_i, \tilde{Y}_i)\}_{i=1}^{n+m}$ are i.i.d. 
Let $F_{V|E=1}(v) = \mathbb{P}(V(X) \le v \mid E=1)$ and $F_{V|\tilde{E}=1}(v) = \mathbb{P}(V(X) \le v \mid \tilde{E}=1)$. Assume that $F_{V|E=1}(\cdot)$ and $F_{V|\tilde{E}=1}$ are continuous, $F_{V|\tilde{E}=1}$ is strictly increasing on its support, and $F_{V|\tilde{E}=1}(V(X))$ has a continuous distribution.
For any fixed $\alpha\in(0,\tilde{\theta})$, let $\bar{\mathcal{R}}_{psp}$ be the rejection set of BH procedure with inputs $(0,F_{V|\tilde{E}=1}(V_{n+2}),\dots,F_{V|\tilde{E}=1}(V_{n+m}))$ at the adjusted level $\tilde{\alpha} = \alpha / \tilde{\theta}$.
Then \texttt{PSP} with noisy inputs $(\tilde{p}_1^{\text{psp}},\dots,\tilde{p}_m^{\text{psp}})$ with $\tilde{\theta}_n$ defined in \eqref{noisy_psp_inputs} satisfies
\begin{equation*}
\limsup_{n\to\infty}\mathrm{FDR}_{\mathrm{o}}
\leq m\theta\mathbb{E}\left[ \frac{1}{|\bar{\mathcal{R}}_{\text{psp}}|}F_{V|E=1}\left(F_{V|\tilde{E}=1}^{-1}\left(\frac{\alpha|\bar{\mathcal{R}}_{\text{psp}}|}{m\tilde{\theta}}\right)\right)\right].
\end{equation*}
\end{theorem}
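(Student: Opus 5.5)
The plan mirrors the argument one would give for Theorem~\ref{cs_jingYIN}: let $n\to\infty$ with $m$ fixed, so the calibration quantities become deterministic functions and the problem reduces to an oracle BH procedure on the test points only.

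\textbf{Step 1 (limits of the calibration quantities).} By the strong law of large numbers, $\tilde{\theta}_n\to\tilde{\theta}$ almost surely. By the Glivenko--Cantelli theorem applied to the scores $V_i$ with $\tilde{E}_i=1$, whose count grows like $n\tilde\theta$,
\[
\sup_v\Bigl|\frac{1+\sum_{i}\mathbb{I}\{\tilde E_i=1,V_i\le v\}}{1+\sum_i\tilde E_i}-F_{V|\tilde E=1}(v)\Bigr|\to0 \quad\text{a.s.}
\]
Hence $\tilde p^{\text{psp}}_j\to F_{V|\tilde E=1}(V_{n+j})=:\bar p_j$ almost surely, for every $j\le m$ simultaneously. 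The test points are independent of the calibration set, so the convergence holds conditionally on them.

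\textbf{Step 2 (stability of BH).} Rewrite the false discovery proportion by exchangeability over the $m$ test points:
\[
\mathrm{FDR}_{\mathrm o}=m\,\mathbb{E}\Bigl[\frac{E_{n+1}\,\mathbb{I}\{1\in\mathcal R\}}{1\vee|\mathcal R|}\Bigr].
\]
Because $\bar p_j$ is continuously distributed, ties and boundary events $\bar p_{(l)}=l\alpha/(m\tilde\theta)$ occur with probability zero. On their complement the BH rejection set is locally constant in the inputs and the level, so $\mathcal R_n\to\bar{\mathcal R}^{\text{full}}$ almost surely, where $\bar{\mathcal R}^{\text{full}}$ is BH applied to $(\bar p_1,\dots,\bar p_m)$ at level $\tilde\alpha$. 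Since the integrand is bounded by $1$, bounded convergence (or Fatou, which suffices for the $\limsup$) gives
\[
\limsup_n \mathrm{FDR}_{\mathrm o}\le m\,\mathbb{E}\Bigl[\frac{E_{n+1}\mathbb{I}\{1\in\bar{\mathcal R}^{\text{full}}\}}{1\vee|\bar{\mathcal R}^{\text{full}}|}\Bigr].
\]

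\textbf{Step 3 (leave-one-out reduction).} Apply the standard BH identity: $1\in\bar{\mathcal R}^{\text{full}}$ if and only if $\bar p_1\le \tilde\alpha|\bar{\mathcal R}_{\text{psp}}|/m$, where $\bar{\mathcal R}_{\text{psp}}$ is BH run with $\bar p_1$ replaced by $0$. On that event the two sets coincide, so $|\bar{\mathcal R}^{\text{full}}|=|\bar{\mathcal R}_{\text{psp}}|$, and $\bar{\mathcal R}_{\text{psp}}$ depends only on $X_{n+2},\dots,X_{n+m}$. Conditioning on these and using independence of $(X_{n+1},Y_{n+1})$, the bound becomes
\[
m\,\mathbb{E}\Bigl[\frac{1}{|\bar{\mathcal R}_{\text{psp}}|}\,\mathbb{P}\bigl(E=1,\;F_{V|\tilde E=1}(V)\le t\bigr)\Big|_{t=\alpha|\bar{\mathcal R}_{\text{psp}}|/(m\tilde\theta)}\Bigr].
\]
Since $F_{V|\tilde E=1}$ is continuous and strictly increasing on its support, $\{F_{V|\tilde E=1}(V)\le t\}=\{V\le F^{-1}_{V|\tilde E=1}(t)\}$ up to a null set. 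The inner probability therefore equals $\theta\,F_{V|E=1}(F^{-1}_{V|\tilde E=1}(t))$, which is the claimed bound.

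\textbf{Main obstacle.} The delicate point is Step 2: justifying almost-sure convergence of the random BH set when both the p-values and the level $\alpha/\tilde\theta_n$ are perturbed. I would handle it with the margin argument. Let $\delta=\min_l |\bar p_{(l)}-l\tilde\alpha/m|$ together with the minimal gaps between the distinct $\bar p_j$. Then $\delta>0$ almost surely, and once the uniform errors in Step 1 fall below $\delta/2$, every comparison made by BH has the same outcome as in the limit. Continuity of $F_{V|E=1}$ ensures that boundary events also carry zero mass for the null term, which covers the case where the inequality in Step 3 holds with equality.
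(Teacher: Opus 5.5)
Your proof is correct, but it orders the two main ingredients differently from the paper.

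\textbf{The paper's route.} It first applies the leave-one-out BH identity at finite $n$ and computes the exact conditional null probability given the calibration data and the other test scores,
$\mathbb{P}(\tilde p_j^{\text{psp}}\le s\mid\mathcal G_j,E_{n+j}=1)=F_{V|E=1}(\tilde F_{n,+}^{-1}(s))$.
Only then does it let $n\to\infty$. That limit needs three things: BH stability for the leave-one-out sets $\mathcal R_{j\to0}$, uniform almost-sure convergence of the empirical upper quantile function $\tilde F_{n,+}^{-1}$ on a compact subinterval of $(0,1)$, and continuity of $F_{V|E=1}$ for the continuous-mapping step.

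\textbf{Your route.} You pass to the limit immediately for the full rejection set. The margin argument applies because the boundary events $\{\bar p_j=r\tilde\alpha/m\}$ are null under the marginal law of $X$. The limiting FDR is then that of an oracle BH with deterministic p-value map $F_{V|\tilde E=1}$. You apply the leave-one-out identity and independence only in this oracle problem, where the null probability $\mathbb P(E=1,F_{V|\tilde E=1}(V)\le t)=\theta F_{V|E=1}(F^{-1}_{V|\tilde E=1}(t))$ is computed directly from the population CDFs.

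\textbf{What each buys.} Your argument is more elementary: it avoids quantile-process convergence and never uses continuity of $F_{V|E=1}$. It also shows the stated bound is the exact limit of $\mathrm{FDR}_{\mathrm o}$, since bounded convergence gives equality and the leave-one-out step is an identity. The paper's route instead yields an explicit finite-$n$ upper bound in terms of $\tilde F_{n,+}^{-1}$ before any limit is taken, and it parallels the structure of the proof of Theorem~\ref{cs_jingYIN}.

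One small remark: the last sentence of your ``main obstacle'' paragraph, about continuity of $F_{V|E=1}$ handling boundary events, is unnecessary. Step 3 uses $\le$ consistently, and the relevant boundary events are already null marginally.
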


\begin{remark}
The precision of the base classifier $\hat{f}$ is measured by $1-\theta$. It is reasonable for $\tilde{\theta}\geq \theta$ since $\hat{f}$ should not exhibit better accuracy on noisy targets. For the case $\alpha\geq \tilde{\theta}\geq \theta$, it is trivial to control $\text{FDR}_{\text{o}}$ asymptotically by simply rejecting $H_j$ for all $j\in[m]$.
For this reason, we only need to consider the non-trivial case that $\alpha <\tilde{\theta}$ in Theorem \ref{thm:psp_general}. 
\end{remark}

For clean calibration data, we have $\tilde{\theta}=\theta$ and $E=\tilde{E}$, which implies $\lim\sup_{n\to\infty}\text{FDR}_{\text{o}}\leq \alpha$. For common contamination such as random label flipping, \texttt{PSP} tends to be conservative.

\begin{proposition}\label{random_flip_psp}
Assume that the conditions of Theorem~\ref{thm:psp_general} hold.
Consider binary classification under random flip label noise with noise rate $\epsilon\in(0,1/2)$, and let the classifier $\hat{f}$ be better than random guessing ($\theta<1/2$). For $\alpha\in(0,\tilde{\theta})$, assume that $F_{V|E=1}(v)< F_{V|E=0}(v)$ for $v\leq F^{-1}_{V|\tilde{E}=1}(\alpha/\tilde{\theta})$. Then we have $\theta < \tilde{\theta}$ and $\limsup_{n\to\infty}\mathrm{FDR}_{\text{o}}<\alpha\theta/\tilde{\theta}$.
\end{proposition}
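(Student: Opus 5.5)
The proof has two parts. First we show $\theta<\tilde\theta$. Then we plug the assumed stochastic ordering into the bound of Theorem~\ref{thm:psp_general} and read off the factor $\theta/\tilde\theta$.

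\emph{Step 1: $\theta<\tilde\theta$.} In the binary case with random flip noise, $\tilde Y=Y$ with probability $1-\epsilon$ and $\tilde Y=3-Y$ otherwise, and the flip is independent of $X$ and $Y$. Hence $\tilde E=E$ with probability $1-\epsilon$ and $\tilde E=1-E$ otherwise, with the flip independent of $(X,E)$. This gives
\[
\tilde\theta=(1-\epsilon)\theta+\epsilon(1-\theta)=\theta+\epsilon(1-2\theta),
\]
which is strictly larger than $\theta$ because $\epsilon>0$ and $\theta<1/2$.

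\emph{Step 2: decompose $F_{V|\tilde E=1}$.} Since $V=V(X)$ is a function of $X$ alone, the flip is independent of $V$ given $E$. By Bayes' rule,
\[
\tilde\theta\, F_{V|\tilde E=1}(v)=(1-\epsilon)\theta F_{V|E=1}(v)+\epsilon(1-\theta)F_{V|E=0}(v).
\]
Set $v^*=F^{-1}_{V|\tilde E=1}(\alpha/\tilde\theta)$. For every $v\le v^*$ the hypothesis $F_{V|E=1}(v)<F_{V|E=0}(v)$ gives
\[
\tilde\theta F_{V|\tilde E=1}(v)>\bigl[(1-\epsilon)\theta+\epsilon(1-\theta)\bigr]F_{V|E=1}(v)=\tilde\theta F_{V|E=1}(v).
\]
So $F_{V|E=1}(v)<F_{V|\tilde E=1}(v)$ strictly on $(-\infty,v^*]$. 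There is one caveat: when $F_{V|E=0}(v)=0$ the strict inequality $F_{V|E=1}(v)<F_{V|E=0}(v)$ cannot hold, so the hypothesis forces both sides to be positive where it applies.

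\emph{Step 3: plug into Theorem~\ref{thm:psp_general}.} We have $|\bar{\mathcal R}_{psp}|\le m$, so the argument $u=\alpha|\bar{\mathcal R}_{psp}|/(m\tilde\theta)$ is at most $\alpha/\tilde\theta$. Hence $F^{-1}_{V|\tilde E=1}(u)\le v^*$, and Step 2 yields $F_{V|E=1}(F^{-1}_{V|\tilde E=1}(u))<F_{V|\tilde E=1}(F^{-1}_{V|\tilde E=1}(u))=u$. The last equality uses continuity and strict monotonicity of $F_{V|\tilde E=1}$. Substituting,
\[
m\theta\,\mathbb E\Bigl[\tfrac{1}{|\bar{\mathcal R}_{psp}|}F_{V|E=1}\bigl(F^{-1}_{V|\tilde E=1}(u)\bigr)\Bigr]< m\theta\,\mathbb E\Bigl[\tfrac{1}{|\bar{\mathcal R}_{psp}|}\cdot\tfrac{\alpha|\bar{\mathcal R}_{psp}|}{m\tilde\theta}\Bigr]=\alpha\theta/\tilde\theta .
\]
Here $|\bar{\mathcal R}_{psp}|\ge 1$ always holds, because the first input to the BH procedure is $0$ and so is always rejected.

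\emph{Main obstacle.} The delicate point is making the inequality in Step 3 strict after taking expectations. The pointwise inequality is strict for every realization, and $|\bar{\mathcal R}_{psp}|$ takes finitely many values. So the expectation of a strictly positive gap over an event of probability one is strictly positive, and the strict bound $\limsup_{n\to\infty}\mathrm{FDR}_{\mathrm o}<\alpha\theta/\tilde\theta$ follows.
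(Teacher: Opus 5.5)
Your proposal is correct and matches the paper's proof step for step. It uses the same computation $\tilde\theta=\theta+\epsilon(1-2\theta)$, the same mixture decomposition of $F_{V|\tilde E=1}$ via $V\perp\tilde E\mid E$, and the same substitution of $F_{V|E=1}(F^{-1}_{V|\tilde E=1}(u))<u$ into the bound of Theorem~\ref{thm:psp_general}; your argument that strictness survives the expectation (an a.s.\ positive gap, with $|\bar{\mathcal R}_{\text{psp}}|\in\{1,\dots,m\}$) spells out a step the paper leaves implicit.
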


We note that the intuitive idea of the assumption $F_{V|E=1}(v) <F_{V|E=0}(v)$ is that the pre-trained model tends to be more confident (i.e., assigning lower nonconformity scores) when it makes a correct prediction than when it makes a mistake. By Proposition~\ref{random_flip_psp}, PSP can suffer severe power loss under random flip label noise if $\theta/\tilde{\theta}$ is relatively small. 
Similarly, while certain asymmetric noise patterns may induce anti-conservative behavior, PSP is prone to over-conservativeness under common label noise mechanisms, since the label noise usually results in a small factor $\theta/\tilde{\theta}$.

\section{Robust Conformalized Selection}\label{sec4}
This section introduces the RCS approach for the classification problem (Task 1). Section~\ref{methodology} describes the methodology with given localized weights. Section~\ref{theory_rcs} establishes the asymptotic FDR control and the power optimality of RCS, and Section \ref{estimate_w} demonstrates the robustness of RCS under estimated localized weights.

\subsection{Methodology}\label{methodology}
Recall that we have a pre-trained classifier $\hat{f}:\mathcal{X}\to[K]$ that provides the predicted label $\hat{Y}_i=\hat{f}(X_i)$ for $i\in[n+m]$.
Let $U: \mathcal{X} \to [0,1]$ be a scoring function that measures the uncertainty (nonconformity) of the predicted class.
Write $U_i=U(X_i)$, $i\in[n+m]$. 
For example, the pre-trained classifier may take the form that
\begin{equation}\label{classifier}
\hat{f}(x)=\arg\max_{k\in[K]}\hat{\mu}_k(x),
\end{equation}
where $\hat{\mu}_k(x)$ is the estimator of $\mu_k(x):=\mathbb{P}(Y=k|X=x)$ for $k\in[K]$. In this case, the uncertainty score of the prediction $\hat{Y}_i$ can be constructed by
\begin{equation}\label{uncertainty_Score}
U_i=1-\hat{\mu}_{\hat{Y}_i}(X_i) =1-\max_k \hat{\mu}_k(X_i).
\end{equation}
For $j\in[m]$, a small value of $U_{n+j}$ reflects a higher confidence in the correctness of the prediction $\hat{Y}_{n+j}$.
In line with intuition, the confidence level should be positively correlated with the likelihood of $\hat{Y}_{n+j}=Y_{n+j}$.
Therefore, we reject hypothesis $H_j: \hat{Y}_{n+j}\neq Y_{n+j}$ if $U_{n+j}$ is below some threshold.

\begin{remark}[Pre-trained classifier]\label{Pre-trained_classifier}
The classifier function $\hat{f}$ can be trained using any (black-box) model, as long as it is independent of the calibration and test datasets. Therefore, the classifier need not to be based on a soft-classifier \eqref{classifier}. Furthermore, the RCS approach is also applicable to classifiers with inherent randomness (e.g., LLM as a classifier) as long as the randomness is independent of the data $\{(X_i,Y_i)\}_{i=1}^{n+m}$.
\end{remark}

\begin{remark}[Uncertainty score]
Practitioners can utilize various uncertainty scores since the FDR control of RCS holds for a broad class of scores; see Theorem \ref{theorem2}.
However, as previously analyzed, the uncertainty score should reflect evidence against $H_{j}$ to gain power. Theorem~\ref{optimal_uncertainty} establishes the optimal choice of uncertainty score that maximizes power.
\end{remark}

To determine the rejection set $\mathcal{R}_{\mathcal{A}_{g}}$ that satisfies $\text{FDR}_{\mathcal{A}_g}\leq\alpha_{\mathcal{A}_g}$, $g\in[G]$, the selection algorithm should explicitly or implicitly provide an accurate estimate of $\text{FDP}_{\mathcal{A}_g}$ with given thresholds.
For clean calibration data that are i.i.d. with the test data, a useful strategy is to directly estimate this rate from the calibration data. However, such an estimate can be heavily biased for the label contamination setting. The key idea of RCS lies in translating this problem into a \textit{localized covariate shift} problem, and subsequently employing a \textit{covariate-adjusted} empirical-Bayes type estimate of FDP. In what follows, we describe the details of threshold determination.

We first partition the calibration data and the test data based on their predicted classes:
\begin{equation*}
\mathcal{S}_{k}^{\text{cal}}:=\{i\in[n]:\hat{Y}_{i}=k\},\quad
\mathcal{S}_{k}^{\text{test}}:=\{j\in[m]:\hat{Y}_{n+j}=k\},\quad k\in[K].
\end{equation*}
We define the localized weights by
\begin{equation}\label{localized_weights}
w_{k}(x):=\frac{\mathbb{P}(Y\neq k|X=x)}{\mathbb{P}(\tilde{Y}\neq k|X=x)},\quad k\in[K].
\end{equation}
The estimation of $w_k(x)$ is deferred to Section~\ref{estimate_w}.
For each group $\mathcal{A}_g$, RCS uses a shared threshold $t$ for all predicted classes in the group. We estimate $\text{FDP}_{\mathcal{A}_g}(t)$ by
\begin{equation}\label{estimate_fdp}
\widehat{\text{FDP}}_{\mathcal{A}_g}(t)=
\frac{\sum_{k\in\mathcal{A}_g}\frac{|\mathcal{S}_k^{\text{test}}|}{|\mathcal{S}_k^{\text{cal}}|}\sum_{i\in \mathcal{S}_{k}^{\text{cal}}}w_k(X_i)\mathbb{I}\{ \tilde{Y}_i \neq k,~U_i\leq t\}}{1\vee\sum_{k\in\mathcal{A}_g}\sum_{j\in\mathcal{S}_k^{\text{test}}}\mathbb{I}\{U_{n+j}\leq t\}}.
\end{equation}
The final threshold is determined by 
\begin{align}\label{threshold}
\hat{t}_{\mathcal{A}_g} =\max \left\{ t\in[0,1]:\widehat{\text{FDP}}_{\mathcal{A}_g}(t)\leq \alpha_{\mathcal{A}_g}\right\}.
\end{align}
Then the selection set for group $\mathcal{A}_g$ is given by $\mathcal{R}_{\mathcal{A}_g}=\{j\in[m]: \hat{Y}_{n+j}\in\mathcal{A}_g, U_{n+j} \leq \hat{t}_{\mathcal{A}_g}\}$. The explanation of $\widehat{\text{FDP}}_{\mathcal{A}_g}(t)$ is provided as follows.

\begin{remark}[Insight of $\widehat{\text{FDP}}_{\mathcal{A}_g}(t)$]\label{bayes_fdp}
Leveraging a mirror process \citep{barber2015controlling,du2023false} or adopting an empirical-Bayes perspective \citep{ramdas2019unified,yu2024generalized} are currently two powerful paradigms for estimating FDP. 
Unlike existing works, the proposed $\widehat{\text{FDP}}_{\mathcal{A}_g}(t)$ can be regarded as a covariate-adjusted empirical-Bayes-type estimate of the FDP. Specifically, with $w_k(x)$, we have 
\begin{align*}
 \mathbb{P}\left(\hat{Y}=k, Y \neq k, U(X) \le t\right)
 &=
\mathbb{E} \left[ w_k(X) \mathbb{I}\{\hat{Y}=k,\tilde{Y} \neq k, U(X) \le t\}\right]\\
&\approx\frac{1}{n}\sum_{i\in\mathcal{S}_k^{\text{cal}}}w_k(X_i)\mathbb{I}\{\tilde{Y}_i\neq k, U_i\leq t\}.
\end{align*}
Combining with the fact that $|\mathcal{S}_k^{\text{cal}}|/n\approx\mathbb{P}(\hat{Y}=k)$, we have
\begin{align*}
\frac{1}{|\mathcal{S}_k^{\text{cal}}|}\sum_{i\in\mathcal{S}_k^{\text{cal}}}w_k(X_i)\mathbb{I}\{\tilde{Y}_i\neq k, U_i\leq t\}
\approx \mathbb{P}(Y\neq k,U(X)\leq t\mid\hat{Y}=k).
\end{align*}
This leads to a reasonable estimate of the number of false rejections:
\begin{align*}
\sum_{k\in\mathcal{A}_g}\#\{j\in\mathcal{S}_k^{\text{test}}:U_{n+j}\leq t,Y_{n+j}\neq k\}
\approx
\sum_{k\in\mathcal{A}_g}|\mathcal{S}_k^{\text{test}}|\cdot\frac{1}{|\mathcal{S}_k^{\text{cal}}|}\sum_{i\in\mathcal{S}_k^{\text{cal}}}w_k(X_i)\mathbb{I}\{\tilde{Y}_i\neq k, U_i\leq t\}.
\end{align*}
\end{remark}

\subsection{FDR Control and Optimality}\label{theory_rcs}
In this section, we establish the FDR control and power optimality for general label contamination models.
Define the oracle feasible set for the threshold by
\begin{equation}\label{oracle_threshold}
\mathcal{F}_{\mathcal{A}_g} = \left\{t \in [0,1]: R_{\mathcal{A}_g}^{\infty}(t)>0,~ \frac{V_{\mathcal{A}_g}^{\infty}(t)}{R_{\mathcal{A}_g}^{\infty}(t)}\le \alpha_{\mathcal{A}_g}\right\},
\end{equation}
where
\begin{equation*}
R_{\mathcal{A}_g}^{\infty}(t):=\sum_{k\in\mathcal{A}_g}\mathbb{P}(\hat{Y}=k,U\leq t),\quad
V_{\mathcal{A}_g}^{\infty}(t):=\sum_{k\in\mathcal{A}_g}\mathbb{P}(\hat{Y}=k,Y\neq k,U\leq t).
\end{equation*}
Let $R_{\mathcal{A}_g}^*:=\sup_{t\in\mathcal{F}_{\mathcal{A}_g}}R_{\mathcal{A}_g}^{\infty}(t)$.
We define the \textit{recovery-type power} of RCS for group $\mathcal{A}_g$ by
\begin{equation}\label{power_1}
\text{rPower}_{\mathcal{A}_g}:=\mathbb{E}\left[\frac{\sum_{k\in\mathcal{A}_g}\sum_{j=1}^{m}\mathbb{I}\{\hat{Y}_{n+j}=Y_{n+j}=k,j\in\mathcal{R}_{\mathcal{A}_g}\}}{1\vee\sum_{j=1}^{m} \mathbb{I}\{Y_{n+j}\in\mathcal{A}_g\}}\right].
\end{equation}
The asymptotic power under threshold $t$ is defined by
\begin{align*}\label{power_2}
\text{rPower}^{\infty}_{\mathcal{A}_g}(t):=\frac{\sum_{k\in\mathcal{A}_g}\mathbb{P}(\hat{Y}=Y=k,U\leq t)}{\mathbb{P}(Y\in\mathcal{A}_g)}.
\end{align*}
Before presenting the theoretical results, we impose a mild regularity condition that rules out atoms at the threshold and excludes the degenerate case in which no nontrivial strictly feasible selection rule exists.

\begin{assumption}\label{techniqe_condition}
Consider any fixed $g\in[G]$. (1) $\mathbb{P}(Y\in\mathcal{A}_g)>0$ and $\mathbb{P}(\hat{Y}=k)>0$ for $k\in\mathcal{A}_g$; (2) For every $k\in[K]$ and $l\in\mathcal{A}_g$ with $\mathbb{P}(Y=k,\hat{Y}=l)>0$, the conditional distribution function $G_{k,l}(t):=\mathbb{P}(U\leq t\mid Y=k,\hat{Y}=l)$ is continuous on $[0,1]$. (3) The oracle feasible frontier is non-degenerate and strictly approximable, i.e., 
$R_{\mathcal{A}_g}^*>0$ and, for every $\delta>0$, there exists $t_\delta\in[0,1]$ such that
$V_{\mathcal{A}_g}^{\infty}(t_\delta)<\alpha_{\mathcal{A}_g}R_{\mathcal{A}_g}^{\infty}(t_\delta)$ and
$R_{\mathcal{A}_g}^{\infty}(t_\delta)\geq R_{\mathcal{A}_g}^*-\delta$.
\end{assumption}

\begin{theorem}\label{theorem2}
Consider i.i.d. samples $\{(X_i,Y_i,\tilde{Y}_i)\}_{i=1}^{n+m}$, any fixed classifier $\hat{f}:\mathcal{X}\to[K]$, and any fixed score function $U:\mathcal{X}\to[0,1]$.
Suppose that the localized weight $w_k(x)$ in \eqref{localized_weights} is well-defined and uniformly bounded for $k\in[K]$.
For each $g\in[G]$, let $\mathcal{R}_{\mathcal{A}_g}=\{j\in[m]: \hat{Y}_{n+j}\in\mathcal{A}_g, U_{n+j} \leq \hat{t}_{\mathcal{A}_g}\}$, where the threshold $\hat{t}_{\mathcal{A}_g}$ is defined by \eqref{threshold}. Let $\mathrm{FDR}_{\mathcal{A}_g}$ and $\mathrm{rPower}_{\mathcal{A}_g}$ be defined by \eqref{group_fdr} and \eqref{power_1}, respectively.
Under Assumption \ref{techniqe_condition}, we have 
\begin{itemize}
\item[(1)] $\limsup_{n,m \to \infty} \mathrm{FDR}_{\mathcal{A}_g} \le \alpha_{\mathcal{A}_g}$.
\item[(2)] $\lim_{n,m \to \infty} \mathrm{rPower}_{\mathcal{A}_g} = \sup_{\tau \in \mathcal{F}_{\mathcal{A}_g}} \mathrm{rPower}_{\mathcal{A}_g}^\infty(\tau)$.
\end{itemize}
\end{theorem}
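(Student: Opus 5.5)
Our approach is a uniform law of large numbers combined with a continuity argument at the oracle threshold. Fix $g$ and write the estimate as $\widehat{\text{FDP}}_{\mathcal{A}_g}(t)=\hat V(t)/(1\vee m\hat R(t))$, where $\hat R(t)=m^{-1}\sum_{k\in\mathcal{A}_g}\sum_{j\in\mathcal{S}_k^{\text{test}}}\mathbb{I}\{U_{n+j}\le t\}$ and $\hat V(t)/m=\sum_{k\in\mathcal{A}_g}(|\mathcal{S}_k^{\text{test}}|/m)(n/|\mathcal{S}_k^{\text{cal}}|)\,n^{-1}\sum_{i\in\mathcal{S}_k^{\text{cal}}}w_k(X_i)\mathbb{I}\{\tilde Y_i\ne k,U_i\le t\}$.

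\textbf{Step 1 (identification).} Conditioning on $X$ and using that $\hat Y$ and $U$ are functions of $X$, the definition \eqref{localized_weights} gives
\[
\mathbb{E}\bigl[w_k(X)\mathbb{I}\{\hat Y=k,\tilde Y\ne k,U\le t\}\bigr]=\mathbb{P}(\hat Y=k,Y\ne k,U\le t).
\]

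\textbf{Step 2 (uniform convergence).} The classes $\{\mathbb{I}\{U\le t\}:t\in[0,1]\}$ and $\{w_k(x)\mathbb{I}\{\cdot\}\mathbb{I}\{U\le t\}\}$ are VC-type with a bounded envelope, since $w_k$ is uniformly bounded. Glivenko--Cantelli therefore yields
\[
\sup_t\bigl|\hat R(t)-R^\infty_{\mathcal{A}_g}(t)\bigr|\to0,\qquad \sup_t\bigl|\hat V(t)/m-V^\infty_{\mathcal{A}_g}(t)\bigr|\to0
\]
almost surely. Here we also use $|\mathcal{S}_k^{\text{test}}|/m\to\mathbb{P}(\hat Y=k)$ and $|\mathcal{S}_k^{\text{cal}}|/n\to\mathbb{P}(\hat Y=k)>0$ by Assumption~\ref{techniqe_condition}(1). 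In the same way, the empirical false-discovery proportion $m^{-1}\sum_j\mathbb{I}\{\hat Y_{n+j}\in\mathcal{A}_g,Y_{n+j}\ne\hat Y_{n+j},U_{n+j}\le t\}$ converges uniformly to $V^\infty_{\mathcal{A}_g}(t)$, and the empirical true-discovery proportion converges uniformly to $\sum_k\mathbb{P}(\hat Y=Y=k,U\le t)$. Assumption (2) makes all these limits continuous in $t$.

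\textbf{Step 3 (the selected threshold achieves the oracle frontier).*} We show $R^\infty_{\mathcal{A}_g}(\hat t)\to R^*$ in probability. This is the main obstacle, because $\hat t$ is the maximizer of a discontinuous empirical criterion, and $t\mapsto R^\infty(t)$ may be flat, so $\hat t$ itself need not converge.

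For the lower bound, take the point $t_\delta$ from Assumption (3). It satisfies $V^\infty(t_\delta)<\alpha R^\infty(t_\delta)$ with $R^\infty(t_\delta)>0$ for small $\delta$. By uniform convergence, $\widehat{\text{FDP}}(t_\delta)\le\alpha$ with probability tending to one. Hence $\hat t\ge t_\delta$, and since $R^\infty$ is monotone, $R^\infty(\hat t)\ge R^*-\delta$.

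For the upper bound, fix $\eta>0$ and consider the set $\{t:R^\infty(t)\ge R^*+\eta\}$. On this set the ratio $V^\infty/R^\infty$ exceeds $\alpha$. Because the set is compact by continuity of $R^\infty$ and $R^\infty$ is bounded below there, the excess is uniform: $\inf (V^\infty/R^\infty-\alpha)>0$. Uniform convergence then excludes any such $t$ with probability tending to one. Together, the two bounds give $R^\infty(\hat t)\to R^*$.

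We then need to transfer this to $V^\infty(\hat t)$. Every $t$ with $R^\infty(t)=R^\infty(\hat t)$ carries the same mass, so $V^\infty(\hat t)$ is pinned down by $R^\infty(\hat t)$. Concretely, $V^\infty$ and the true-discovery limit are both nondecreasing and are dominated by $R^\infty$ in increments, so they are functions of $R^\infty$ that are continuous (indeed $1$-Lipschitz) in $R^\infty$. This gives $V^\infty(\hat t)\to V^*$ with $V^*\le\alpha R^*$, where the inequality follows from closedness of the feasible set in the $R$-value.

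\textbf{Step 4 (conclusion).} Since $R^*>0$, the empirical false-discovery proportion at $\hat t$, divided by $\hat R(\hat t)$, converges in probability to $V^*/R^*\le\alpha$. The FDP is bounded by $1$, so bounded convergence gives part (1).

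For part (2), the numerator of $\text{rPower}$ divided by $m$ converges to $\sum_k\mathbb{P}(\hat Y=Y=k,U\le\hat t)$ evaluated at the frontier. The denominator divided by $m$ converges to $\mathbb{P}(Y\in\mathcal{A}_g)>0$. Dominated convergence then gives the limit $\text{rPower}^\infty$ at the frontier. Finally, $\text{rPower}^\infty(t)=(R^\infty(t)-V^\infty(t))/\mathbb{P}(Y\in\mathcal{A}_g)$ is a nondecreasing function of $R^\infty(t)$, so its supremum over $\mathcal{F}_{\mathcal{A}_g}$ is attained along sequences with $R^\infty\to R^*$. This matches the limit and proves the equality.
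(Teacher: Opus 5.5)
Your proof is correct. It uses the same ingredients as the paper: the weight identification, Glivenko--Cantelli uniform convergence, the strictly feasible points $t_\delta$ from Assumption~\ref{techniqe_condition}(3), and the nestedness fact that $V^\infty$ and $T^\infty$ cannot move where $R^\infty$ is flat. You organize them differently, though.

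\textbf{Part (1).} The paper never locates $\hat t$. It combines three things:
\begin{itemize}
\item the empirical constraint $\widehat V_{n,m,\mathcal A}(\hat t)\le\alpha_{\mathcal A}R_{m,\mathcal A}(\hat t)$;
\item the uniform bound $\sup_t|\widehat V_{n,m,\mathcal A}(t)-V_{m,\mathcal A}(t)|\to0$;
\item a lower bound $R_{m,\mathcal A}(\hat t)\ge c_0$ obtained from a single strictly feasible $t_0$.
\end{itemize}
So the paper's FDR argument needs neither the continuity in Assumption~\ref{techniqe_condition}(2) nor the approximability in (3). You instead get FDR control as a by-product of the frontier convergence $R^\infty(\hat t)\to R^*$. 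That costs two extra steps:
\begin{itemize}
\item an upper bound via a uniform infeasibility margin on the compact set $\{t:R^\infty(t)\ge R^*+\eta\}$, which is where continuity enters;
\item the reparametrization $V^\infty=\phi\circ R^\infty$ with $\phi$ $1$-Lipschitz.
\end{itemize}
In return you get a sharper statement: the FDR actually converges to $V^*/R^*\le\alpha_{\mathcal A_g}$, not merely has limsup at most $\alpha_{\mathcal A_g}$.

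\textbf{Part (2).} The paper extracts convergent subsequences $\hat t^{(n_\ell,m_\ell)}\to t^{**}$. It shows $t^{**}\in\mathcal F_{\mathcal A}$ by passing the empirical constraint to the limit, and $R^\infty(t^{**})=R^*$ via $t_\delta$. It then concludes from nestedness, together with an implicit use of the continuity of $T^\infty$. You instead track the scalar $R^\infty(\hat t)$ and write $T^\infty=\psi\circ R^\infty$ with $\psi$ nondecreasing and Lipschitz. This avoids subsequences altogether. It also makes two facts explicit that the paper handles only through accumulation points: the convergence $T^\infty(\hat t)\to R^*-V^*$ and the identification $\sup_{\mathcal F_{\mathcal A_g}}T^\infty=R^*-V^*$.
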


Theorem~\ref{theorem2} demonstrates that, given the uncertainty score function $U(\cdot)$, RCS controls $\text{FDR}_{\mathcal{A}_g}$ and achieves the optimal recovery power for any $g\in[G]$ asymptotically.
We note that the metric $\mathrm{rPower}_{\mathcal{A}_g}$ is somewhat different from the classical definition of power in multiple testing, although RCS also achieves asymptotic optimality under the latter definition; see Remark~\ref{remark:power} for a detailed discussion.

{
\begin{remark}[Relationship with power in multiple testing]\label{remark:power}
In classical multiple testing, power is defined as the proportion of false null hypotheses that are rejected, i.e.,
\begin{equation*}
\mathrm{Power}_{\mathcal{A}_g}:=
\mathbb{E}\left[\frac{
\sum_{k\in\mathcal{A}_g}\sum_{j=1}^{m}
\mathbb{I}\{\hat{Y}_{n+j}=Y_{n+j}=k,\ U_{n+j}\leq \hat{t}_{\mathcal{A}_g}\}
}{
1\vee\sum_{j=1}^{m}
\mathbb{I}\{\hat{Y}_{n+j}=Y_{n+j}\in\mathcal{A}_g\}
}
\right].
\end{equation*}
The only difference between $\mathrm{Power}_{\mathcal{A}_g}$ and $\mathrm{rPower}_{\mathcal{A}_g}$ is the denominator within the expectation. Since both denominators are independent of the thresholding rule, maximizing the two notions asymptotically leads to the same oracle threshold whenever the classical power is non-degenerate. 
That is, the proposed RCS also achieves the optimal power in terms of classical multiple testing.
The metric $\mathrm{Power}_{\mathcal{A}_g}$ reflects the extent to which the testing procedure can preserve the correct predictions made by the pre-trained classifier $\hat f$, but it cannot directly reflect the utility of the test dataset and is not comparable across different $\hat f$. In contrast, $\mathrm{rPower}_{\mathcal{A}_g}$ is a more direct metric for practitioners, which measures the final recovery utility among test samples whose true label falls into the group $\mathcal{A}_g$.
\end{remark}

}

We then discuss the optimal choice of the score function $U$.
For a fixed group $\mathcal{A}_g$, let $\mathcal{U}$ be the collection of all measurable scores $U:\mathcal{X}\to[0,1]$ for which Assumption~\ref{techniqe_condition} holds for this group.
We rewrite $\mathcal{F}_{\mathcal{A}_g}$, $\mathrm{rPower}_{\mathcal{A}_g}$, and $\mathrm{rPower}^{\infty}_{\mathcal{A}_g}(\tau)$ by $\mathcal{F}_{\mathcal{A}_g}(U)$, $\mathrm{rPower}_{\mathcal{A}_g}(U)$, and $\mathrm{rPower}_{\mathcal{A}_g}^{\infty}(\tau;U)$, respectively.
The following theorem establishes the optimal choice of uncertainty score for the proposed RCS.

\begin{theorem}\label{optimal_uncertainty}
Fix $g\in[G]$ and let
$U^*(X)=1-\mathbb{P}(Y=\hat f(X)\mid X)$.
Suppose $U^*\in\mathcal U$. Then
\begin{equation*}\label{eq:oracle_optimal_score}
\sup_{\tau\in \mathcal{F}_{\mathcal{A}_g}(U^*)}
\mathrm{rPower}_{\mathcal{A}_g}^{\infty}(\tau;U^*)
=
\sup_{U\in\mathcal U}\sup_{\tau\in \mathcal{F}_{\mathcal{A}_g}(U)}
\mathrm{rPower}_{\mathcal{A}_g}^{\infty}(\tau;U).
\end{equation*}
Consequently, for i.i.d. samples $\{(X_i,Y_i,\tilde{Y}_i)\}_{i=1}^{n+m}$, if the localized weights $w_k(x)$ in \eqref{localized_weights} are well-defined and uniformly bounded, the RCS procedure implemented with $U^*$ satisfies
\begin{equation*}
\lim_{n,m\to \infty}\mathrm{rPower}_{\mathcal{A}_g}(U^*)=
\sup_{U\in\mathcal U}\sup_{\tau\in \mathcal{F}_{\mathcal{A}_g}(U)}
\mathrm{rPower}_{\mathcal{A}_g}^{\infty}(\tau;U).
\end{equation*}
\end{theorem}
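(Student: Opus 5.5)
This is a Neyman--Pearson-type argument followed by an appeal to Theorem~\ref{theorem2}.

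\textbf{Reformulation.} Write $q(X)=\mathbb{P}(Y=\hat f(X)\mid X)$, so that $U^*=1-q$. Restrict to the event $\{\hat Y\in\mathcal{A}_g\}$ and define the measure $\nu(B)=\mathbb{P}(X\in B,\hat f(X)\in\mathcal{A}_g)$. Each threshold rule $\{U\le \tau\}$ is then a set $B\subseteq\mathcal X$ inside this region. Conditioning on $X$ gives
\[
R(B)=\nu(B),\qquad \mathbb{P}(\hat Y=Y\in\mathcal A_g,X\in B)=\int_B q\,d\nu,
\]
so $V(B)=\int_B(1-q)\,d\nu$. The denominator $\mathbb{P}(Y\in\mathcal A_g)$ of $\mathrm{rPower}^\infty$ does not depend on $U$. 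Hence the problem becomes: maximize $\int_B q\,d\nu$ subject to $\int_B(1-q)\,d\nu\le \alpha\,\nu(B)$, where $\alpha=\alpha_{\mathcal A_g}$. The constraint is equivalent to $\int_B (q-(1-\alpha))\,d\nu\ge 0$.

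\textbf{Neyman--Pearson step.} Fix any $U\in\mathcal U$ and any feasible $\tau$, and let $B=\{U\le\tau\}$ with $\nu(B)=r$. Consider the level set $B^*=\{U^*\le s\}$ with $\nu(B^*)=r$. Such an $s$ exists because $\nu(U^*\le s)$ is continuous in $s$, which follows from Assumption~\ref{techniqe_condition}(2) applied to $U^*$, and it ranges over $[0,R^{\infty}(1)]$.

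Among all sets of $\nu$-mass $r$, $B^*$ maximizes $\int q\,d\nu$. This is the bathtub principle: $q\ge 1-s$ on $B^*$ and $q\le 1-s$ off it, so $\int_{B^*}q-\int_B q=\int_{B^*\setminus B}(q-(1-s))-\int_{B\setminus B^*}(q-(1-s))\ge0$, and the two differences have equal mass.

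Consequently $B^*$ has at least the power of $B$. Its false-discovery mass is $r-\int_{B^*} q\le r-\int_B q\le\alpha r$, so $s\in\mathcal F_{\mathcal A_g}(U^*)$, using $r>0$. Taking suprema gives ``$\ge$''; the reverse inequality is trivial because $U^*\in\mathcal U$.

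\textbf{The case $r$ exceeds the reachable range.} This case cannot occur, since every $B$ is a subset of $\{\hat f\in\mathcal A_g\}$, whose mass is $R^\infty(1)$, and $U^*$ reaches that mass at $s=1$.

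\textbf{Consequence.} Apply Theorem~\ref{theorem2}(2) with the score $U^*$. Its hypotheses hold because $U^*\in\mathcal U$ and the weights are bounded, which gives $\lim\mathrm{rPower}_{\mathcal A_g}(U^*)=\sup_{\tau\in\mathcal F(U^*)}\mathrm{rPower}^\infty(\tau;U^*)$. Combined with the first display, this yields the claim.

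\textbf{Main obstacle.} The delicate step is the equal-mass matching. It needs continuity of $s\mapsto\nu(U^*\le s)$, which requires no atoms of $q$ weighted by $\nu$. I would derive this from Assumption~\ref{techniqe_condition}(2) for $U^*$: summing $G_{k,l}$ over $k$ and $l\in\mathcal A_g$ gives continuity of $\mathbb{P}(\hat Y\in\mathcal A_g,U^*\le t)$. If an atom ever had to be handled, I would fall back on showing that the supremum is approached by strictly feasible $t_\delta$, as in Assumption~\ref{techniqe_condition}(3).
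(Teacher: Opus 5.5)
Your proof is correct, but it takes a different route from the paper. The paper relaxes to all measurable, and after projecting onto $S(X)=\mathbb{P}(Y=\hat f(X)\mid X)$ fractional, rules $a(s)\in[0,1]$, and writes the FDP constraint as $\int(1-\alpha_{\mathcal A_g}-s)\,a(s)\,dM(s)\le 0$. It then sets $a\equiv 1$ on $\{s\ge 1-\alpha_{\mathcal A_g}\}$, where the cost is nonpositive, and solves the remaining knapsack problem on $\{s<1-\alpha_{\mathcal A_g}\}$ by the Neyman--Pearson lemma, using the increasing ratio $s/(1-\alpha_{\mathcal A_g}-s)$. The no-atom condition is used there only to remove boundary randomization.

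You never solve for the optimum. Once the rejection mass $r=\nu(B)$ is fixed, both the objective $\int_B q\,d\nu$ and the constraint $\int_B q\,d\nu\ge(1-\alpha_{\mathcal A_g})\,r$ depend on $B$ only through $\int_B q\,d\nu$. The bathtub exchange at equal mass then gives pointwise dominance: for every feasible competitor, the $U^*$-threshold of the same size is feasible, with no more false and at least as many true discoveries. This is more elementary, since it needs no $A_\pm$ split, ratio monotonicity, or Neyman--Pearson lemma. It also gives dominance at every rejection size, not just at the optimum. Your exchange never uses that $B$ is a sublevel set of some $U$, so it also covers arbitrary measurable rules, and randomized ones via the fractional bathtub with $\int\phi\,d\nu=r$. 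That is exactly the extra generality the paper's relaxation is meant to provide.

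Your only technical input is exact mass matching, i.e. continuity of $s\mapsto\nu(U^*\le s)$ with $\nu(U^*\le 0)=0$. This is the same atomless reading of Assumption~\ref{techniqe_condition}(2) that the paper uses. Even if $U^*$ had an atom at $0$, a competitor with $r<\nu(U^*\le 0)$ is trivially dominated by $\{U^*\le 0\}$, on which $q\equiv 1$. The final appeal to Theorem~\ref{theorem2}(2) is identical in both proofs.
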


\subsection{Robustness and a Strategy for Estimating Localized Weights}\label{estimate_w}

\begin{algorithm}[tb]
   \caption{RCS for Class-conditional Contamination Model}
   \label{alg:RCS}
\begin{algorithmic}[1]
   \STATE {\bfseries Input:} Pre-trained classifier $\hat{f}:\mathcal{X}\to[K]$, calibration data $\{(X_i,\tilde{Y}_i)\}_{i=1}^n$, test data $\{X_{n+j}\}_{j=1}^m$, pre-trained estimators $(\hat{\mu}_{1}(x),\dots,\hat{\mu}_{K}(x))$ of $(\mathbb{P}(Y=1|X=x),\dots,\mathbb{P}(Y=K|X=x))$, uncertainty score function $U:\mathcal{X}\to [0,1]$, transition matrix $\mathbf{T}\in\mathbb{R}^{K\times K}$, partition groups $\mathcal{A}_1,\dots,\mathcal{A}_G$, FDR target $(\alpha_{\mathcal{A}_1},\dots,\alpha_{\mathcal{A}_G})\in(0,1)^{G}$.
   
   \STATE {\bfseries Output:} Set of selected candidates $\mathcal{R}_{\mathcal{A}_g}$, $g\in[G]$.\\
   
   \STATE Compute the predicted label $\hat{Y}_{i}=\hat{f}(X_i)$ and its uncertainty score $U_i$, $i\in[n+m]$.
   \STATE For each $k\in [K]$, identify the subsets predicted as class $k$ as follows: 
   \begin{align*}
   &\mathcal{S}^{\text{test}}_k =\{j\in[m]:\hat{Y}_{n+j}=k\},\quad\mathcal{S}_{k}^{\text{cal}}=\{i\in[n]:\hat{Y}_i=k\}.
   \end{align*}
   
\STATE  For each $k\in[K]$, compute the estimated localized weights 
\begin{equation*}
\hat{w}_{k}(X_i)=\frac{1-\hat{\mu}_k(X_i)}{1-\sum_{l=1}^K T_{lk} \hat{\mu}_l (X_i)}, \quad i\in   \mathcal{S}^{\text{cal}}_k.
\end{equation*}

\FOR{$g=1,\dots, G$} 

\STATE Calculate the shared threshold $\hat{t}_{\mathcal{A}_g}$ as
\begin{align*}
\hat{t}_{\mathcal{A}_g} =\max\{t\in \mathcal{T}:  \widehat{\text{FDP}}_{\mathcal{A}_g}(t)\leq \alpha_{\mathcal{A}_g}\},
\end{align*}
where $\mathcal{T}=\cup_{k\in\mathcal{A}_g} (\{U_{i}:i\in \mathcal{S}_k^{\text{cal}}\}\cup \{U_{n+j}:j\in\mathcal{S}_k^{\text{test}}\})$ and
\begin{equation*}
\widehat{\text{FDP}}_{\mathcal{A}_g}(t)=
\frac{\sum_{k\in\mathcal{A}_g}\frac{|\mathcal{S}_k^{\text{test}}|}{|\mathcal{S}_k^{\text{cal}}|}\sum_{i\in \mathcal{S}_{k}^{\text{cal}}}\hat{w}_k(X_i)\mathbb{I}\{ \tilde{Y}_i \neq k,~U_i\leq t\}}{1\vee\sum_{k\in\mathcal{A}_g}\sum_{j\in\mathcal{S}_k^{\text{test}}}\mathbb{I}\{U_{n+j}\leq t\}}.
\end{equation*}
\STATE Compute $\mathcal{R}_{\mathcal{A}_g}=\left\{j\in[m]: \hat{Y}_{n+j}\in\mathcal{A}_g,U_{n+j}\leq \hat{t}_{\mathcal{A}_g}\right\}$.
\ENDFOR 
   \STATE {\bfseries Return} $\mathcal{R}_{\mathcal{A}_g}$, $g\in[G]$.
\end{algorithmic}
\end{algorithm}

We complete the description of RCS by establishing its robustness to consistently estimated localized weights $\{\hat{w}_k(x)\}$ and further providing a convenient strategy for estimating $w_k(x)$ under the widely used class-conditional contamination model.
The following theorem demonstrates that the proposed RCS remains robust under any consistent estimators $\{\hat{w}_k(\cdot)\}_{k\in[K]}$, without the need to specify a concrete label contamination model. 

\begin{theorem}\label{robust_rcs}
Consider the same setting as Theorem~\ref{theorem2}. Let $\hat w_{n,m,k}:\mathcal{X}	\to \mathbb{R}$ be the estimated localized weight function for class $k$ trained independently of $\{(X_i,Y_i,\tilde{Y}_i)\}_{i=1}^{n+m}$ and satisfying $ 
\max_{k\in[K]}\|\hat w_{n,m,k}(\cdot)-w_k(\cdot)\|_{L_2(P_X)}=o_p(1)$ as $n,m\to\infty$.
For $g\in[G]$, let $\hat t^{\mathrm{est}}_{\mathcal A_g}$ be the threshold determined by \eqref{threshold} using $\hat w_{n,m,k}(\cdot)$ in place of $w_k(\cdot)$, and let $\mathrm{FDR}_{\mathcal A_g}$ be the corresponding group FDR. Under Assumption \ref{techniqe_condition}, we have
$\limsup_{n,m\to\infty}\mathrm{FDR}_{\mathcal A_g}\le \alpha_{\mathcal A_g}$.
\end{theorem}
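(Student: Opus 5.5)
The plan is to reduce Theorem~\ref{robust_rcs} to the argument behind Theorem~\ref{theorem2}(1). That proof needs only one ingredient beyond Assumption~\ref{techniqe_condition}: the estimated false-selection count converges uniformly in $t$ to its population counterpart. So the proof reduces to showing that replacing $w_k$ by $\hat w_{n,m,k}$ preserves this uniform convergence.

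Concretely, for $k\in\mathcal{A}_g$ write
\[
\hat V_k(t)=\frac1n\sum_{i=1}^n \hat w_{n,m,k}(X_i)\,\mathbb{I}\{\hat Y_i=k,\tilde Y_i\neq k,U_i\le t\}.
\]
Let $V_k(t)$ denote the same quantity with $w_k$ in place of $\hat w_{n,m,k}$. Theorem~\ref{theorem2}'s proof gives
\[
\sup_t|V_k(t)-\mathbb{P}(\hat Y=k,Y\neq k,U\le t)|\to0
\]
in probability. This follows from a Glivenko--Cantelli argument for the bounded weighted class indexed by $t$, together with the identity in Remark~\ref{bayes_fdp}. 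The new step is the bound
\[
\sup_t|\hat V_k(t)-V_k(t)|\le \frac1n\sum_{i=1}^n|\hat w_{n,m,k}(X_i)-w_k(X_i)|.
\]
Because $\hat w_{n,m,k}$ is trained independently of the calibration data, conditioning on it makes the right-hand side an average of i.i.d.\ terms. Its conditional mean is $\|\hat w_{n,m,k}-w_k\|_{L_1(P_X)}\le\|\hat w_{n,m,k}-w_k\|_{L_2(P_X)}=o_p(1)$.

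A conditional Markov inequality then shows the right-hand side is $o_p(1)$, with no boundedness of $\hat w$ required. Combining this with $|\mathcal{S}^{\text{test}}_k|/m\to\mathbb{P}(\hat Y=k)$ and $|\mathcal{S}^{\text{cal}}_k|/n\to\mathbb{P}(\hat Y=k)>0$ gives uniform convergence of the numerator of $\widehat{\mathrm{FDP}}_{\mathcal{A}_g}(t)$, divided by $m$, to $V^\infty_{\mathcal{A}_g}(t)$. The denominator, divided by $m$, converges uniformly to $R^\infty_{\mathcal{A}_g}(t)$ by Glivenko--Cantelli.

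Next I would argue about the threshold exactly as for the oracle weights. Using part (3) of Assumption~\ref{techniqe_condition}, a strictly feasible $t_\delta$ satisfies $\widehat{\mathrm{FDP}}_{\mathcal{A}_g}(t_\delta)\le\alpha_{\mathcal{A}_g}$ with probability tending to one. Hence $\hat t^{\rm est}_{\mathcal{A}_g}\ge t_\delta$, so $R^\infty_{\mathcal{A}_g}(\hat t^{\rm est}_{\mathcal{A}_g})$ is bounded away from $0$.

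On that event the realized ratio $V_m(\hat t)/R_m(\hat t)$, where $V_m,R_m$ are the empirical test false and total selection proportions, satisfies
\[
\frac{V_m(\hat t)}{R_m(\hat t)}\le \frac{V^\infty(\hat t)+o_p(1)}{R^\infty(\hat t)-o_p(1)}.
\]
The estimator evaluated at $\hat t$ satisfies the same type of uniform approximation, so the realized ratio is at most $\widehat{\mathrm{FDP}}_{\mathcal{A}_g}(\hat t)+o_p(1)\le\alpha_{\mathcal{A}_g}+o_p(1)$. Here uniform convergence of the test-sample processes $V_m,R_m$ holds by a Glivenko--Cantelli argument using continuity of $G_{k,l}$. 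Since $\mathrm{FDP}_{\mathcal{A}_g}\in[0,1]$, bounded convergence then yields $\limsup\mathrm{FDR}_{\mathcal{A}_g}\le\alpha_{\mathcal{A}_g}$.

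The main obstacle is the uniform (in $t$) replacement step. The weights may be unbounded, and $\hat t$ is data dependent, so pointwise consistency is not enough. The supremum bound above avoids this because it is free of $t$, and independence of $\hat w_{n,m,k}$ from the data lets us condition on it. Some care is needed to make the conditional Markov argument rigorous when $\hat w$ depends on $n,m$: a triangular-array law of large numbers is unnecessary because only a first-moment bound is used.
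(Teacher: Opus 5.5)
Your proposal is correct and follows essentially the same route as the paper. The key step is the $t$-free bound $\sup_t|D_{n,k}(t)|\le n^{-1}\sum_{i}|\hat w_{n,m,k}(X_i)-w_k(X_i)|$, made $o_p(1)$ by conditioning on the independently trained $\hat w_{n,m,k}$, using $L_1\le L_2$ and a conditional Markov inequality, and then it feeds into the same strict-feasibility and denominator-lower-bound argument as in Theorem~\ref{theorem2}. The only cosmetic difference is that you compare the realized and estimated ratios through the population quantities $V^\infty,R^\infty$, whereas the paper bounds $V_{m,\mathcal A}(\hat t)\le\widehat V^{\mathrm{est}}_{n,m,\mathcal A}(\hat t)+\Delta_{n,m}$ directly.
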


\paragraph*{Estimation of $w_k(x)$ under class-conditional contamination model.}
We now provide a convenient strategy to estimate $w_k(x)$ for the widely studied class-conditional label contamination model, where the noisy label is generated according to
\begin{equation}\label{eq:class_conditional_noise}
\tilde Y \perp X\mid Y,
\qquad
T_{jk}:=\mathbb P(\tilde Y=k\mid Y=j),\qquad j,k\in[K].
\end{equation}
We note that the class-conditional label contamination model encompasses various types of label noise~\citep{ghosh2017robust,natarajan2013learning,sesia2025adaptive}. For example, it includes the widely studied randomized response model~\citep{einbinder2024label,penso2024conformal,xi2025exploring}, under which the true label $Y_i$ is replaced, with probability $\epsilon\in(0,1)$, by a label $\tilde{Y}_i$ drawn uniformly from the $K$ classes. 
Let $\mu_k(x)=\mathbb P(Y=k\mid X=x)$. The following proposition links the noisy class probabilities, the clean class probabilities, and the transition matrix $\mathbf T=(T_{jk})$.

\begin{proposition}\label{prop:estimated_weights}
Consider the class-conditional label contamination model described in~\eqref{eq:class_conditional_noise}. For each $x\in\mathcal X$, it holds that
\[
\bigl(\mathbb P(\tilde Y=1\mid X=x),\dots,
\mathbb P(\tilde Y=K\mid X=x)\bigr)
=
\bigl(\mu_1(x),\dots,\mu_K(x)\bigr)\mathbf T.
\]
Consequently, we have $w_k(x)
=(1-\mu_k(x))/(1-\sum_{j=1}^K T_{jk}\mu_j(x))$, $k\in[K]$.
\end{proposition}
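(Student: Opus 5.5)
The identity follows from the law of total probability combined with the conditional independence $\tilde Y\perp X\mid Y$ in \eqref{eq:class_conditional_noise}. Fix $x\in\mathcal X$ and $k\in[K]$. We first decompose the event $\{\tilde Y=k\}$ according to the value of the clean label:
\[
\mathbb P(\tilde Y=k\mid X=x)=\sum_{j=1}^K \mathbb P(Y=j\mid X=x)\,\mathbb P(\tilde Y=k\mid Y=j,X=x).
\]
Conditional independence gives $\mathbb P(\tilde Y=k\mid Y=j,X=x)=\mathbb P(\tilde Y=k\mid Y=j)=T_{jk}$. Substituting this yields $\mathbb P(\tilde Y=k\mid X=x)=\sum_{j}\mu_j(x)T_{jk}$, which is the $k$-th entry of the row vector $(\mu_1(x),\dots,\mu_K(x))\mathbf T$. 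Stacking over $k$ gives the matrix identity. For rigor at a general (possibly continuous) $x$, all conditional probabilities should be read as regular conditional distributions. The equality then holds $P_X$-almost everywhere, which suffices for the subsequent use of $w_k$. The conditional independence is what justifies replacing the joint conditional kernel of $\tilde Y$ given $(Y,X)$ by the one given $Y$ alone.

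For the consequence, we take complements in the definition \eqref{localized_weights}. The numerator is $\mathbb P(Y\neq k\mid X=x)=1-\mu_k(x)$. The denominator is
\[
\mathbb P(\tilde Y\neq k\mid X=x)=1-\mathbb P(\tilde Y=k\mid X=x)=1-\sum_{j=1}^K T_{jk}\mu_j(x),
\]
using the first part. Dividing gives the stated formula for $w_k(x)$, wherever the denominator is positive. Positivity of the denominator is implicit in the well-definedness of $w_k$ assumed in Theorem~\ref{theorem2}.

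There is no genuine obstacle in this argument; the only point requiring care is the measure-theoretic justification of the conditional-independence step described above.
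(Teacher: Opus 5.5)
Your proposal is correct and follows essentially the same route as the paper: decompose $\mathbb{P}(\tilde Y=k\mid X=x)$ over the clean label by total probability, use $\tilde Y\perp X\mid Y$ to replace $\mathbb{P}(\tilde Y=k\mid Y=j,X=x)$ by $T_{jk}$, and then take complements in the definition of $w_k$. Your remark that the identity holds $P_X$-almost everywhere when conditional probabilities are read as regular conditional distributions is a harmless refinement of the paper's pointwise statement.
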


By Proposition~\ref{prop:estimated_weights}, a plug-in estimate of $w_k(x)$ is given by
\begin{equation}\label{localized_weight}
\hat{w}_k(x)=\frac{1-\hat{\mu}_k(x)}{1-\sum_{j=1}^{K} \hat{T}_{jk} \hat{\mu}_j(x)},
\end{equation}
where $\hat{\mu}_k(x)$ is an estimate of $\mu_k(x)$ and $\hat{T}_{ij}$ is an estimate of $T_{ij}$.
If a soft-classifier is trained on dataset with clean labels, its output can be used directly as $\hat{\mu}_k(x)$. If instead the soft-classifier is trained on dataset with noisy labels, its output, denoted by $\hat{\mu}^{\text{noisy}}_k(x)$, is the estimate of $\mathbb{P}(\tilde{Y}=k \mid X=x)$ rather than $\mu_k(x)$.
In this case, if $\mathbf T$ is non-singular, then by Proposition~\ref{prop:estimated_weights}, we have
\begin{equation*}
(\mu_1(x),\dots,\mu_K(x))=(\mathbb{P}(\tilde{Y}=1 \mid X=x),\dots, \mathbb{P}(\tilde{Y}=K \mid X=x))\mathbf{T}^{-1}.
\end{equation*}
This implies that we can take $(\hat{\mu}^{\text{noisy}}_1(x),\dots,\hat{\mu}^{\text{noisy}}_K(x))\hat{\mathbf{T}}^{-1}$ as an estimate of $(\mu_1(x),\dots,\mu_K(x))$.
By plugging $(\hat{\mu}_1(x),\dots,\hat{\mu}_K(x))=(\hat{\mu}^{\text{noisy}}_1(x),\dots,\hat{\mu}^{\text{noisy}}_K(x))\hat{\mathbf{T}}^{-1}$ into \eqref{localized_weight}, we have
\begin{equation}\label{calibrated_localweights}
\hat{w}_k(x)=\frac{1-\sum_{j=1}^{K}\hat{\mu}^{\text{noisy}}_j(x)(\hat{\mathbf{T}}^{-1})_{jk}}{1-\hat{\mu}^{\text{noisy}}_k(x)}.
\end{equation}
The RCS method with estimated weights is summarized in Algorithm~\ref{alg:RCS}.

The estimation of the transition matrix $\mathbf{T}$ is a well-studied problem in learning with noisy labels~\citep{yao2020dual,li2022estimating,zhu2022beyond}. 
In applications such as differential privacy, the matrix $\mathbf{T}$ is known by design~\citep{ghazi2021deep}.
In other applications, the matrix $\mathbf T$ can often be estimated from a small dataset containing samples with clean labels, by assuming the structural form of the contamination model. For example, the randomized response model and its variants only require estimating a small number of noise parameters~\citep{sesia2025adaptive}.
In Appendix~\ref{simu:estimate}, we also provide convenient strategies for estimating $\mathbf{T}$, and numerical results in Section \ref{sec6} further demonstrate the robustness of RCS to estimated or even mis-specified $\mathbf{T}$.

\section{Specializations}\label{sec5}
In this section, we develop specializations of RCS for specific contamination models and for Task 2. Section~\ref{sec:RCS_uniform_noise} focuses on the randomized response model. Section~\ref{sec:cf} provides a robust alternative of \texttt{cfBH} by leveraging RCS, thus tackling Task 2.

\subsection{Selection under Randomized Response Model.}\label{sec:RCS_uniform_noise}

An important contamination type is the randomized response model, which is widely used in differential privacy~\citep{ghazi2021deep}.
The noisy labels are generated by
\begin{equation}\label{randomized_model}
\tilde{Y}_i=Y_i\cdot \mathbb{I}\{\xi_i\geq \epsilon \}+\sum_{k=1}^K k\cdot\mathbb{I}\left\{ \frac{k-1}{K} \epsilon\leq \xi_i < \frac{k}{K}\epsilon\right\}, \quad i\in[n],
\end{equation}
where $\xi_i\overset{i.i.d.}{\sim}\text{Uniform}(0,1)$ and is independent of the calibration and test data.
For this model, a more concise form of RCS can be derived without the need of $w_k(x)$.

Specifically, we denote the empirical-Bayes type estimate of FDP without covariate adjustment by
\begin{equation*}\label{estimate_fdp_no_weighted}
\widetilde{\text{FDP}}_{\mathcal{A}_g}(t):=
\frac{\sum_{k\in\mathcal{A}_g}\frac{|\mathcal{S}_k^{\text{test}}|}{|\mathcal{S}_k^{\text{cal}}|}\sum_{i\in \mathcal{S}_{k}^{\text{cal}}}\mathbb{I}\{ \tilde{Y}_i \neq k,~U_i\leq t\}}{1\vee\sum_{k\in\mathcal{A}_g}\sum_{j\in\mathcal{S}_k^{\text{test}}}\mathbb{I}\{U_{n+j}\leq t\}}.
\end{equation*}
The key insight is that, for the randomized response model with noise rate $\epsilon$,  $\widetilde{\mathrm{FDP}}_{\mathcal{A}_g}(t)$ provides a reasonable estimate of $(1-\epsilon)\text{FDP}_{\mathcal{A}_g}(t)+\epsilon\left(1-1/K\right)$. Therefore, to ensure $\mathrm{FDR}_{\mathcal{A}_g}\leq \alpha_{\mathcal{A}_g}$, we determine the threshold by
\begin{align}\label{uniform_noise_threshold}
\tilde{t}_{\mathcal{A}_g} =
\underset{t\in[0,1]}{\arg\max}\sum_{k\in\mathcal{A}_g}\sum_{j\in\mathcal{S}_k^{\text{test}}}\mathbb{I}\{U_{n+j}\leq t\},
\quad\text{s.t. } \widetilde{\text{FDP}}_{\mathcal{A}_g}(t)\leq\alpha_{\mathcal{A}_g}+\epsilon(1-\alpha_{\mathcal{A}_g}-1/K).
\end{align}
The selection set is determined by $\mathcal{R}_{\mathcal{A}_g}=\{j\in[m]:\hat{Y}_{n+j}\in\mathcal{A}_g,U_{n+j}\leq \tilde{t}_{\mathcal{A}_g}\}$.
The following proposition demonstrates the validity and the optimality of this specialization under the randomized response model.
\begin{proposition}\label{FDR_uniform_label}
Consider the randomized response model with noise rate $\epsilon\in(0,1)$. 
For each $g\in[G]$, let $\mathcal{R}_{\mathcal{A}_g}=\{j\in[m]: \hat{Y}_{n+j}\in\mathcal{A}_g, U_{n+j} \leq \tilde{t}_{\mathcal{A}_g}\}$, where the threshold $\tilde{t}_{\mathcal{A}_g}$ is defined by \eqref{uniform_noise_threshold}.
Let $\mathrm{FDR}_{\mathcal{A}_g}$ and $\mathrm{rPower}_{\mathcal{A}_g}$ be defined by \eqref{group_fdr} and \eqref{power_1}, respectively.
Under the same conditions of Theorem~\ref{theorem2}, we have
\begin{itemize}
\item[(1)] $\limsup_{n,m \to \infty} \mathrm{FDR}_{\mathcal{A}_g} \le \alpha_{\mathcal{A}_g}$.
\item[(2)] $\lim_{n,m \to \infty} \mathrm{rPower}_{\mathcal{A}_g} = \sup_{\tau \in \mathcal{F}_{\mathcal{A}_g}} \mathrm{rPower}_{\mathcal{A}_g}^\infty(\tau)$.
\end{itemize}
\end{proposition}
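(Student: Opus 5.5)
The plan is to reduce Proposition~\ref{FDR_uniform_label} to the same uniform-convergence argument that underlies Theorem~\ref{theorem2}. The key algebraic fact is that, under the randomized response model \eqref{randomized_model}, $\tilde Y$ is independent of $X$ given $Y$, and
\[
\mathbb{P}(\tilde Y\neq k\mid X=x,Y=y)=(1-\epsilon)\mathbb{I}\{y\neq k\}+\epsilon(1-1/K).
\]
Since $U$ and $\hat Y$ are functions of $X$, this gives, for every $k\in\mathcal A_g$ and $t\in[0,1]$,
\[
\mathbb{P}(\hat Y=k,\tilde Y\neq k,U\le t)=(1-\epsilon)\,\mathbb{P}(\hat Y=k,Y\neq k,U\le t)+\epsilon(1-1/K)\,\mathbb{P}(\hat Y=k,U\le t).
\]
Summing over $k\in\mathcal A_g$, the population version of the numerator of $\widetilde{\mathrm{FDP}}_{\mathcal A_g}(t)$ is $(1-\epsilon)V^\infty_{\mathcal A_g}(t)+\epsilon(1-1/K)R^\infty_{\mathcal A_g}(t)$.

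The constraint in \eqref{uniform_noise_threshold} is therefore asymptotically equivalent to
\[
(1-\epsilon)\frac{V^\infty}{R^\infty}+\epsilon\Bigl(1-\frac1K\Bigr)\le \alpha+\epsilon\Bigl(1-\alpha-\frac1K\Bigr).
\]
Rearranging, this is exactly $V^\infty/R^\infty\le\alpha$. So the oracle feasible set for the specialized procedure coincides with $\mathcal F_{\mathcal A_g}$ in \eqref{oracle_threshold}. Equivalently, the specialized procedure is just RCS with constant weights applied to a linearly transformed FDP estimate $(\widetilde{\mathrm{FDP}}-\epsilon(1-1/K))/(1-\epsilon)$ on $\{t:R>0\}$. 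This transformation is strictly increasing, with a fixed slope $1/(1-\epsilon)$ bounded away from zero and infinity.

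Next I prove the stochastic step, mirroring the proof of Theorem~\ref{theorem2}.

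\textbf{Uniform convergence.}
\begin{itemize}
\item For each $k$, the sum $|\mathcal S_k^{\text{cal}}|^{-1}\sum_{i\in\mathcal S_k^{\text{cal}}}\mathbb{I}\{\tilde Y_i\neq k,U_i\le t\}$ converges uniformly in $t$ to $\mathbb{P}(\tilde Y\neq k,U\le t\mid \hat Y=k)$ by Glivenko--Cantelli.
\item The ratios $|\mathcal S_k^{\text{test}}|/m$ converge to $\mathbb{P}(\hat Y=k)$, which is positive by Assumption~\ref{techniqe_condition}(1).
\item The test count $m^{-1}\sum\mathbb{I}\{U_{n+j}\le t\}$ also converges uniformly to $R^\infty_{\mathcal A_g}(t)$.
\end{itemize}
Together these give $\sup_t|\widetilde{\mathrm{FDP}}_{\mathcal A_g}(t)-\widetilde{\mathrm{FDP}}^\infty(t)|\to0$ on any set where $R^\infty_{\mathcal A_g}(t)$ is bounded below.

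\textbf{Threshold consistency.} Let $t^*$ attain (or approach) $R^*$. Using Assumption~\ref{techniqe_condition}(3), pick $t_\delta$ with strict feasibility; this point is eventually feasible for the empirical constraint with probability tending to one. Hence the realized $R^\infty(\tilde t)\ge R^*-\delta$ with high probability.

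Conversely, any $t$ with $R^\infty(t)>R^*+\delta$ violates the oracle constraint by a margin. I expect this margin to require the same compactness and monotonicity argument used in Theorem~\ref{theorem2}, using continuity of $G_{k,l}$ from Assumption~\ref{techniqe_condition}(2). With it, such $t$ is eventually empirically infeasible. Hence $R^\infty(\tilde t)\to R^*$ in probability.

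\textbf{FDR.} On the high-probability event, $\mathrm{FDP}_{\mathcal A_g}(\tilde t)$ equals $V^\infty(\tilde t)/R^\infty(\tilde t)+o_p(1)$ by uniform laws of large numbers on the test set. The constraint, with vanishing slack, gives $V^\infty(\tilde t)/R^\infty(\tilde t)\le\alpha+o_p(1)$. Since FDP is bounded by $1$, dominated convergence yields $\limsup\mathrm{FDR}\le\alpha$.

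\textbf{Power.} The same convergence of $\tilde t$ along the frontier gives rPower converging to $\sup_{\tau\in\mathcal F}\mathrm{rPower}^\infty(\tau)$. This is because $\mathrm{rPower}^\infty$ and $R^\infty$ are both nondecreasing in $t$, so maximizing $R^\infty$ over $\mathcal F$ also maximizes rPower. Again I mirror Theorem~\ref{theorem2}(2).

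The main obstacle is handling non-monotonicity of the ratio $V^\infty/R^\infty$ in $t$. The selected threshold is a global argmax rather than a crossing point, so I will argue via the frontier $R^*$, not via $t$ itself, exactly as in Theorem~\ref{theorem2}.
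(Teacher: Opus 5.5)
Your proposal is correct and follows essentially the same route as the paper. The shared steps are: the population identity $\tilde V^\infty_{\mathcal A_g}(t)=(1-\epsilon)V^\infty_{\mathcal A_g}(t)+\epsilon(1-1/K)R^\infty_{\mathcal A_g}(t)$, which shows the shifted constraint defines exactly $\mathcal F_{\mathcal A_g}$; uniform convergence of the empirical quantities; a strictly feasible anchor $t_\delta$ from Assumption~\ref{techniqe_condition}(3) that keeps the selected mass bounded away from zero; FDR control via the constraint with vanishing slack; and the nestedness/argmax argument of Theorem~\ref{theorem2} for power.

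The only real variation is the upper bound on $R^\infty_{\mathcal A_g}(\tilde t_{\mathcal A_g})$. You obtain it from a uniform positive margin of $V^\infty_{\mathcal A_g}-\alpha_{\mathcal A_g}R^\infty_{\mathcal A_g}$ on the compact set $\{t:R^\infty_{\mathcal A_g}(t)\ge R^*_{\mathcal A_g}+\delta\}$, which is valid by the continuity in Assumption~\ref{techniqe_condition}(2), whereas the paper uses an accumulation-point argument. Your resulting convergence $R^\infty_{\mathcal A_g}(\tilde t_{\mathcal A_g})\to R^*_{\mathcal A_g}$ yields the power claim because increments of $T^\infty_{\mathcal A_g}$ are dominated by those of $R^\infty_{\mathcal A_g}$. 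Your aside calling the procedure ``RCS with constant weights'' is loose but plays no role in the argument.
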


\subsection{A Robust Alternative to \texttt{cfBH}}\label{sec:cf}
In this section, we develop a robust alternative of \texttt{cfBH} by reducing Task 2 to a binary classification instance of RCS.
As described in \cite{jin2023selection}, $c_j$ is usually either a fixed constant for $j\in[m]$ or a random variable such that $\{(X_{n+j},Y_{n+j},c_j)\}_{j=1}^m$ are i.i.d. triples. For clarity of presentation, we consider $c_j\equiv c$ for a fixed threshold $c$, and the method remains valid for the random case.

Let $L_i=1+\mathbb{I}\{Y_i >c\}$ and $\tilde{L}_i=1+\mathbb{I}\{\tilde{Y}_i >c\}$ for $i\in[n+m]$.
Then the null hypothesis $H_j:Y_{n+j}\leq c$ is equivalent to $H_j:L_{n+j}\neq 2$, $j\in[m]$.
Let $\hat{L}_i\in\{1,2\}$ be an estimate of $L_i$ (e.g., constructed by $\hat{L}_i=1+\mathbb{I}\{\hat Y_i >c\}$).
Since candidates with $Y_{n+j}\leq c$ are of no interest, we simply set $\hat{L}_i\equiv 2$ for each $i\in[n+m]$ to gain power. Then the goal is to test $H_j:\hat{L}_{n+j} \neq L_{n+j}$, $j\in[m]$, where the proposed RCS can be applied. Specifically, the localized weight for class $2$ equals
\begin{equation}\label{local_weight_continuous}
w(X_i):=\frac{\mathbb{P}(L_i\neq 2|X=X_i)}{\mathbb{P}(\tilde{L}_i\neq 2|X=X_i)}=\frac{\mathbb{P}(Y_i\leq c \mid X=X_i)}{\mathbb{P}(\tilde{Y}_i\leq c \mid X=X_i)}.
\end{equation}
Let $\mathcal{A}_2=\{2\}$.
By \eqref{estimate_fdp}, the estimated value of FDP under threshold $t$ is 
\begin{align*}
\widehat{\text{FDP}}_{\mathcal{A}_2}(t)
&=\frac{m}{n}\cdot \frac{\sum_{i=1}^n w(X_i)\mathbb{I}\{\tilde{Y}_i\leq c, U_i\leq t\}}{1\vee\sum_{j=1}^m \mathbb{I}\{U_{n+j}\leq t\}}.
\end{align*}
The data-dependent threshold $\hat{t}$ is determined by 
\begin{equation}\label{rcs_cs_threshold}
\hat{t}=\max\{t\in [0,1]:\widehat{\text{FDP}}_{\mathcal{A}_2}(t)\leq \alpha\}.
\end{equation}
Then the rejection set is given by $\mathcal{R}=\mathcal{R}_{\mathcal{A}_2}=\{j\in[m]:U_{n+j}\leq \hat{t}\}$.

We note that the resulting group-wise FDP coincides with the $\mathrm{FDP}$ defined in \eqref{task_2_fdr}:
\begin{align*}
\text{FDP}=\frac{\sum_{j=1}^m \mathbb{I}\{Y_{n+j}\leq c,j\in\mathcal{R} \}}{1\vee |\mathcal{R}|}=\frac{\sum_{j=1}^m\mathbb{I}\{L_{n+j}=1, j\in\mathcal{R}_{\mathcal{A}_2}\}}{1\vee |\mathcal{R}_{\mathcal{A}_2}|}=\text{FDP}_{\mathcal{A}_2}.
\end{align*}
Similarly, the finite-sample power for Task 2 equals the recovery power for $\mathcal{A}_2$:
\begin{equation*}
\text{Power}=\mathbb{E}\left[\frac{\sum_{j=1}^{m}\mathbb{I}\{Y_{n+j}>c,j\in\mathcal{R}\}}{1\vee \sum_{j=1}^{m} \mathbb{I}\{Y_{n+j}>c\}}\right]
=\mathbb{E}\left[\frac{\sum_{j=1}^m\mathbb{I}\{L_{n+j}=2, j\in\mathcal{R}_{\mathcal{A}_2}\}}{1\vee \sum_{j=1}^m\mathbb{I}\{L_{n+j}=2\}}\right]=\mathrm{rPower}_{\mathcal{A}_2}.
\end{equation*}
For this binary reduction, the oracle feasible set is 
\[
\mathcal{F}_{\mathcal{A}_2}=\left\{t \in [0,1]: \mathbb{P}(U\le t)>0,\frac{\mathbb{P}(Y \leq c,U\leq t)}{\mathbb{P}(U\le t)}\le \alpha\right\}.
\]
Corollary~\ref{cor:cfBH_robust} demonstrates the validity and optimality of the RCS method for Task 2.

\begin{corollary}\label{cor:cfBH_robust}
Consider $(X_i,Y_i,\tilde{Y}_i)\overset{i.i.d.}{\sim}(X,Y,\tilde{Y})$ with $\mathbb{P}(Y>c)\in(0,1)$, $i\in[n+m]$.
Let $w(X_i)$ and $\hat{t}$ be defined by \eqref{local_weight_continuous} and \eqref{rcs_cs_threshold}, respectively, and let $\mathcal{R}=\{j\in[m]:U_{n+j}\leq \hat{t}\}$.
Suppose that (1) $
\mathbb P(U\le t \mid Y\le c)$ and $\mathbb P(U\le t \mid Y>c)$ are continuous on $t\in[0,1]$ whenever the conditioning event has a positive probability; (2) for every $\eta>0$, there exists $t_\eta\in[0,1]$ such that $\mathbb P(Y\le c,U\le t_\eta)<\alpha\mathbb P(U\le t_\eta)$ and $\mathbb P(U\le t_\eta)\ge R_{\mathcal A_2}^*-\eta$, where $R_{\mathcal A_2}^*=\sup_{t\in\mathcal F_{\mathcal A_2}}\mathbb P(U\le t)>0$; (3) the weight $w(\cdot)$ is uniformly bounded. Then we have
\begin{itemize}
\item[(1)] $\limsup_{n,m\to\infty}\mathrm{FDR}\le \alpha$.
\item[(2)] $\lim_{n,m\to\infty}\mathrm{Power}=\sup_{\tau\in\mathcal F_{\mathcal A_2}}\mathbb{P}(Y>c,U\leq \tau)/\mathbb{P}(Y>c)$.
\end{itemize}
\end{corollary}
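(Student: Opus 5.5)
The plan is to obtain Corollary~\ref{cor:cfBH_robust} as a direct instance of Theorem~\ref{theorem2}. We apply it to the binary relabeled problem: the triples are $(X_i,L_i,\tilde L_i)$, the label space is $[2]$, the classifier is the constant $\hat f\equiv 2$, and the partition has the single group of interest $\mathcal A_2=\{2\}$ (with $\mathcal A_1=\{1\}$ a vacuous group that never receives any predictions). Since $(X_i,Y_i,\tilde Y_i)$ are i.i.d. and $L_i,\tilde L_i$ are fixed measurable functions of $Y_i,\tilde Y_i$, the triples $(X_i,L_i,\tilde L_i)$ are i.i.d. The constant classifier and the score $U$ are fixed, as Theorem~\ref{theorem2} requires. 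The only class needing a localized weight is $k=2$, and $w_2(x)=\mathbb P(L\neq2\mid X=x)/\mathbb P(\tilde L\neq 2\mid X=x)$ is exactly $w(x)$ in \eqref{local_weight_continuous}. This weight is bounded by condition (3). For class $k=1$ the weight never enters the procedure, because $\mathcal S_1^{\text{cal}}=\mathcal S_1^{\text{test}}=\emptyset$. We can therefore set it to any bounded value, for instance by restricting Theorem~\ref{theorem2} to the group $\mathcal A_2$ only.

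The next step is to check that the RCS ingredients reduce to the displayed ones. Since $\hat L\equiv 2$, we have $\mathcal S_2^{\text{cal}}=[n]$ and $\mathcal S_2^{\text{test}}=[m]$, so the factor $|\mathcal S_2^{\text{test}}|/|\mathcal S_2^{\text{cal}}|$ equals $m/n$. The event $\{\tilde L_i\neq 2\}$ is the same as $\{\tilde Y_i\le c\}$. Hence \eqref{estimate_fdp} becomes $\widehat{\text{FDP}}_{\mathcal A_2}(t)$ as displayed, and \eqref{threshold} becomes \eqref{rcs_cs_threshold}. On the population side, $R^\infty_{\mathcal A_2}(t)=\mathbb P(U\le t)$ and $V^\infty_{\mathcal A_2}(t)=\mathbb P(L\neq 2,U\le t)=\mathbb P(Y\le c,U\le t)$. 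So $\mathcal F_{\mathcal A_2}$ and $R^*_{\mathcal A_2}$ coincide with the objects in the corollary.

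We then verify Assumption~\ref{techniqe_condition} for $g$ corresponding to $\mathcal A_2$.
\begin{itemize}
\item[(i)] $\mathbb P(L\in\mathcal A_2)=\mathbb P(Y>c)>0$ by hypothesis, and $\mathbb P(\hat L=2)=1$.
\item[(ii)] With $\hat L\equiv 2$, we get $G_{k,2}(t)=\mathbb P(U\le t\mid L=k)$. For $k=1$ this is $\mathbb P(U\le t\mid Y\le c)$ and for $k=2$ it is $\mathbb P(U\le t\mid Y>c)$, which are continuous by condition (1) whenever the conditioning event has positive probability.
\item[(iii)] This is exactly condition (2).
\end{itemize}
Theorem~\ref{theorem2}(1), combined with the identity $\text{FDP}=\text{FDP}_{\mathcal A_2}$ shown before the corollary, then gives $\limsup\mathrm{FDR}\le\alpha$.

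For part (2), Theorem~\ref{theorem2}(2) combined with $\text{Power}=\mathrm{rPower}_{\mathcal A_2}$ gives the limit $\sup_{\tau\in\mathcal F_{\mathcal A_2}}\mathrm{rPower}^\infty_{\mathcal A_2}(\tau)$. Here $\mathrm{rPower}^\infty_{\mathcal A_2}(\tau)=\mathbb P(\hat L=L=2,U\le\tau)/\mathbb P(L=2)=\mathbb P(Y>c,U\le\tau)/\mathbb P(Y>c)$, which is the claimed expression.

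The main obstacle is purely formal: making sure that Theorem~\ref{theorem2} tolerates a degenerate classifier with an empty predicted class and a bounded weight requirement only on the relevant class. The plan is to note that its proof treats each group separately, and only uses $w_k$ and Assumption~\ref{techniqe_condition} for $k\in\mathcal A_g$. If that reading fails, the fallback is to define $w_1\equiv 1$ artificially, which affects nothing in the procedure. The remark that random $c_j$ with i.i.d. $(X_{n+j},Y_{n+j},c_j)$ is handled the same way, by absorbing $c_j$ into the covariate, can be noted but is not needed here.
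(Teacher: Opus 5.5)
Your proposal is correct and takes the same route as the paper, whose proof simply verifies that the binary relabeling $(X_i,L_i,\tilde L_i)$ with $\hat L\equiv 2$ and $\mathcal A_2=\{2\}$ satisfies the hypotheses of Theorem~\ref{theorem2}, then uses the identities $\mathrm{FDP}=\mathrm{FDP}_{\mathcal A_2}$ and $\mathrm{Power}=\mathrm{rPower}_{\mathcal A_2}$. Your per-group reading, in which the proof of Theorem~\ref{theorem2} uses $w_k$ and Assumption~\ref{techniqe_condition} only for $k\in\mathcal A_g$, is the right way to handle the empty class $1$, a point the paper leaves implicit; the fallback of setting $w_1\equiv 1$ is unnecessary, and not strictly legitimate because $w_1$ is fixed by \eqref{localized_weights}.
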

The proof follows by verifying that the assumptions of Theorem~\ref{theorem2} are satisfied.

\paragraph{Estimation of $w(x)$.} We first consider the class-conditional label contamination model on $(X_i,L_i,\tilde{L}_i)$, under which the strategy in Section~\ref{estimate_w} can be used directly to estimate $w(x)$. Specifically, let $\mathbf{T}^{(L)}\in\mathbb{R}^{2\times2}$ with $T_{jk}^{(L)}=\mathbb{P}(\tilde{L}=k \mid L=j)$. One can use the noisy training dataset to obtain an estimate of $\tilde{\nu}_k(x):=\mathbb{P}(\tilde{L}=k \mid X=x)$, denoted by $\hat{\mathbb{\nu}}^{\text{noisy}}_k$, and then estimate $w(x)$ by 
\[
\hat w(x) = \frac{ 1-\sum_{j=1}^{2} \hat\nu^{\mathrm{noisy}}_j(x)
\bigl[(\hat T^{(L)})^{-1}\bigr]_{j2}
}{
\hat\nu^{\mathrm{noisy}}_1(x)
},
\]
where $\hat{\mathbf{T}}^{(L)}$ is an estimate of $\mathbf{T}^{(L)}$. For general contamination models, we let $a_\ell(x):=\mathbb{P}(\tilde{L}=1 \mid L=\ell,X=x),~\ell \in\{1,2\}$.
By straightforward calculation, we have $w(x)=(1-a_2(x)/\tilde{\nu}_1(x))/(a_1(x)-a_2(x))$. Let $\hat{a}_\ell(x)$ be an estimate of $a_\ell(x)$, $\ell\in\{1,2\}$, which can be obtained through several methods. Then a plug-in estimator of $w(x)$ is
\begin{equation*}
\hat{w}(x)=\frac{\hat{\nu}_1^{\text{noisy}}(x)-\hat{a}_2(x)}{(\hat{a}_1(x)-\hat{a}_2(x))\hat{\nu}_1^{\text{noisy}}(x)}.
\end{equation*}
In Appendix~\ref{task_2_appendix_2}, we also provide a convenient and detailed strategy to estimate $w(x)$.

\section{Numerical Experiments}\label{sec6}
We evaluate the performance of RCS on simulated data for two tasks in Section~\ref{sec:classification} and Section~\ref{sec:regression}, respectively. We aim to show that, under contaminated calibration data, RCS maintains FDR control and can be much more powerful than existing baselines. We also aim to demonstrate that RCS is robust to bounded and estimated contamination models, even when the structure of the contamination model is mis-specified.

\subsection{Simulations: Selecting Correctly Classified Candidates}\label{sec:classification}
We focus on the task of selecting true classified candidates.
We investigate the performance of RCS under known, bounded, and estimated label contamination models in Sections~\ref{task1:case1}, \ref{task1:case2}, and \ref{task1:case3}, respectively.
For all experiments, we set $\mathbb{P}(Y=k)=1/K$, $k\in[K]$.
The conditional distribution follows $X|Y=k \sim N(m_k,I_p)$ with $m_k=3k \times1_{p} /(2p^{1/4})$, $p=20$, $k\in[K]$.
We consider two label contamination models: the randomized response model and the asymmetric label noise model.
For the randomized response model, matrix $\mathbf{T}$ is given by $T_{kk}=1-\epsilon+\epsilon/K$ and $T_{kj}=\epsilon/K$ for $j\neq k,k\in[K]$. For the asymmetric label noise model, the transition matrix $\mathbf{T}$ is specified by $T_{kk}=1-\epsilon$, $T_{k,k^+}=\epsilon/2$, and $T_{k,j}=\epsilon/(2K-4)$ for $j\notin\{k,k^+\}$, where $k^+=+(k\bmod K)$, $k\in[K]$.
For the sizes of $\mathcal{D}_{\text{cal}}$ and $\mathcal{D}_{\text{test}}$, we set $n=m=Kn_{\text{base}}$. For the base classifier $\hat{f}$, we employ a LightGBM~\citep{ke2017lightgbm} trained on $2000K$ independent samples with contaminated labels generated as above.
Specifically, let $\hat{f}(x)=\arg\max_{k\in[K]} \hat{\mu}_k^{\text{noisy}}(x)$, where $\hat{\mu}_k^{\text{noisy}}(x)$ is the class-probability estimate produced by LightGBM.
We compare RCS with PSP~\citep{sun2025unified} under the same classifier $\hat{f}$ and uncertainty score $U(x)=1-\max_{k\in[K]} \hat{\mu}_k^{\text{noisy}}(x)$. For the RCS method, the localized weight is computed as in \eqref{calibrated_localweights}. We also include the RCS method with oracle weights for comparison.
We note that for the randomized response model, the RCS framework is simplified, where $w_k(x)$ is not used. In this case, we only present the RCS method.
All plotted results are averages over 500 independent simulation trials.

\subsubsection{Simulations under known label contamination models}\label{task1:case1}
We first consider the case of known label contamination models. For both the randomized response model and the asymmetric label noise, we set $K=4$ and set the target level of $\text{FDR}_{\text{o}}$ as $0.1$, while varying $n_{\text{base}}$ over $\{500,1000,3000,5000\}$ and the noise level $\epsilon$ over $\{0,0.05,0.1,0.15,0.20\}$.
The results of the randomized response model and the asymmetric noise are presented in Figure \ref{0_uni_overall} and Figure \ref{0_asy_overall}, respectively.

\begin{figure}[ht]
\centering
    \includegraphics[width=0.8\textwidth]{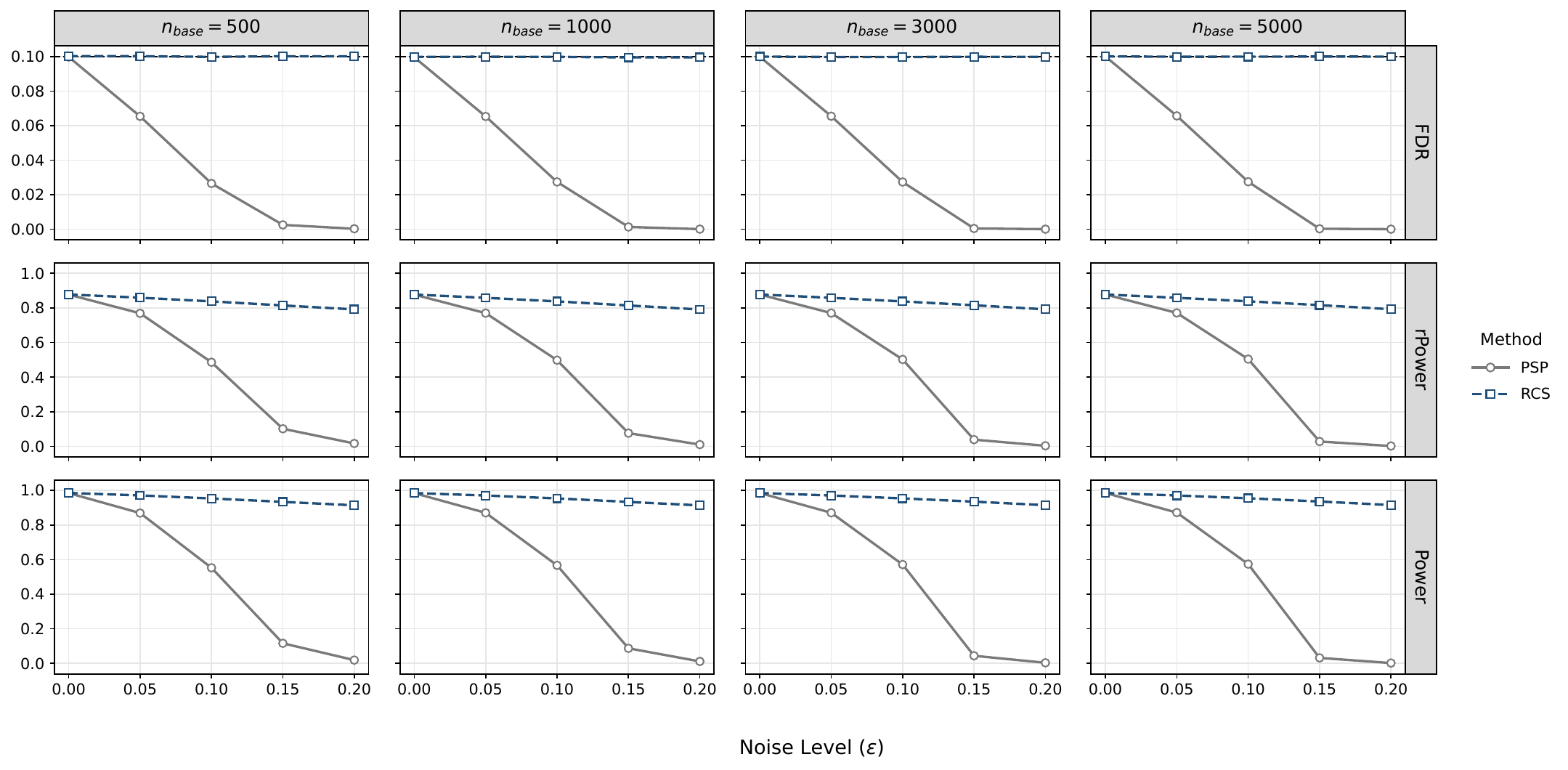}
    \caption{Results under a known randomized response model in overall classification.}
    \label{0_uni_overall}
\end{figure}

\begin{figure}[ht]
\centering
    \includegraphics[width=0.85\textwidth]{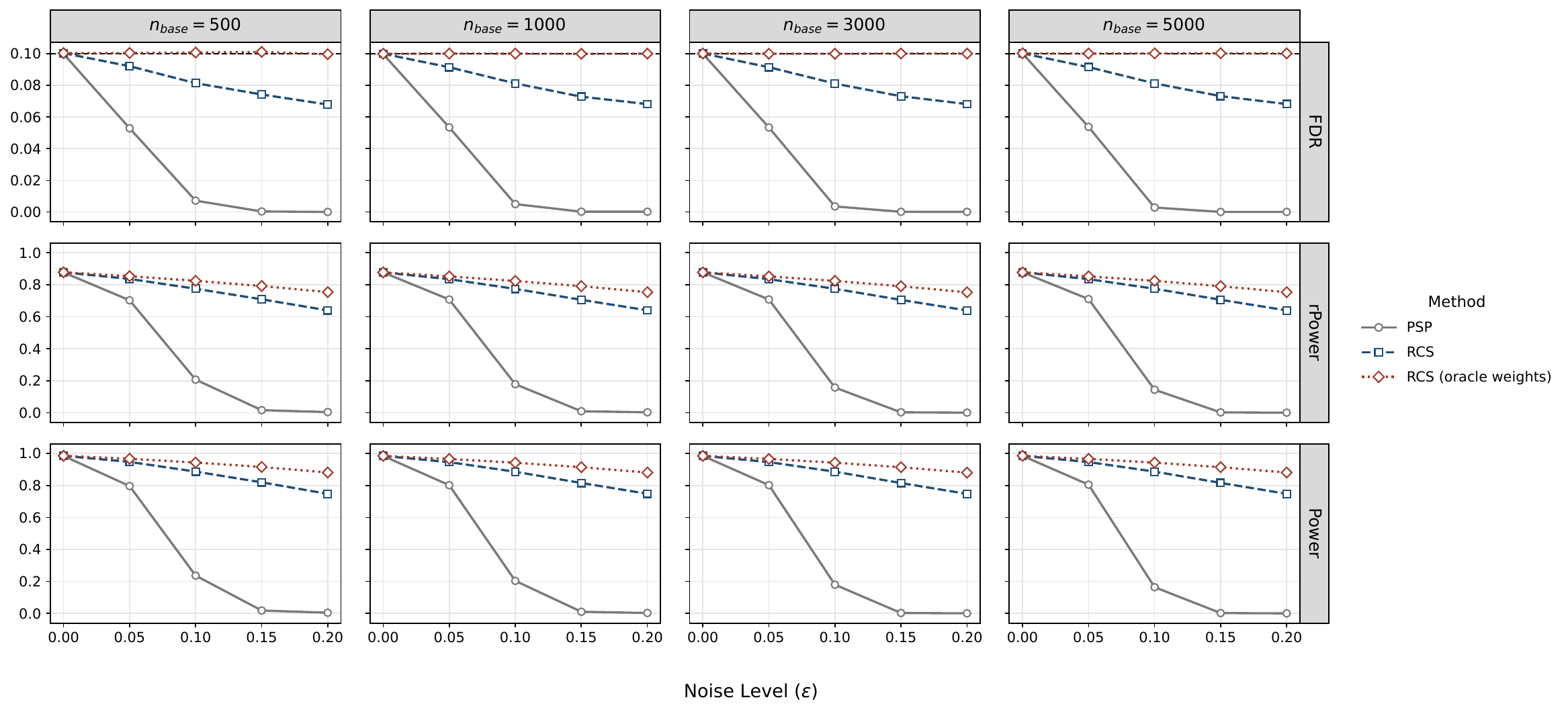}
    \caption{Results under a known asymmetric label noise in overall classification.}
    \label{0_asy_overall}
\end{figure}

By Figure \ref{0_uni_overall}, PSP quickly loses power when the noise level $\epsilon$ of the randomized response model increases. In contrast, the proposed \texttt{RCS} always maintains an FDR close to the nominal level and remains powerful for $\epsilon>0$. For the asymmetric label noise, the above results still hold as shown in Figure \ref{0_asy_overall}. Specifically, \texttt{RCS (oracle weights)} has the highest power in all settings, and \texttt{RCS} is much more powerful than PSP.
These results support the effectiveness of the proposed RCS framework.
It is also observed that \texttt{RCS} performs similarly across different values of $n_{\text{base}}$, implying that \texttt{RCS} can perform well under finite sample sizes, despite its theoretical properties being established in an asymptotic sense.

\begin{figure}[ht]
\centering
    \includegraphics[width=0.85\textwidth]{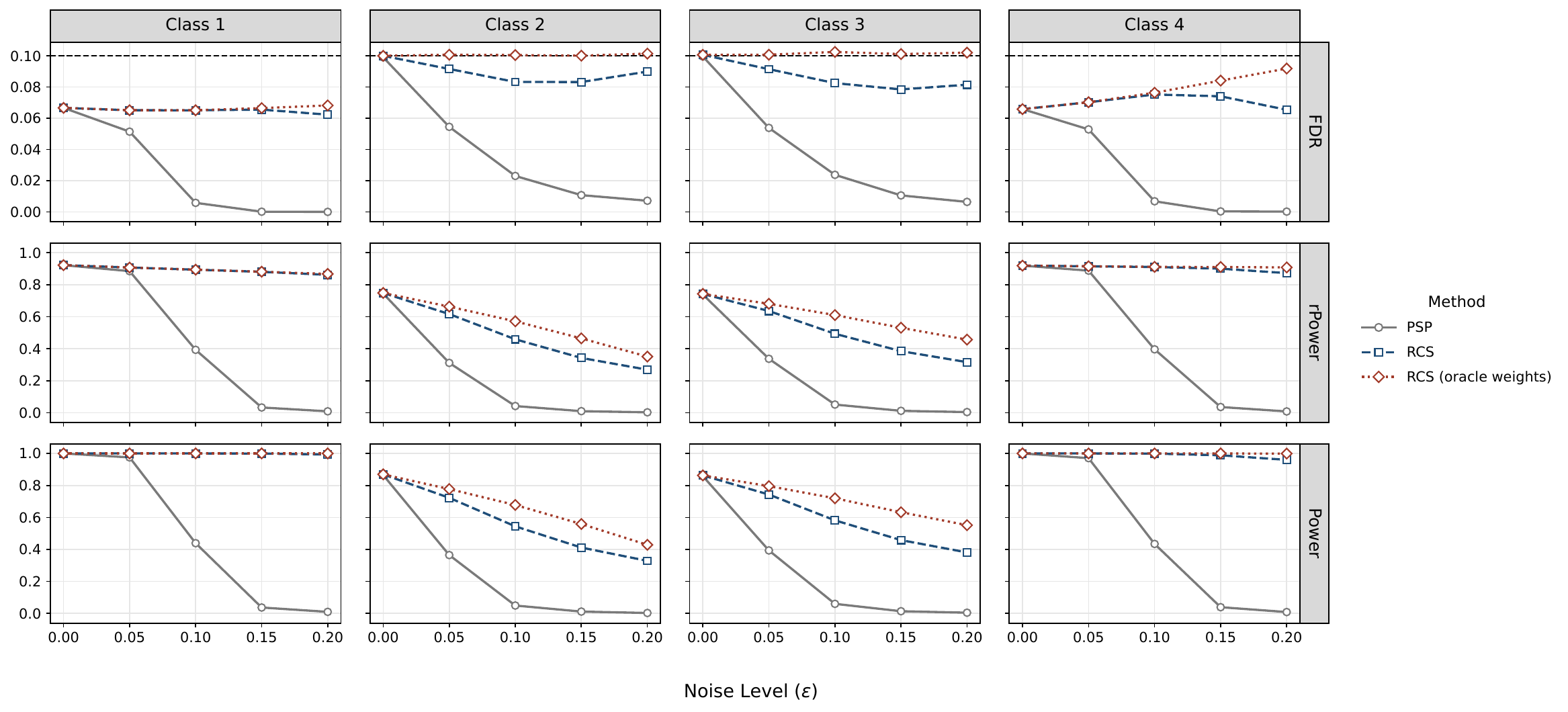}
    \caption{Results under a known asymmetric label noise in class-wise classification.}
    \label{0_asy_classwise}
\end{figure}

To demonstrate that the RCS framework can achieve class-wise FDR control under label noise, we set $\mathcal{A}_k=\{k\}$ and $\alpha_{\mathcal{A}_k}=0.1$ for all $k\in[K]$. Figure~\ref{0_asy_classwise} presents the performance under the asymmetric label noise in the class-wise classification problem with $n_{\text{base}}=1000$. It shows that all methods achieve valid class-wise FDR control across different values of $\epsilon$, and for each class, RCS methods achieve consistently higher power than PSP.

\subsubsection{Simulations under bounded label contamination models}\label{task1:case2}

\begin{figure}[ht]
\centering
    \includegraphics[width=0.8\textwidth]{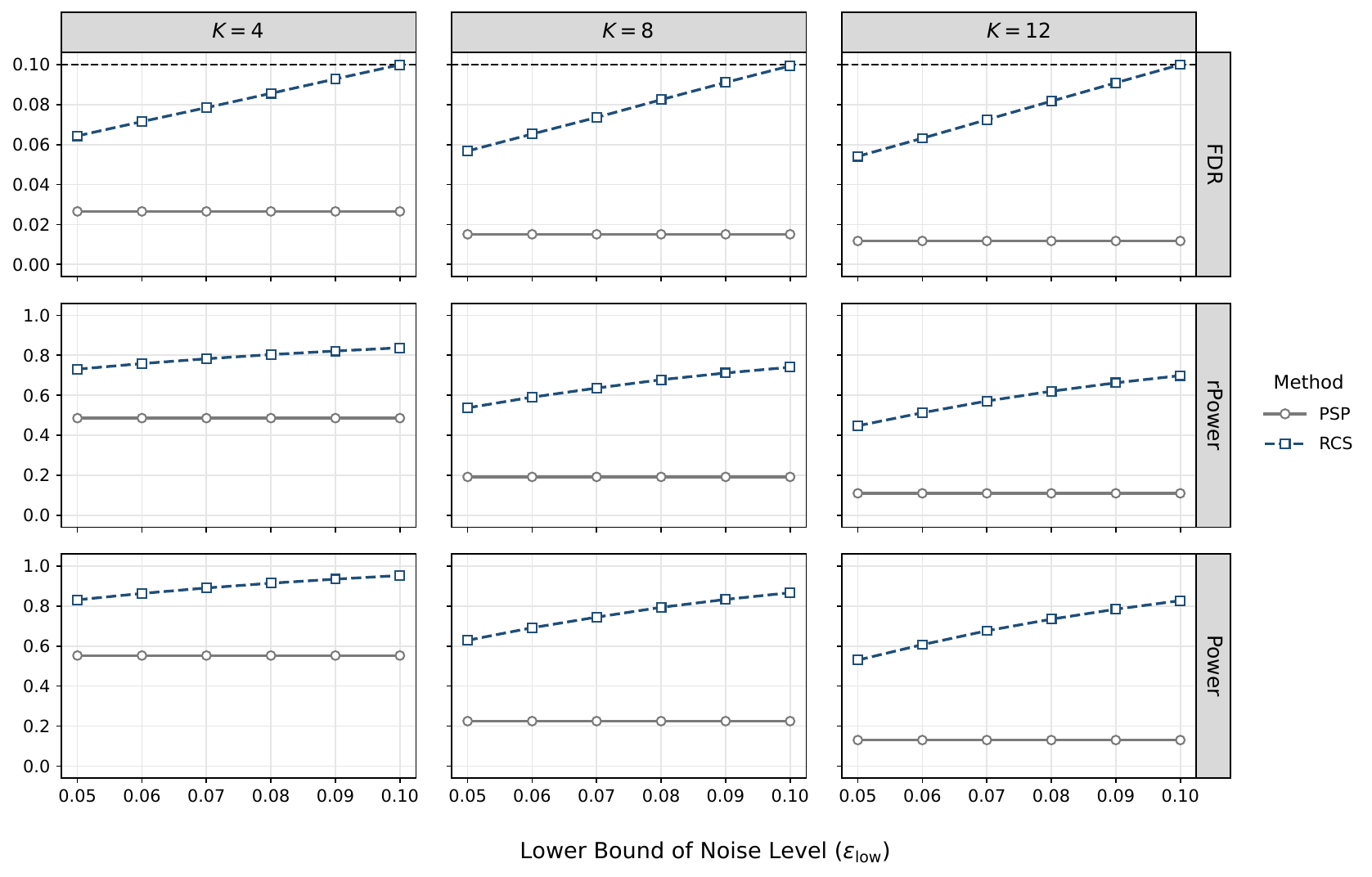}
    \caption{Performance of methods under a randomized response model ($\epsilon=0.1$) with bounded $\epsilon_{\text{low}}$ in overall classification.}
    \label{1_uni_overall}
\end{figure}

In this section, we no longer assume perfect knowledge of the transition matrix $\mathbf{T}$, but focus on the performance of bounded label contamination models. As in \cite{sesia2025adaptive}, we consider the randomized response model, where a lower bound of its noise level $\epsilon$ is known. Specifically, we fix $\epsilon=0.1$ and $n_{\text{base}}=500$, and apply \texttt{RCS} using assumed lower bounds $\epsilon_{\text{low}}\in\{0.05,0.06,0.07,0.08,0.09,0.10\}$. We also vary $K$ in $\{4,8,12\}$. Results are presented in Figure~\ref{1_uni_overall}. We see that \texttt{RCS} maintains FDR control in all settings. The power of \texttt{RCS} increases as $\epsilon_{\text{low}}$ gets closer to the ground truth $\epsilon=0.1$.
We also evaluate the performance under the asymmetric noise model and obtain the same conclusions; see Figure~\ref{1_asym_overall} in Appendix~\ref{simu:bounded_asymmetric}.

\subsubsection{Robustness to model estimation and mis-specification}\label{task1:case3}

We next investigate a more practical setting in which the contamination model is not directly available and must be estimated from an independent model-fitting dataset.
We first consider the model estimation under a correctly specified randomized response model.
Let $\epsilon=0.1$, $K=4$, and $n_{\mathrm{base}}=500$.
Following the setting of \cite{sesia2025adaptive}, we estimate $\epsilon$ by leveraging clean fitting and noisy fitting samples that are independent of the calibration and test datasets, with per-class sample sizes being $n_{\text{clean}}^{\text{fit}}$ and $n_{\text{noisy}}^{\text{fit}}$, respectively.
Therefore, the noisy samples used to train $\hat{f}$ can be reused here.
Let $n_{\text{clean}}^{\text{fit}}\in\{50,100,200,500,1000\}$ and $n_{\text{noisy}}^{\text{fit}}\in\{200,500,2000\}$.
For each combination of sample sizes, we consider two versions of RCS: \texttt{RCS (plug-in $\mathbf{T}$)} and \texttt{RCS (CI-lower $\mathbf{T}$)}, which estimate the matrix $\mathbf{T}$ through a plug-in method and a $95\%$ one-sided bootstrap lower confidence bound, respectively; see Appendix~\ref{simu:estimate} for details. RCS with oracle $\mathbf{T}$ and PSP are also included for comparison.

\begin{figure}[ht]
\centering
\includegraphics[width=0.8\textwidth]{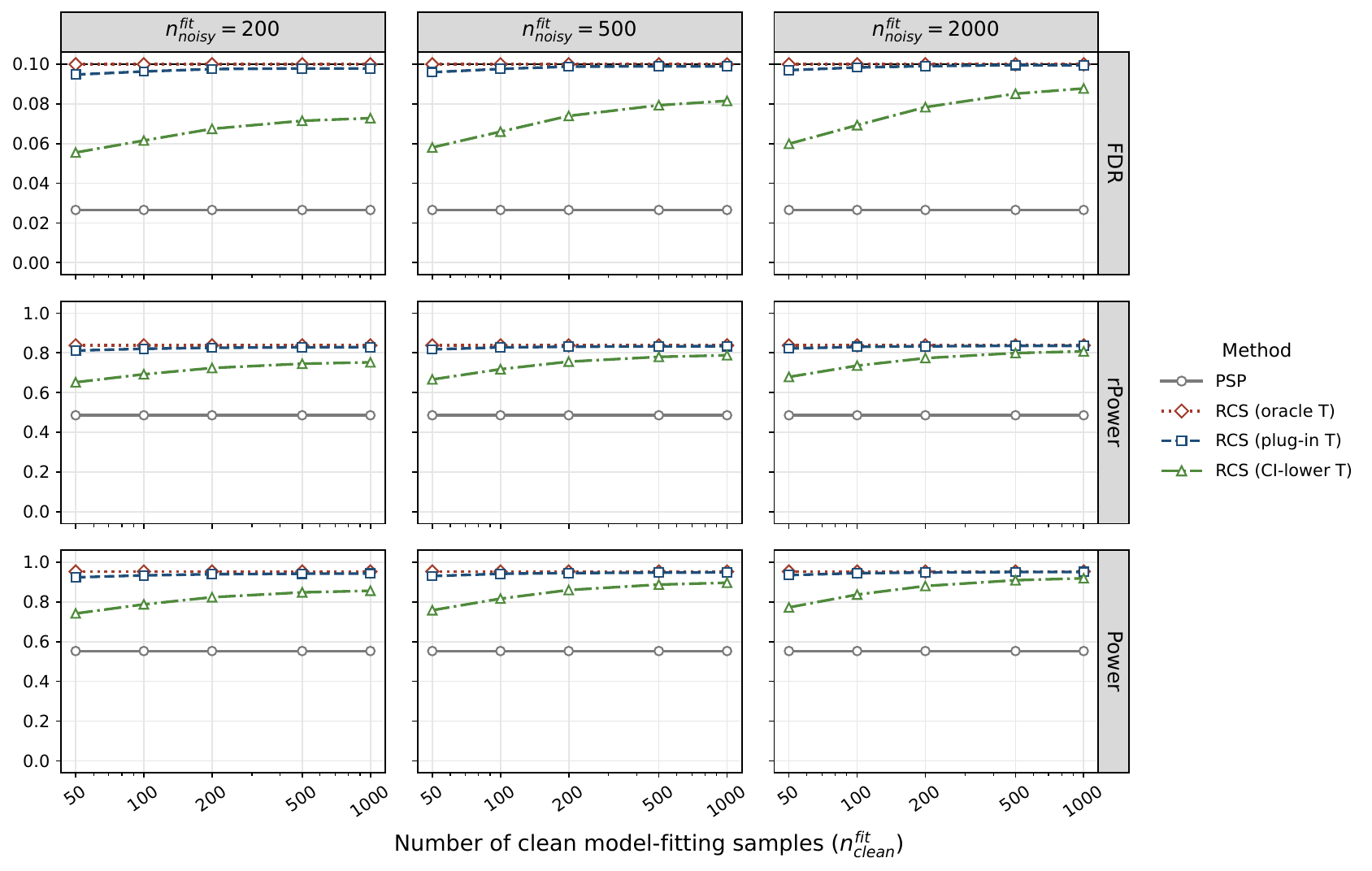}
    \caption{Robustness to estimated randomized response models.}
    \label{es_uni_overall}
\end{figure}

Results are presented in Figure \ref{es_uni_overall}. We observe that all RCS-based methods control FDR below $0.1$ and achieve significantly higher power than PSP, even for small $n_{\text{clean}}^{\text{fit}}$ and $n_{\text{noisy}}^{\text{fit}}$.
The power of \texttt{RCS (CI-lower $\mathbf{T}$)} increases as $n_{\text{clean}}^{\text{fit}}$ increases. This demonstrates the effectiveness of RCS under the estimated label contamination model.

We further consider a more challenging setting, where the structure of the contamination model is unknown or mis-specified. Consider $K=4$ and $n_{\mathrm{base}}=500$.
The four classes are divided into two blocks: $\{1,2\}$ and $\{3,4\}$, and label contamination occurs only within the same block. Let
\begin{equation*}
T_{jk}^{\mathrm{block}}
=
(1-\epsilon)\mathbb I\{j=k\}
+
\frac{\epsilon}{2}
\left(
\mathbb I\{j,k\in\{1,2\}\}
+
\mathbb I\{j,k\in\{3,4\}\}
\right),
\quad j,k\in[4].
\label{eq:block_transition}
\end{equation*}
An additional i.i.d. sample of size $Kn_{\text{pair}}^{\text{fit}}$ containing both noisy and true labels is used to estimate $\mathbf{T}$.
We consider two versions of RCS: \texttt{RCS (plug-in $\mathbf{T}$)}, which estimates the matrix $\mathbf{T}$ through a plug-in method, and \texttt{RCS (mis-specified $\mathbf{T}$)}, which directly estimates $\epsilon$ by assuming the randomized response model; see Appendix~\ref{simu:estimate} for details. The \texttt{RCS (oracle $\mathbf{T}$)} and PSP methods are also compared.

\begin{figure}[ht]
\centering
\includegraphics[width=0.8\textwidth]{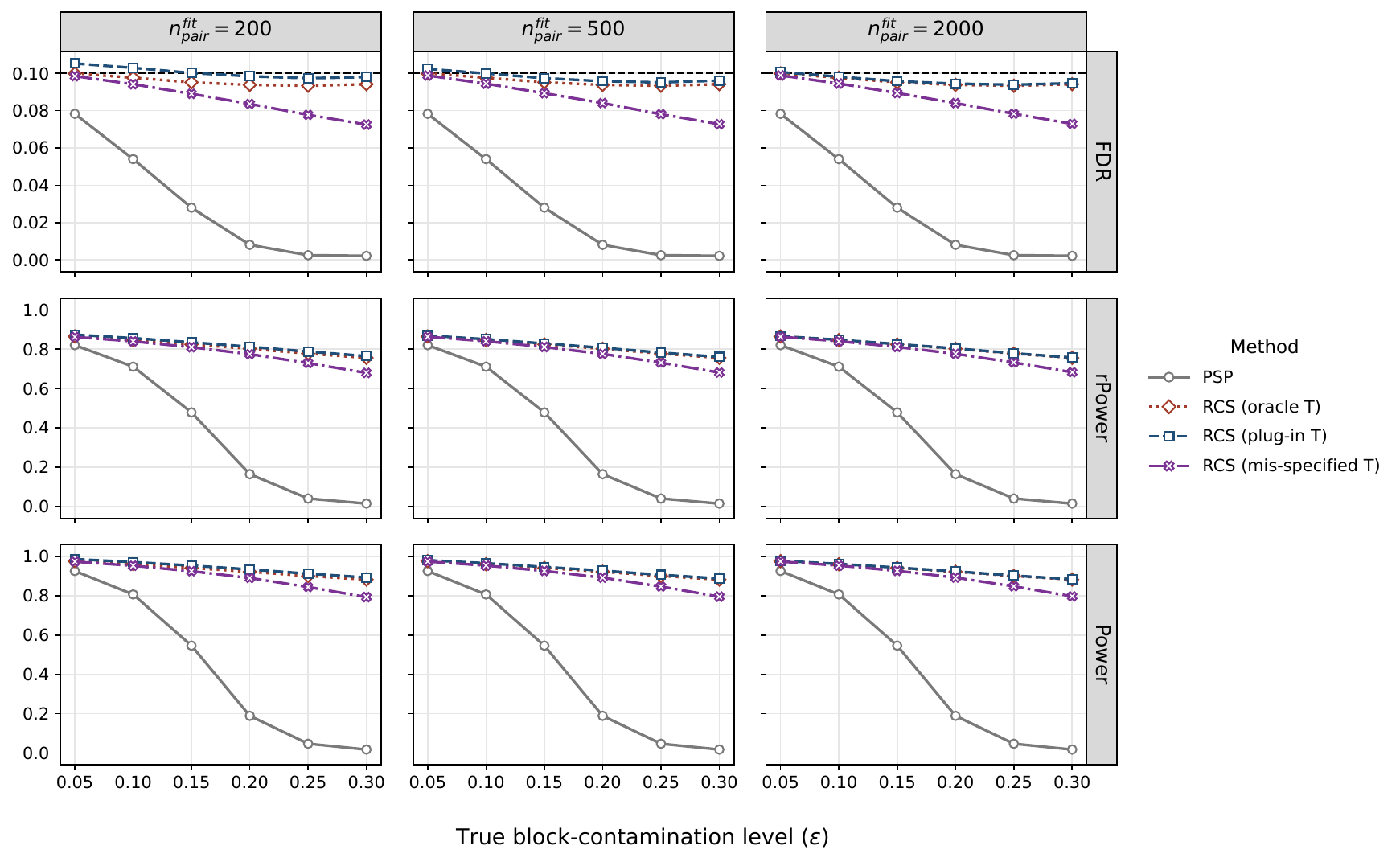}
    \caption{Robustness to structural mis-specification of the contamination model.}
    \label{mis_speci}
\end{figure}

We vary $\epsilon \in \{0.05,0.10,0.15,0.20,0.25,0.30\}$ and $n_{\text{pair}}^\text{fit}\in\{200,500,2000\}$ and present results in Figure~\ref{mis_speci}. We observe that the PSP method quickly loses power as $\epsilon$ increases. In contrast, all RCS methods remain powerful and control FDR well across different values of $\epsilon$. \texttt{RCS (plug-in $\mathbf{T}$)} and \texttt{RCS (oracle $\mathbf{T}$)} exhibit similar performance, while \texttt{RCS (mis-specified $\mathbf{T}$)} only suffers a slight power loss compared with \texttt{RCS (oracle $\mathbf{T}$)}.

 \subsection{Simulations: Selecting Candidates with Responses Exceeding a Threshold}\label{sec:regression}

We evaluate RCS for selecting candidates with large responses.
We generate $X_i\sim \mathrm{Unif}[-1,1]^{20}$ and $Y_i=\mu(X_i)+\epsilon_i$,  
where $\mu(x)=2(x_1x_2+x_3^2 +e^{x_4-1}-1)$ with $X_i=x:=(x_1,\dots,x_{20})$ and $\epsilon_i|X_i=x\sim N(0,\sigma(x)^{2})$ with $\sigma(x)=\max\{0.0625\mu(x)^2\mathbb{I}\{|\mu(x)|<2\}+0.125|\mu(x)|\mathbb{I}\{|\mu(x)|\geq1\},0.05\}$.
We consider two classical contamination models in Sections~\ref{Constant_status_transition_contamination} and \ref{Additive_response_noise}, respectively. For all experiments, the task is to select candidates with $Y_{n+j}>c$ among all $j\in[m]$ under the target FDR level $\alpha=0.1$ and $c=0$, where contaminated data are used for both training and calibration with $n_{\text{train}}=n=m=2000$. 

We take the most powerful version of \texttt{cfBH}, named $\texttt{BH\_clip}$, as a baseline in our experiments, where the nonconformity score satisfies $V(x,y)=100\cdot \mathbb{I}\{y>0\}-\hat{\mu}(x)$ and the regressor $\hat{\mu}(x)$ is trained using gradient boosting. For RCS-type methods, we take the version described in Section~\ref{sec:cf}. To ensure a fair comparison, we directly adopt $U(x)=-\hat{\mu}(x)$ as the uncertainty score. The same training dataset is also used to train $\hat\nu_1^{\text{noisy}}(x)$ for estimating weight $w(x)$.
All results are averages computed over 500 independent trials.

\subsubsection{Constant status-transition contamination}\label{Constant_status_transition_contamination}

We first consider constant status-transition contamination models. Recall that $L=1+\mathbb{I}\{Y>c\}$.
We generate a noisy label $\tilde{L}$ from $L$ through a constant transition matrix $\mathbf{T}\in \mathbb{R}^{2\times 2}$, where we let $T_{ab}$ be $\mathbb{P}(\tilde L=b\mid L=a)$ with a slight abuse of notation.
The contaminated response $\tilde Y$ is then constructed by $\tilde{Y}=c -M_Y \mathbb{I}\{\tilde{L}=1\}+M_Y \mathbb{I}\{\tilde{L}=2\}$, where $M_Y=|Y-c|\exp(\xi/10)+10^{-6}$ and $\xi\sim N(0,1)$. Write $T_{11}=1-\rho_{+}$ and $T_{22}=1-\rho_{-}$. We consider two contamination settings: (1) the false-active dominant model, where $\rho_{+}=\rho$ and $\rho_{-}=\rho/4$, and (2) the false-null dominant model, where $\rho_{+}=\rho/4$ and $\rho_{-}=\rho$.
We vary $\rho\in\{0,0.05,0.1,0.15,0.2,0.25\}$ and compare \texttt{BH\_clip}, \texttt{RCS (oracle weights)}, \texttt{RCS (known $\mathbf{T}$)}, and \texttt{RCS (plug-in $\mathbf{T}$)}, where the plug-in $\hat{\mathbf{T}}$ is obtained from an independent fitting dataset of size $500$ by leveraging the identical estimating procedure in Figure~\ref{mis_speci}.

\begin{figure}[t]
    \centering
    \includegraphics[width=0.8\textwidth]{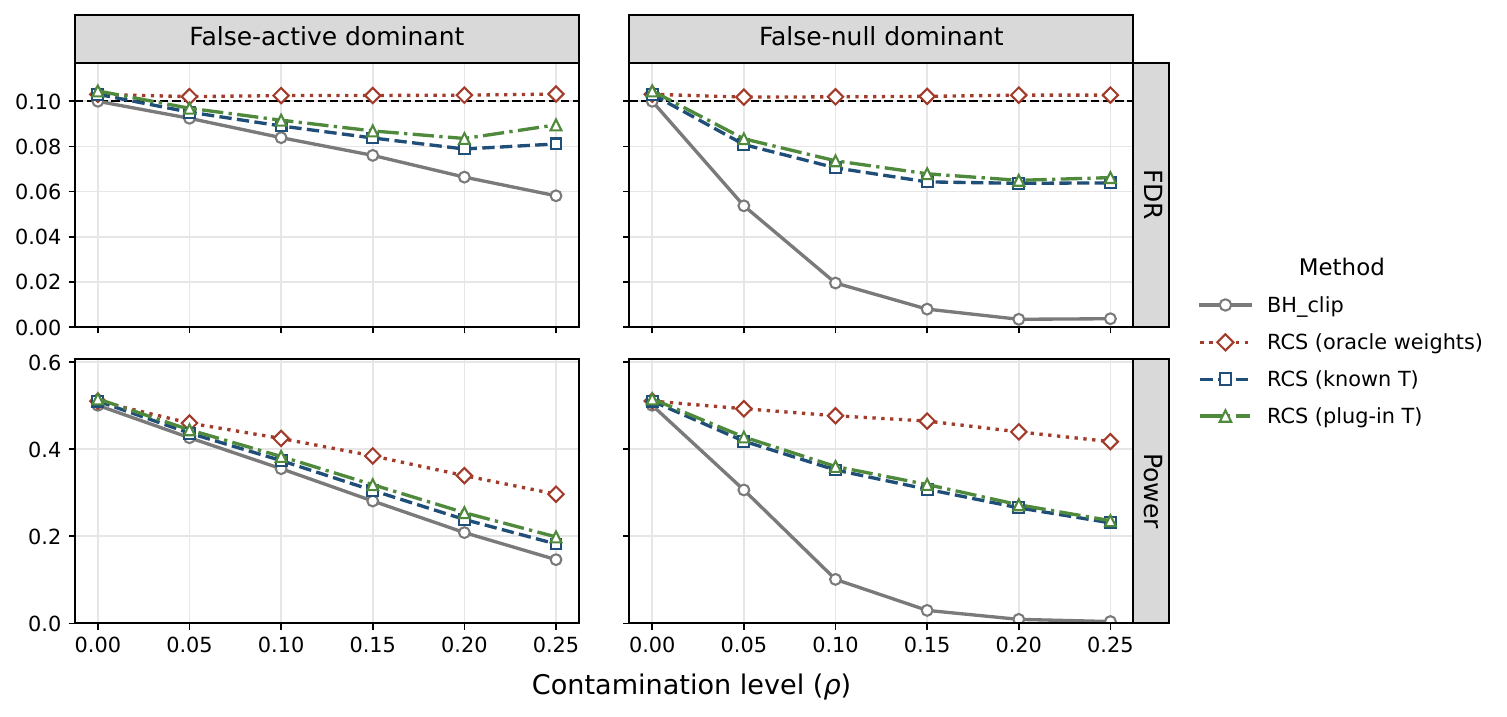}
    \caption{
    Results for Task 2 under two constant threshold-status contamination models. The left and right panels correspond to false-active dominant and false-null dominant contamination, respectively.}
    \label{fig:task2-constant}
\end{figure}

Results are presented in Figure~\ref{fig:task2-constant}. The \texttt{BH\_clip} suffers a severe loss of power as $\rho$ increases. In contrast, \texttt{RCS (oracle weights)} achieves an FDR close to the nominal level $\alpha$ and is more powerful than \texttt{BH\_clip}. \texttt{RCS (known $\mathbf{T}$)} and \texttt{RCS (plug-in $\mathbf{T}$)} also exhibit higher power than \texttt{BH\_clip}, especially for the false-null dominant model.

\subsubsection{Additive response noise}\label{Additive_response_noise}
We next focus on the common measurement-error model, where $\tilde Y=Y+\eta$ and $\eta\mid X=x\sim N(0,\tau_\rho^2(x))$. We consider two choices of $\tau_\rho(x)$: (1) the homoscedastic contamination, where $\tau_\rho(x)$ is a constant for $x\in\mathcal{X}$, and (2) the score-dependent contamination, which models the scenario where the contamination strength is larger in high-response regions; see Appendix~\ref{task_2_appendix_1} for details of $\tau_{\rho}(x)$. We compare \texttt{BH\_clip}, \texttt{RCS (oracle weights)}, and \texttt{RCS (score-localized)} for $\rho \in\{0,0.05,0.1,0.15,0.2,0.25\}$, where \texttt{RCS (score-localized)} refers to the RCS method with score-based estimator $\hat{w}(x)$; see Appendix~\ref{task_2_appendix_2} for details. The results of the two models are presented in Figure~\ref{fig:task2-additive}, which further demonstrate the effectiveness of the RCS framework.

\begin{figure}[t]
    \centering
    \includegraphics[width=0.8\textwidth]{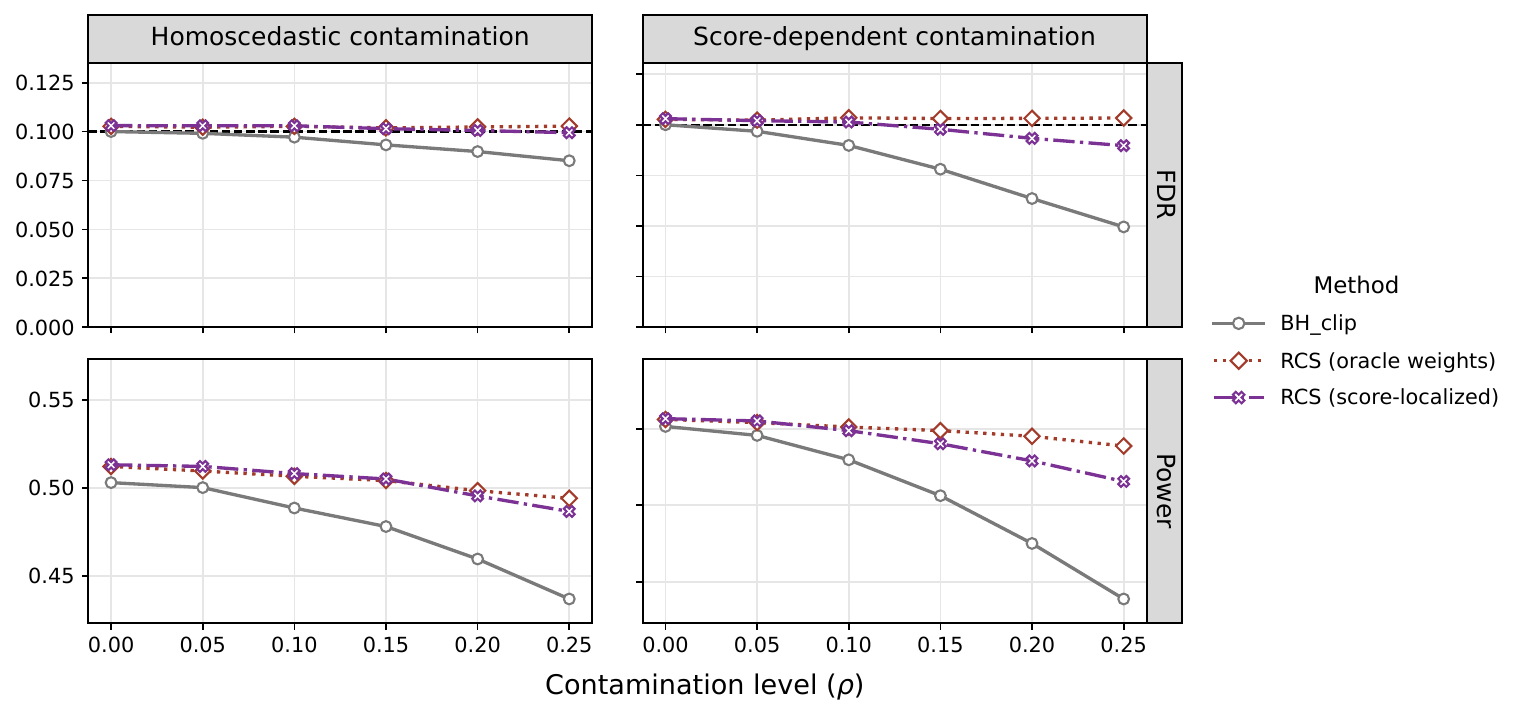}
    \caption{
    Results for Task 2 under additive Gaussian noise. The left and right panels correspond to homoscedastic and score-dependent contamination, respectively.
    }
    \label{fig:task2-additive}
\end{figure}

\section{Real Data Applications}\label{sec7}

\subsection{CIFAR-10H Image Data}\label{case_study:cifar}
We apply RCS to the reliable classification of real-world $32\times32$ color images. We consider the CIFAR-10H dataset~\citep{peterson2019human}, a subset of the CIFAR-10 dataset containing 10,000 images with imperfect labels assigned through Amazon Mechanical Turk, where
each annotator-provided label can be regarded as a noisy version of the underlying true label of the image. Following \cite{sesia2025adaptive}, the working label of each image is randomly sampled from a multinomial distribution whose weights are equal to the relative label frequencies assigned to that image by different human annotators. 
The base classifier is a ResNet-18 convolutional neural network trained on the 50,000 CIFAR-10 images excluded from the CIFAR-10H dataset, and we take the output of the final softmax layer of the network as $\hat{\mu}_k$.
However, different with the task of constructing prediction sets~\citep{sesia2025adaptive}, we aim to directly select truly classified candidates.

We randomly select 500 CIFAR-10H images as the test dataset and use a random calibration sample of size $n_{\text{cal}}\in\{500,1500,4500,9500\}$ from the remaining images. Following \cite{sesia2025adaptive}, we apply RCS by specifying a randomized response model, despite its potential misalignment with the underlying contamination mechanism.
The procedure for estimating $\mathbf{T}$ ($\epsilon$) completely coincides with \cite{sesia2025adaptive}.
For clarity, we denote the simplified version of RCS in Section~\ref{sec:RCS_uniform_noise} with plug-in $\hat\epsilon$ by \texttt{RCS-S (plug-in $\mathbf{T}$)} and denote the original version by \texttt{RCS (plug-in $\mathbf{T}$)}.
The \texttt{PSP} method is taken as a baseline.
For all methods, we set the overall FDR target level as $0.05$.

We plot the results in Figure~\ref{fig:cifar-10}. It can be observed that all methods control the FDR well and achieve higher power as the number of calibration samples increases.
\texttt{RCS-S (plug-in $\mathbf{T}$)} and \texttt{RCS (plug-in $\mathbf{T}$)} perform similarly, while both methods are more powerful than \texttt{PSP} for different calibration sample sizes.

\begin{figure}[t]
    \centering
    \includegraphics[width=0.9\textwidth]{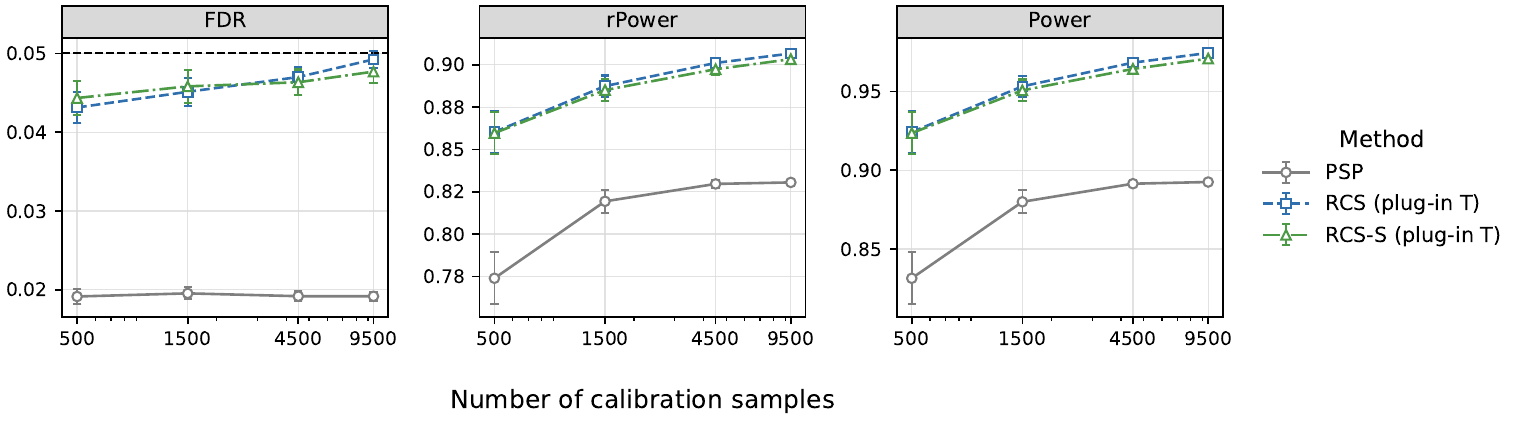}
    \caption{
     Performance on CIFAR-10H image data with imperfect labels under different calibration sizes. Points and error bars represent the mean and one standard deviation over 500 independent repetitions.
    }
    \label{fig:cifar-10}
\end{figure}

\subsection{ACS Income Data with Label Differential Privacy}

We next illustrate the effectiveness of RCS for datasets with label differential privacy.
We consider the 2018 California ACS income data from Folktables~\citep{ding2021retiring}, consisting of 195{,}665 data points, where the binary response indicates whether annual income exceeds 50{,}000 dollars and the feature vector represents the demographic and job information of each individual. In this study, we aim to identify high-income individuals, i.e., individuals with an income exceeding 50,000 dollars.

Following \cite{liu2023need}, we use a subsample consisting of 50,000 observations to train an XGBoost classifier, i.e., $\hat{\mu}(\cdot)$ for $\mathbb{P}(Y=1 \mid X=\cdot)$, and fix it in the subsequent experiments. 
The remaining observations form a pool, from which we repeatedly sample a test set and a calibration set, each of size $10{,}000$.
To mimic a privacy-preserving release of calibration data, we provide the calibration features together with label-private responses, which is natural when the response of interest—income status—is sensitive and should not be revealed directly to practitioners.
Specifically, we construct the binary randomized response with privacy budget $\varepsilon_{\mathrm{DP}}$ such that $\mathbb{P}(\tilde Y=Y)=e^{\varepsilon_{\mathrm{DP}}}/(1+e^{\varepsilon_{\mathrm{DP}}})$, where a smaller $\varepsilon_{\mathrm{DP}}$ corresponds to a stronger privacy.
As in Section~\ref{case_study:cifar}, two versions of RCS, denoted by \texttt{RCS} and \texttt{RCS-S} (the matrix $\mathbf{T}$ is known),
are applied. The \texttt{BH\_clip} method is taken as a baseline, where the nonconformity score satisfies $V(x,y)=100\mathbb{I}\{y=1\}-\hat{\mu}(x)$.

\begin{figure}[t]
    \centering
    \includegraphics[width=0.8\textwidth]{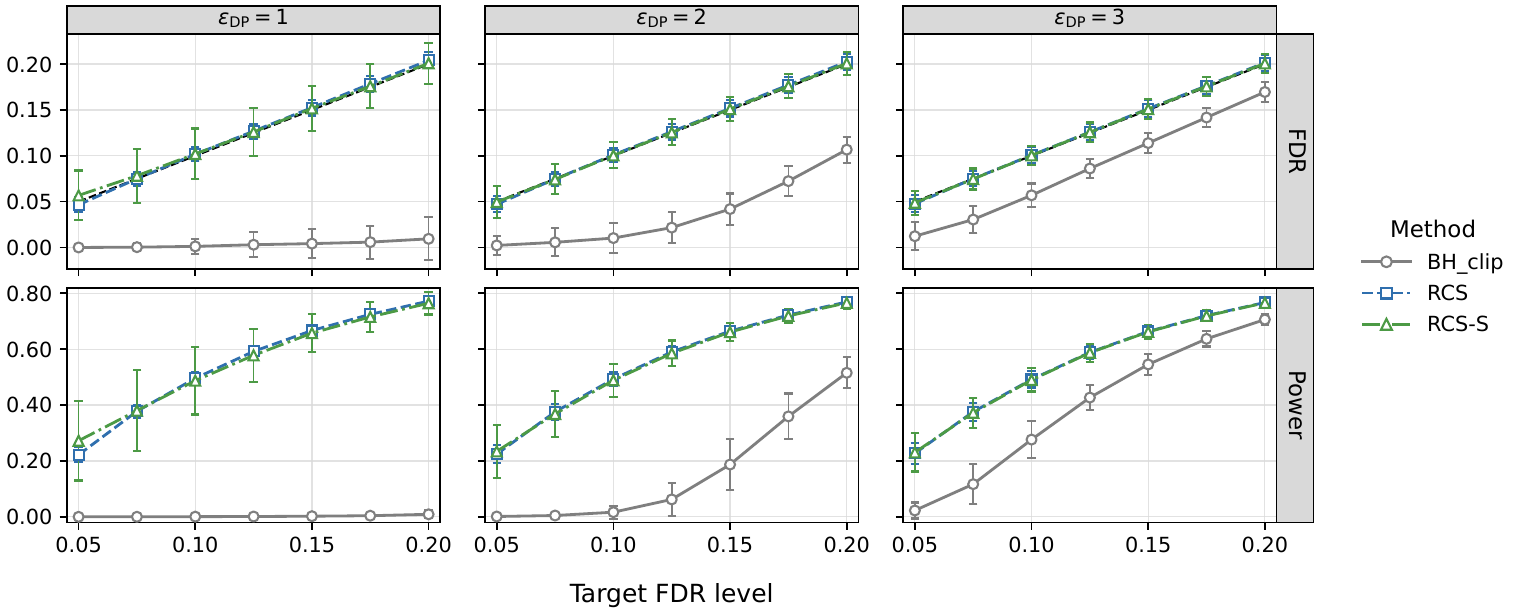}
    \caption{
     Performance on ACS income data with label differential privacy under different target FDR levels. The dotted line corresponds to the line of $y=x$.
    }
    \label{fig:label-dp}
\end{figure}

For each privacy level, we compare the three methods at target FDR levels ranging from $0.05$ to $0.20$; see Figure~\ref{fig:label-dp} for the results of 500 independent repetitions.
Across different settings, both \texttt{RCS} and \texttt{RCS-S} track the nominal FDR level closely while consistently achieving substantially higher power than \texttt{BH\_clip}.
Furthermore, the performance of RCS-based methods remains stable as the privacy protection becomes stronger, indicating that RCS can accommodate substantial label perturbation without an obvious loss of power. Overall, \texttt{RCS} is slightly more stable across different repetitions, whereas \texttt{RCS-S} exhibits somewhat greater variability under the strongest privacy setting.


\section{Discussion}
In this paper, we study conformalized selection in the presence of calibration data with contaminated responses, an important problem in real-world applications that has not been explicitly explored in existing literature. Focusing on selecting candidates whose predicted labels are correct or responses exceed a target threshold, we demonstrate that classical methods can either fail to control the false discovery rate or be too conservative if the responses of calibration data are contaminated.
To tackle this, we develop RCS, a unified framework that can leverage any classification model and a contaminated calibration dataset to achieve large-scale trustworthy labeling. The key insight of RCS lies in reinterpreting the response contamination as a localized covariate-shift problem and estimating the FDP from a covariate-adjusted empirical-Bayes perspective. We establish asymptotic group-wise FDR control, power optimality, and robustness of RCS. We also develop instantiations of RCS under common contamination models and extensions beyond classification.

This research opens several opportunities for future investigation. In fact, since the seminal work of \cite{jin2023selection}, conformalized selection has evolved substantially, with extensions including multivariate selection~\citep{bai2025multivariate}, diversifying selection~\citep{nair2025diversifying}, selection under hierarchical data \citep{lee2025selection}, general risk control~\citep{bai2026conformal}, and many others. 
It would be valuable to extend the basic idea underlying the RCS framework to these settings.


\newpage

\appendix

\section{Further Discussions with Related Work}\label{related_work}

Conformal inference~\citep{vovk2005algorithmic} is an increasingly popular distribution-free framework for quantifying the uncertainty of predictions made by any black-box model. 
The seminal work can be traced back to conformal prediction, which aims to construct a prediction set with marginal coverage for a test point, with the help of calibration samples that are exchangeable with the test point.
By extending the basic idea, numerous conformal methods have been developed for different questions; see \cite{angelopoulos2024theoretical} for a comprehensive review.
This paper builds upon the line of conformalized selective inference, which aims to provide uncertainty quantification for multiple test samples simultaneously, and has received widespread attention for its practicality in drug discovery, large-scale classification, and LLM alignment~\citep{jin2023selection,sun2025unified,gui2024conformal,gui2025acs,rava2021burden}.
We make contributions by clarifying the applicability of existing popular methods with contaminated calibration datasets, and proposing a unified framework towards robust conformalized selection.

A fundamental limitation of conformal inference is its reliance on data exchangeability.
Numerous studies have sought to address this limitation: for the classical conformal prediction, its robustness under covariate shift~\citep{tibshirani2019conformal}, time-series
dependencies~\citep{gibbs2021adaptive}, label shift~\citep{podkopaev2021distribution}, and general unexchangeability~\citep{barber2023conformal} has been well studied.
The most closely related works are \cite{einbinder2024label,sesia2025adaptive}, which study the robustness of conformal prediction with calibration datasets containing noisy labels. However, as discussed in the introduction, their results focus on the prediction of a single test point; the label noise problem within the conformalized selection frameworks remains largely unexplored.
Furthermore, for the selection task considered in \cite{jin2023selection}, the response variable can be continuous, where the problem of response contamination is much more complex than label noise. This paper addresses both of the aforementioned gaps. 
Returning to the line of conformalized selective inference, recent studies have also begun to examine the influence of unexchangeability~\citep{jin2025model,lee2025selection,bai2026conformal}, but these investigations mainly focused on covariate shift.
As discussed in \cite{angelopoulos2024theoretical}, the response contamination (e.g., label noise) differs fundamentally from covariate shift, therefore requiring new techniques and strategies to address. \cite{bashari2025robust} also consider the contaminated reference data, but they focus on outlier detection, and their contamination refers to the reference data containing a small fraction of outliers, which is distinct from our contamination setting.

To the best of our knowledge, this is the first work to explore the properties of current conformal selection methods such as those of \cite{jin2023selection,sun2025unified} under calibration data with general response contamination, and the first to provide a strategy to address the response contamination issue within the conformalized selection framework. At the methodological level, handling different prediction categories separately is similar to \cite{sun2025unified}, but the two approaches serve distinct purposes, and the subsequent treatments are also different. By separately handling different classes, we translate the label contamination problem into the localized covariate shift problem and then adopt a covariate-adjusted empirical-Bayes perspective to estimate the number of false selections.

\section{Technical Proofs}\label{proofs}
\subsection{Proof of Theorem \ref{cs_jingYIN}}

\begin{proof}
Define
\[
V_i:=V(X_i,Y_i),\quad
\tilde{V}_i:=V(X_i,\tilde{Y}_i),\quad i\in[n+m].
\]
Write $\hat{V}_{n+j}=V(X_{n+j},c_j)$, $j\in[m]$.
Since $F_{\tilde V}$ is continuous and
$F_V\ll F_{\tilde V}$, both $\tilde V$ and
$V^\circ$ have atomless distributions. Hence, for every fixed $j\in[m]$,
$\tilde V_1,\ldots,\tilde V_n,V_{n+j}$
has no ties almost surely.Moreover, since $F_{\tilde V}(\hat V)$ has a continuous
distribution,
$F_{\tilde V}(\hat V_{n+1}),\dots,F_{\tilde V}(\hat V_{n+m})$ has no ties almost surely. Consequently, $\hat V_{n+1},\ldots,\hat V_{n+m}$ has no ties almost surely.

Define the noisy oracle p-values by
\begin{equation*}
\tilde{p}_{j}^{*}:=\frac{1+\sum_{i=1}^n \mathbb{I}\{\tilde{V}_i < V_{n+j}\}}{n+1},\quad j=1,\dots m.
\end{equation*}
We fix any $j\in[m]$. Define a set of modified noisy p-values by
\begin{equation*}
\tilde{p}_{l}^{(j)}:=\frac{\sum_{i=1}^n \mathbb{I}\{\tilde{V}_i < \hat{V}_{n+l}\}+\mathbb{I}\{V_{n+j}<\hat{V}_{n+l}\}}{n+1},\quad l=1,\dots,j-1,j+1,\dots,m.
\end{equation*}
Let $\mathcal{R}(a_1,\dots,a_m)\subseteq \{1,\dots,m\}$ be the rejection set obtained by applying BH procedure to p-values that take on the values $a_1,\dots, a_m$.
In this way, the original output is $\mathcal{R}=\mathcal{R}(\tilde{p}_1,\dots,\tilde{p}_{m})$.
We compare it with  
\[
\mathcal{R}_j^*:=\mathcal{R}(\tilde{p}_1^{(j)},\dots,\tilde{p}_{j-1}^{(j)},\tilde{p}_{j}^{*},\tilde{p}_{j+1}^{(j)},\dots,\tilde{p}_{m}^{(j)})
\]
on the event $\{Y_{n+j}\leq c_j,~\delta_{j}=1\}$, where $\delta_j=1\{j\in\mathcal{R}\}$ denotes the decision of hypothesis $H_{j}$ for $j=1,\dots m$. First, on this event, we have $V_{n+j}\leq \hat{V}_{n+j}$ due to the monotonicity of $V$, which then leads to $\tilde{p}_{j}^{*}\leq \tilde{p}_j$ by the definition of noisy p-values. 
Second, for the remaining p-values, since scores have no ties, we consider two cases:
\begin{itemize}
\item[(i)] If $\hat{V}_{n+l}>\hat{V}_{n+j}$, then $\hat{V}_{n+l}>V_{n+j}$ since $\hat{V}_{n+j}\geq V_{n+j}$. Thus, 
\begin{equation*}
\tilde{p}_{l}^{(j)}=\frac{1+\sum_{i=1}^n \mathbb{I}\{\tilde{V}_i < \hat{V}_{n+l}\}}{n+1}=\tilde{p}_{l}.
\end{equation*}
This means that if $\hat{V}_{n+l}>\hat{V}_{n+j}$, replacing $\tilde{p}_{l}$ with $\tilde{p}_{l}^{(j)}$ does not change the rejection set of BH procedure.
\item[(ii)] If $\hat{V}_{n+l}<\hat{V}_{n+j}$, then $\tilde{p}_{l}\leq \tilde{p}_{j}$. Therefore, we have $l \in \mathcal{R}$ on the event $\{\delta_j=1,Y_{n+j}\leq c_j\}$ by the property of BH procedure. Note that on this event, by definition we have
\begin{align*}
\tilde{p}_{l}^{(j)}
&\leq \frac{1+\sum_{i=1}^n \mathbb{I}\{\tilde{V}_i<\hat{V}_{n+l}\}}{n+1}\\
&\leq \frac{1+\sum_{i=1}^n \mathbb{I}\{\tilde{V}_i<\hat{V}_{n+j}\}}{n+1}=\tilde{p}_{j}.
\end{align*}
Since $l\in\mathcal{R}$, by the step-up nature of BH procedure, replacing $\tilde{p}_{l}$ with a smaller value $\tilde{p}_{l}^{(j)}$ still does not change the rejection set.
\end{itemize}
Consequently, on the event $\{\delta_{j}=1,Y_{n+j}\leq c_j\}$, we have
\begin{equation}\label{leave-one-out-same}
\begin{aligned}
\mathcal{R}
&=\mathcal{R}(\tilde{p}_1^{(j)},\dots,\tilde{p}_{j-1}^{(j)},\tilde{p}_j,\tilde{p}_{j+1}^{(j)},\dots,\tilde{p}_{m}^{(j)})\\
&=\mathcal{R}(\tilde{p}_1^{(j)},\dots,\tilde{p}_{j-1}^{(j)},\tilde{p}_j^{*},\tilde{p}_{j+1}^{(j)},\dots,\tilde{p}_{m}^{(j)})\\
&=\mathcal{R}_{j}^{*}.
\end{aligned}
\end{equation}

By the self-consistency of BH procedure, we have the decomposition of FDR that 
\begin{align*}
\mathrm{FDR}
&=\mathbb{E}\left[\frac{\sum_{j=1}^m \mathbb{I}\{Y_{n+j}\leq c_j\}\delta_j}{\max\{1,\sum_{j=1}^m \delta_j\}}\right]\\
&=\sum_{j=1}^{m}\sum_{k=1}^{m} \frac{1}{k}\mathbb{E}\left[\mathbb{I}\{|\mathcal{R}|=k\} \mathbb{I}\{Y_{n+j}\leq c_j\} \mathbb{I}\{\tilde{p}_{j}\leq \alpha k /m\}\right].
\end{align*}
We have
\begin{align*}
\mathbb{I}\{|\mathcal{R}|=k\}\mathbb{I}\{Y_{n+j}\leq c_j\}\mathbb{I}\{\tilde{p}_{j}\leq \alpha k /m\}
&=\mathbb{I}\{|\mathcal{R}_{j}^{*}|=k\}\mathbb{I}\{Y_{n+j}\leq c_j\}\mathbb{I}\{\tilde{p}_{j}\leq \alpha k /m\}\\
&\leq \mathbb{I}\{|\mathcal{R}_{j}^{*}|=k\} \mathbb{I}\{Y_{n+j}\leq c_j\} \mathbb{I}\{\tilde{p}_{j}^{*}\leq \alpha k /m\},
\end{align*}
where the first equation holds by \eqref{leave-one-out-same}.
Thus, 
\begin{align*}
\mathrm{FDR}
&\leq\sum_{j=1}^{m}\sum_{k=1}^{m} \frac{1}{k}\mathbb{E}\left[\mathbb{I}\{|\mathcal{R}_{j}^{*}|=k\}\mathbb{I}\{Y_{n+j}\leq c_j\}\mathbb{I}\{\tilde{p}_{j}^{*}\leq \alpha k /m\}\right]\\
&\leq \sum_{j=1}^{m}\sum_{k=1}^{m} \frac{1}{k}\mathbb{E}\left[\mathbb{I}\{|\mathcal{R}_{j}^{*}|=k\}\mathbb{I}\{\tilde{p}_{j}^{*}\leq \alpha k /m\}\right].
\end{align*}
Define $\delta_{j}^{*}=\mathbb{I}\{j\in\mathcal{R}_{j}^{*}\}$. By the step-up nature of the BH procedure, on the event $\{\delta_{j}^{*}=1\}$, replacing $\tilde{p}_{j}^{*}$ with $0$ does not change the rejection set, say, 
\begin{equation*}
\mathcal{R}_{j\to0}^{*}:=\mathcal{R}(\tilde{p}_1^{(j)},\dots,\tilde{p}_{j-1}^{(j)},0,\tilde{p}_{j+1}^{(j)},\dots,\tilde{p}_{m}^{(j)})=\mathcal{R}_{j}^{*}.
\end{equation*}
Therefore,
\begin{align*}
\mathrm{FDR}
&\leq \sum_{j=1}^{m}\sum_{k=1}^{m} \frac{1}{k}\mathbb{E}\left[\mathbb{I}\{|\mathcal{R}_{j}^{*}|=k\}\mathbb{I}\{\delta_{j}^{*}=1\}\right]\\
&=\sum_{j=1}^{m}\sum_{k=1}^{m} \frac{1}{k}\mathbb{E}\left[\mathbb{I}\{|\mathcal{R}_{j}^{*}|=k\}\mathbb{I}\{\delta_{j\to0}^{*}=1\}\right]\\
&=\sum\limits_{j=1}^{m}\mathbb{E}\left[\frac{\mathbb{I}\left\{\tilde{p}_{j}^{*}\leq \alpha|\mathcal{R}_{j\to0}^{*}|/m\right\}}{\max\{1,|\mathcal{R}_{j}^{*}|\}}\right]\\
&=\sum\limits_{j=1}^{m}\mathbb{E}\left[\frac{\mathbb{I}\left\{\tilde{p}_{j}^{*}\leq \alpha|\mathcal{R}_{j\to0}^{*}|/m\right\}}{\max\{1,|\mathcal{R}_{j\to0}^{*}|\}}\right],
\end{align*}
where $\delta_{j\to0}^{*}=\mathbb{I}\{j\in\mathcal{R}_{j\to0}^{*}\}$. By the tower property, we have
\begin{equation*}
\mathbb{E}\left[\frac{\mathbb{I}\left\{\tilde{p}_{j}^{*}\leq \alpha|\mathcal{R}_{j\to0}^{*}|/m\right\}}{\max\{1,|\mathcal{R}_{j\to0}^{*}|\}}\right]
=\mathbb{E}\left[
\mathbb{E}\left[\frac{\mathbb{I}\left\{\tilde{p}_{j}^{*}\leq \alpha|\mathcal{R}_{j\to0}^{*}|/m\right\}}{\max\{1,|\mathcal{R}_{j\to0}^{*}|\}} \Bigg| [\tilde{V}_1,\dots,\tilde{V}_{n},V_{n+j}],\{\hat{V}_{n+l}\}_{l\ne j}\right]
\right].
\end{equation*}
The set $\mathcal{R}_{j\to 0}^{*}$ is determined by $\{\tilde{p}_l^{(j)}\}_{l\ne j}$. Furthermore, $\tilde{p}_{l}^{(j)}$ is determined by $\hat{V}_{n+l}$ and the unordered set $[\tilde{V}_1,\dots,\tilde{V}_{n},V_{n+j}]$, since $\{\tilde{p}_{l}^{(j)}\}_{l\ne j}$ is invariant after permuting $\{\tilde{V}_{i}\}_{i=1}^n \cup \{V_{n+j}\}$ by definition.
As a result, the set $\mathcal{R}_{j\to 0}^{*}$ is determined by $\{\hat{V}_{n+l}\}_{l\ne j}$ and $[\tilde{V}_1,\dots,\tilde{V}_{n},V_{n+j}]$.
Therefore, conditional on the unordered set $[\tilde{V}_1,\dots,\tilde{V}_n,V_{n+j}]$ and $\{\hat{V}_{n+l}\}_{l\ne j}$, $\mathcal{R}_{j\to0}^{*}$ is a determined set.
Therefore, we have
\begin{align*}
&\quad~\mathbb{E}\left[\frac{\mathbb{I}\left\{\tilde{p}_{j}^{*}\leq \alpha|\mathcal{R}_{j\to0}^{*}|/m\right\}}{\max\{1,|\mathcal{R}_{j\to0}^{*}|\}}\right]\\
&=\mathbb{E}\left[\frac{1}{\max\{1,|\mathcal{R}_{j\to0}^{*}|\}}\mathbb{P}\left(\tilde{p}_j^*\leq \frac{\alpha|\mathcal{R}^*_{j\to0}|}{m}~\big{|}~[\tilde{V}_1,\dots,\tilde{V}_n,V_{n+j}],\{\hat{V}_{n+l}\}_{l\neq j}
\right)\right].
\end{align*}

For any fixed $j\in[m]$, we rewrite the random vector $(\tilde{V}_1,\dots,\tilde{V}_n,V_{n+j})$ by $(Z_1^{j},\dots,Z_{n}^{j},Z_{n+1}^{j})$.
Let $Z_{(1)}^{j}<Z_{(2)}^{j}<\dots<Z_{(n+1)}^{j}$ be the order statistics of $Z_1^{j},\dots,Z_{n}^{j},Z_{n+1}^{j}$.
Define
\begin{equation*}
w(z):=\dfrac{dF_V(z)}{dF_{\tilde{V}}(z)}.
\end{equation*}
We have
\[
\mathbb{P}(Z_{n+1}^j=Z_{(l)}^{j}~|~[\tilde{V}_1,\dots,\tilde{V}_n,V_{n+j}])=\frac{w(Z_{(l)}^{j})}{\sum_{i=1}^{n+1}w(Z_i^{j})}.
\]
Therefore, 
\begin{align*}
\mathbb{P}(\tilde{p}_{j}^*\leq t~|~[\tilde{V}_1,\dots,\tilde{V}_n,V_{n+j}])
&=\mathbb{P}(Z_{n+1}^j\leq Z_{(l)}^{j}~\text{for } l \leq \lfloor t(n+1)\rfloor ~ | ~[\tilde{V}_1,\dots,\tilde{V}_n,V_{n+j}])\\
&=\frac{\sum_{l=1}^{\lfloor t(n+1)\rfloor}w (Z_{(l)}^{j})}{\sum_{i=1}^{n+1}w(Z_i^{j})}.
\end{align*}
By the strong law of large numbers, Slutsky's lemma, and the boundeness of $w(\cdot)$, we have 
\begin{align*}
\frac{1}{n+1}\sum_{i=1}^{n+1}w(Z_i^j)
&=\frac{n}{n+1}\frac{1}{n}\sum_{i=1}^nw(\tilde{V}_i)+\frac{1}{n+1}w(V_{n+j})\\
&\xrightarrow{\mathrm{a.s.}}\mathbb{E}_{\tilde{V}\sim F_{\tilde{V}}}[w(\tilde{V})]\quad \text{as } n\to\infty.
\end{align*}
Note that
\begin{align*}
\mathbb{E}_{\tilde{V} \sim F_{\tilde{V}}}[w(\tilde{V})] = \int w(z) dF_{\tilde{V}}(z) = \int \frac{dF_V(z)}{dF_{\tilde{V}}(z)} dF_{\tilde{V}}(z) = \int dF_V(z) = 1.
\end{align*}
Then we have $\sum_{i=1}^{n+1}w(Z_i^{j})/(n+1) \xrightarrow{\mathrm{a.s.}} 1$ as $n\to \infty$.
In addition, by straightforward calculations, we have
\[
Z_{(\lfloor t(n+1) \rfloor)}^{j} \xrightarrow{\mathrm{a.s.}} F^{-1}_{\tilde{V}}(t),\quad \text{as }n\to\infty.
\]
For numerator, we have
\begin{align*}
&\quad \sup_{t\in(0,\alpha)}\left|\frac{1}{n+1}\sum_{l=1}^{\lfloor t(n+1)\rfloor}w(Z_{(l)}^{j})
-\mathbb{E}_{\tilde{V}\sim F_{\tilde{V}}}\left[w(\tilde{V})\mathbb{I}\left\{\tilde{V}\leq F_{\tilde{V}}^{-1}(t)\right\}\right] \right|\\
&=\sup_{t\in(0,\alpha)}\left| \frac{1}{n+1}\sum_{l=1}^{n+1}w(Z_l^{j})\mathbb{I}\{Z_l^{j}\leq Z^{j}_{(\lfloor t(n+1) \rfloor)} \}
-\mathbb{E}_{\tilde{V}\sim F_{\tilde{V}}}\left[w(\tilde{V})\mathbb{I}\left\{\tilde{V}\leq F_{\tilde{V}}^{-1}(t)\right\}\right] \right|\\
&\leq 
\underbrace{
\sup_{t\in(0,\alpha)}\left|\frac{1}{n+1}\sum_{l=1}^{n+1}w(Z_l^j)\left(\mathbb{I}\{Z_l^j \leq z_{(\lfloor t(n+1)\rfloor)}\}-\mathbb{I}\left\{Z_l^j \leq F_{\tilde{V}}^{-1}(t)\right\}\right)\right|
}_{\text{(I)}}\\
&\quad +
\underbrace{
\sup_{t\in(0,\alpha)}\left|\frac{1}{n+1}\sum_{l=1}^{n+1}w(Z_l^j)\mathbb{I}\left\{Z_l^{j}\leq F^{-1}_{\tilde{V}}(t)\right\}-\mathbb{E}_{\tilde{V}\sim F_{\tilde{V}}}\left[w(\tilde{V})\mathbb{I}\left\{\tilde{V}\leq F_{\tilde{V}}^{-1}(t)\right\}\right]\right|
}_{\text{(II)}}.
\end{align*}
For term (I), by Glivenko-Cantelli theorem,
we have 
\begin{align*}
\text{(I)}
&\overset{(i)}{\leq} M\sup_{t\in(0,\alpha)}\left|\frac{1}{n+1}\sum_{l=1}^{n+1}\left(\mathbb{I}\{Z_l^j\leq z_{(\lfloor t(n+1)\rfloor)}\}-\mathbb{I}\left\{Z_l^j\leq F_{\tilde{V}}^{-1}(t)\right\}\right)\right|\\
&=M\sup_{t\in(0,\alpha)}\left|\frac{\lfloor t(n+1) \rfloor}{n+1}  
-\frac{1}{n+1}\sum_{l=1}^{n+1}
\mathbb{I}\{Z_l\leq F^{-1}_{\tilde{V}}(t)\}\right| \xrightarrow{\mathrm{a.s.}} 0 \quad \text{as } n\to\infty,
\end{align*}
where $(i)$ holds by $\sup w(z)\leq M <\infty$. For term (II), we have
\begin{align*}
\text{(II)} & \leq\frac{1}{n+1}w(Z_{n+1}^j)\mathbb{I}\left\{Z_{n+1}^{j}\leq F_{\tilde{V}}^{-1}(t)\right\}\\
&\quad~+
\sup_{t\in(0,\alpha)}\left|\frac{n}{n+1}\frac{1}{n}\sum_{l=1}^{n}w(Z_l^j)\mathbb{I}\left\{Z_l^{j}\leq F^{-1}_{\tilde{V}}(t)\right\}-\mathbb{E}_{\tilde{V}\sim F_{\tilde{V}}}\left[w(\tilde{V})\mathbb{I}\left\{\tilde{V}\leq F_{\tilde{V}}^{-1}(t)\right\}\right]\right|.
\end{align*}
By the boundedness of $w(\cdot)$ and the Glivenko-Cantelli theorem, we have $\text{(II)}\xrightarrow{\mathrm{a.s.}} 0 $ as $n\to\infty$.
Furthermore, it holds that
\begin{align*}
\mathbb{E}_{\tilde{V}\sim F_{\tilde{V}}}[w(\tilde{V})\mathbb{I}\{\tilde{V}\leq F_{\tilde{V}}^{-1}(t)\}]
&=\int_{-\infty}^{F_{\tilde{V}}^{-1}(t)} w(z) dF_{\tilde{V}}(z) \\
&= \int_{-\infty}^{F_{\tilde{V}}^{-1}(t)} \frac{dF_V(z)}{dF_{\tilde{V}}(z)} dF_{\tilde{V}}(z) \\
&= \int_{-\infty}^{F_{\tilde{V}}^{-1}(t)} dF_V(z) = F_V \left( F_{\tilde{V}}^{-1}(t) \right).
\end{align*}
Thus, we have
\begin{equation*}
\sup_{t\in(0,\alpha)}\left|\frac{1}{n+1}\sum_{l=1}^{\lfloor t(n+1)\rfloor}w(Z_{(l)}^j)
-F_V\left(F_{\tilde{V}}^{-1}(t)\right)\right|\xrightarrow{\mathrm{a.s.}} 0\quad \text{as }n\to\infty.
\end{equation*}
Combining the above results, we have
\begin{equation*}
\sup_{t\in(0,\alpha)}\left|\mathbb{P}\left(\tilde{p}_{j}^*\leq t~|~[\tilde{V}_1,\dots,\tilde{V}_n,V_{n+j}]\right) - F_V \left( F_{\tilde{V}}^{-1}(t) \right) \right| \xrightarrow{\mathrm{a.s.}} 0 \quad \text{as }n \to \infty.
\end{equation*}
Since $\tilde{p}_{j}^*$ is independent of $\{(X_{n+l},Y_{n+l})\}_{l\neq j}$, we have
\begin{equation}\label{asy_leave_p_values}
\sup_{t\in(0,\alpha)}\left| \mathbb{P}\left(\tilde{p}_{j}^*\leq t~|~[\tilde{V}_1,\dots,\tilde{V}_n,V_{n+j}], \{\hat{V}_{n+l}\}_{l\neq j}\right) - F_V \left( F_{\tilde{V}}^{-1}(t) \right)\right|\xrightarrow{\mathrm{a.s.}} 0\quad \text{as }n \to \infty.
\end{equation}

Now we establish the asymptotic behavior of $|\mathcal{R}_{j\to0}^*|$ as $n\to \infty$.
To avoid always introducing the symbol $n$, we rewrite the test data $(X_{n+j},Y_{n+j})$ by $(X^{\text{test}}_j,Y^{\text{test}}_j)$, $j\in[m]$.
Thus, for $l\neq j$, we have
\begin{equation}\label{asy_leave_selection_Set}
\begin{aligned}
\tilde{p}_{l}^{(j)}
&=\frac{\sum_{i=1}^n \mathbb{I}\{\tilde{V}_i < V(X^{\text{test}}_{l},c_l)\}+\mathbb{I}\{V_{n+j}<V(X^{\text{test}}_{l},c_l)\}}{n+1}\\
&=\frac{n}{n+1}\frac{1}{n}\sum_{i=1}^{n}\mathbb{I}\{\tilde{V}_i<V(X^{\text{test}}_{l},c_l)\}+\frac{1}{n+1}\mathbb{I}\{V_{n+j} < V(X^{\text{test}}_{l},c_l)\} \\
&\xrightarrow{\mathrm{a.s.}} F_{\tilde{V}}(V(X^{\text{test}}_{l},c_l)) \quad
\text{as } n\to \infty,
\end{aligned}
\end{equation}
where the last equality holds by the Glivenko-Cantelli theorem, the Slutsky theorem, and the independence between $V(X_{l}^{\text{test}},c_l)$ and $\{\tilde{V}_i\}_{i=1}^{n}$.
This means that for $l\neq j$, the asymptotic behavior of $\tilde{p}_l^{(j)}$ is independent of $j$ and is only related to the random variable $X_{l}^{\text{test}}$ and $c_l$.
Consider $c_j\equiv c$ for all $j\in [m]$.
Define $q_l:=F_{\tilde V}\left(V(X_l^{\mathrm{test}},c)\right)$, $l\in[m]$, and
\begin{equation*}
\bar{\mathcal{R}}_{j\to0}^{*}:=\mathcal{R}(q_1,\ldots,q_{j-1},0,q_{j+1},\ldots,q_m).
\end{equation*}
We first claim that
\begin{equation*}
\bar{\mathcal{R}}_{j\to0}^{*}\overset{d}{=}\lim_{n\to\infty}\mathcal{R}_{j\to0}^*\quad \text{almost surely}.
\end{equation*}
Specifically, since $F_{\tilde V}(\hat V)$ has a continuous
distribution, we have
\begin{equation*}
\mathbb P\left( q_l=\frac{\alpha r}{m}\text{ for some }l,r\in[m]\right)=0.
\end{equation*}
For fixed $m$, it follows almost surely that
\begin{equation*}
\Delta_j:= \min_{\substack{l\neq j,~r\in[m]}} \left| q_l-\frac{\alpha r}{m}
\right|>0.
\end{equation*}
Note that by the finiteness of $m$ and \eqref{asy_leave_selection_Set}, we have
\begin{equation*}
\max_{l\neq j} \left| \tilde p_l^{(j)}-q_l \right| \xrightarrow{\mathrm{a.s.}}0.
\end{equation*}
Hence, eventually almost surely, we have
\[
\mathbb I\left\{ \tilde p_l^{(j)} \le\frac{\alpha r}{m} \right\}
=
\mathbb I\left\{ q_l\le\frac{\alpha r}{m} \right\},
\quad l\neq j,\ r\in[m].
\]
Therefore, we have $\mathcal R_{j\to0}^{*}=\bar{\mathcal R}_{j\to0}^{*}$ eventually almost surely.

The randomness of $\bar{\mathcal{R}}_{j\to0}^*$ comes entirely from the vector $(X^{\text{test}}_1,\dots,X_{j-1}^{\text{test}},X_{j+1}^{\text{test}},\dots,X_m^{\text{test}})$.
For $l\in\{1,\dots,j-1,j+1,\dots,m\}$, we have
\begin{align*}
|\bar{\mathcal{R}}_{l\to0}^{*}|
&=|\mathcal{R}(q_1,\dots,q_{l-1},0,q_{l+1},\dots,q_{j-1},q_{j},q_{j+1},\dots,q_m)|\\
&\overset{(i)}{=}|\mathcal{R}(q_1,\dots,q_{l-1},q_j,q_{l+1},\dots,q_{j-1},0,q_{j+1},\dots,q_{m})|,
\end{align*}
where $(i)$ holds by the property that the size of the rejection set of the BH procedure remains unchanged for any permutation of p-values.
Furthermore, we have
\begin{align*}
&\quad~\mathcal{R}(q_1,\dots,q_{l-1},q_j,q_{l+1},\dots,q_{j-1},0,q_{j+1},\dots,q_{m})\\
&\overset{d}{=}
\mathcal{R}(q_1,\dots,q_{l-1},q_l,q_{l+1},\dots,q_{j-1},0,q_{j+1},\dots,q_{m})\\
&=\bar{\mathcal{R}}^{*}_{j\to0},
\end{align*}
since
\begin{align*}
&\quad~(X^{\text{test}}_1,\dots,X_{l-1}^{\text{test}},X_{j}^{\text{test}},X_{l+1}^{\text{test}},\dots,X_{j-1}^{\text{test}},X_{j+1}^{\text{test}},\dotsm X_m^{\text{test}})\\
&\overset{d}{=}(X^{\text{test}}_1,\dots,X_{l-1}^{\text{test}},X_{l}^{\text{test}},X_{l+1}^{\text{test}},\dots,X_{j-1}^{\text{test}},X_{j+1}^{\text{test}},\dotsm X_m^{\text{test}}).
\end{align*}
Therefore, we have
\begin{equation}\label{same_random_selection_set}
|\bar{\mathcal{R}}^*_{j\to0}|\overset{d}{=}|\bar{\mathcal{R}}_{l\to0}^*|
\end{equation}

For the given $\alpha\in(0,1)$, let $\bar{\mathcal{R}}_{\text{cs}}^{*}$ be the random selection set of BH procedure with inputs $(0,q_2,\dots,q_m)$.
Based on the above results, we have 
\begin{align*}
\limsup_{n\to\infty}\text{FDR}
&\leq \limsup_{n\to \infty}\sum_{j=1}^{m} \mathbb{E}\left[\frac{1}{\max\{1,|\mathcal{R}_{j\to0}^{*}|\}}\mathbb{P}\left(\tilde{p}_j^*\leq \frac{\alpha|\mathcal{R}^*_{j\to0}|}{m}~\big{|}~[\tilde{V}_1,\dots,\tilde{V}_n,V_{n+j}],\{\hat{V}_{n+l}\}_{l\neq j}
\right)\right]\\
&\overset{(i)}{\leq} \sum_{j=1}^{m} \mathbb{E}\left[\limsup_{n\to \infty}\frac{1}{\max\{1,|\mathcal{R}_{j\to0}^{*}|\}}\mathbb{P}\left(\tilde{p}_j^*\leq \frac{\alpha|\mathcal{R}^*_{j\to0}|}{m}~\big{|}~[\tilde{V}_1,\dots,\tilde{V}_n,V_{n+j}],\{\hat{V}_{n+l}\}_{l\neq j}
\right)\right]\\
&\overset{(ii)}{=}\sum_{j=1}^{m}\mathbb{E}\left[\frac{1}{\max\{1,|\bar{\mathcal{R}}^*_{j\to0}|\}} F_V F^{-1}_{\tilde{V}}\left(\frac{\alpha |\bar{\mathcal{R}}^*_{j\to0}|}{m}\right) \right]\\
&\overset{(iii)}{=}m\mathbb{E}\left[\frac{1}{|\bar{\mathcal{R}}^*_{\text{cs}}|} F_V F^{-1}_{\tilde{V}}\left(\frac{\alpha |\bar{\mathcal{R}}^*_{\text{cs}}|}{m}\right) \right],
\end{align*}
where $(i)$ holds by the reverse Fatou lemma, $(ii)$ holds by \eqref{asy_leave_p_values}, \eqref{asy_leave_selection_Set}, and the Slutsky theorem, and $(iii)$ holds by \eqref{same_random_selection_set} and the fact that $|\bar{\mathcal{R}}^*_{\text{cs}}|\geq1$ (since one of the p-values equals to 0). This completes the proof.

\end{proof}

\subsection{Proof of Proposition~\ref{additive_noise_cs}}

\begin{proof}
We fix any $v$ satisfying $v\leq F^{-1}_{\tilde{V}}(\alpha)$ and $\alpha$ satisfying $\alpha < F_{\tilde{V}}(V(x, m(x)))$ for all $x \in \mathcal{X}$.
We define the generalized boundary by
\begin{equation*}
y_v(x)=\sup\{y\in\mathcal{Y}: V(x,y)\leq v\}, \quad x\in\mathcal{X}.
\end{equation*}
By the monotonicity of $V$, we then have 
\begin{align*}
y_v(x)
\leq \sup\{{y\in\mathcal{Y}}:V(x,y)\leq F_{\tilde{V}}^{-1}(\alpha)\}=\sup\{y\in\mathcal{Y}: F_{\tilde{V}}(V(x,y))\leq \alpha\}.
\end{align*}
We note that, for $y=m(x)$, we have $
F_{\tilde{V}}(V(x,m(x)))>\alpha$.
Since $F_{\tilde{V}}$ is non-decreasing, we have
\begin{equation*}
y_v(x)<m(x),\quad x\in\mathcal{X}.
\end{equation*}

Since $f_{Y|X}$ is continuous, $\mathbb{P}(Y = y_v(x) \mid X=x) = 0$. Therefore, we have
\begin{equation}\label{prop:equ}
\mathbb{P}(V(X,Y)\leq v ~|~X=x)=\mathbb{P}(Y\leq y_v(x)~|~ X=x)=F_{Y|X=x}(y_v(x)).
\end{equation}
By the definition of additive noise, we have
\begin{align*}
\mathbb{P}(\tilde{V}\leq v~|~X=x, \eta=\xi)&=\mathbb{P}(V(x,Y+\xi)\leq v~|~X=x,\eta=\xi)\\
&\overset{(i)}=\mathbb{P}(V(x,Y+\xi)\leq v~|~X=x)\\
&\overset{(ii)}{=}F_{Y|X=x}(y_v(x)-\xi),
\end{align*}
where $(i)$ holds by the independence of the noise and $(ii)$ holds by \eqref{prop:equ} and the definition of $y_v(x)$. Marginalizing over $\eta$ yields that
\begin{align*}
\mathbb{P}(\tilde{V}\leq v~|~X=x)
&=\int_{-\infty}^{\infty} F_{Y|X=x}(y_v(x)-\xi)f_{\eta}(\xi) d\xi \\
&\overset{(i)}=\int_{0}^{\infty} \left[F_{Y|X=x}(y_v(x)-\xi)+ F_{Y|X=x}(y_v(x)+\xi)\right]f_{\eta}(\xi) d\xi,
\end{align*}
where $(i)$ holds by the symmetry of $\eta$. The conditional CDF of the clean score can be rewritten as
\[
\mathbb{P}(V \le v \mid X=x) =F_{Y|X=x}(y_v(x))= \int_{0}^{\infty} 2F_{Y|X=x}(y_v(x)) f_{\eta}(\xi) d\xi.
\]
Let
\begin{equation*}
\Delta(\xi)=F_{Y|X=x}(y_v(x)-\xi)+ F_{Y|X=x}(y_v(x)+\xi)-2F_{Y|X=x}(y_v(x)).
\end{equation*}
Then we have
\[
\Delta(\xi) = \int_{0}^{\xi} \left[ f_{Y|X}(y_v(x) + t) - f_{Y|X}(y_v(x) - t) \right] dt.
\]
For $t\in(0,\xi]$, we compare the distances from the mode as follows.
We have
\begin{align*}
(y_v(x)-t-m(x))^2-(y_v(x)+t-m(x))^2
=-4t(y_v(x)-m(x))>0,
\end{align*}
since $y_v(x)<m(x)$ for $v\leq F^{-1}_{\tilde{V}}(\alpha)$ and $t>0$.
Therefore, we have $|y_v(x) + t - m(x)| < |y_v(x) - t - m(x)|$. By the strict unimodality of $f_{Y|X=x}$ around $m(x)$, we have $f_{Y|X=x}(y_v(x) + t) > f_{Y|X=x}(y_v(x) - t)$. Consequently, we have $\Delta (\xi)>0$ for all $\xi>0$. Therefore,
\begin{equation*}
\mathbb{P}(\tilde{V}\leq v~|~X=x)>\mathbb{P}(V\leq v ~|~X=x),
\end{equation*}
which further implies that $F_{\tilde{V}}(v) >F_V(v)$ for any $v\leq F^{-1}_{\tilde{V}}(\alpha)$. Since $\alpha|\bar{\mathcal{R}}^*_{\text{cs}}|/m\leq\alpha$, we have $F^{-1}_{\tilde{V}}(\alpha|\bar{\mathcal{R}}^*_{\text{cs}}|/m)\leq F^{-1}_{\tilde{V}}(\alpha)$.
Therefore, we have
\begin{equation*}
F_VF_{\tilde{V}}^{-1}\left(\frac{\alpha|\bar{\mathcal{R}}^*_{\text{cs}}|}{m}\right)<F_{\tilde{V}}F_{\tilde{V}}^{-1}\left(\frac{\alpha|\bar{\mathcal{R}}^*_{\text{cs}}|}{m}\right)=\frac{\alpha|\bar{\mathcal{R}}^*_{\text{cs}}|}{m}.
\end{equation*}
This completes the proof by
\begin{equation*}
\limsup_{n\to \infty}\text{FDR}\leq  m\mathbb{E}\left[\frac{1}{|\bar{\mathcal{R}}^*_{\text{cs}}|} F_V F^{-1}_{\tilde{V}}\left(\frac{\alpha |\bar{\mathcal{R}}^*_{\text{cs}}|}{m}\right) \right]<\alpha.
\end{equation*}
\end{proof}
\subsection{Proof of Theorem \ref{thm:psp_general}}
\begin{proof}
Since $\hat{f}(\cdot)$ and $\{\mu^{\text{psp}}_k(\cdot)\}_{k\in[K]}$ are viewed as fixed functions, the score $\mu_{\hat{Y}_i}^{\text{psp}}(X_i)$ is a function of the random variable $X_i$.
Let $V_i=V(X_i)=-\mu_{\hat{Y}_i}^{\text{psp}}(X_i)$, $i\in[n+m]$. Denote the random variable $E$ by $E=\mathbb{I}\{\hat{f}(X)\neq Y\}$ with $\mathbb{P}(E=1)=\theta$. Correspondingly, let $\tilde{E}=\mathbb{I}\{\hat{f}(X)\neq \tilde{Y}\}$ with $\mathbb{P}(\tilde{E}=1)=\tilde{\theta}>0$. We have
\begin{equation*}
F_{V|E=1}(v)=\mathbb{P}(V(X)\leq v\mid E=1),\quad
F_{V|\tilde{E}=1}(v)=\mathbb{P}(V(X)\leq v\mid \tilde{E}=1).
\end{equation*}
Let $N_n=\sum_{i=1}^{n}\tilde{E}_i$ and rewrite $\{V_i:\tilde{E}_i=1,i\in[n]\}$ as $\{\tilde{Z}_1,\dots,\tilde{Z}_{N_n}\}$. Then $\{\tilde{p}_j^{\text{psp}}\}_{j=1}^{m}$ and $\tilde{\theta}_n$ can be rewritten as
\begin{equation*}
\tilde{p}_j^{\text{psp}}
=
\frac{1+\sum_{i=1}^{N_n}\mathbb{I}\{\tilde{Z}_i\leq V_{n+j}\}}{1+N_n},
\quad j\in[m],
\quad
\tilde{\theta}_n=\frac{1+N_n}{1+n}.
\end{equation*}
The PSP procedure is equivalent to running the BH procedure with inputs $\{\tilde{p}_j^{\text{psp}}\}_{j=1}^{m}$ at level $\alpha_n^*=\alpha/\tilde{\theta}_n$. Let $\mathcal{R}_{\gamma}(a_1,\dots,a_m)$ denote the rejection set obtained by applying the BH procedure at level $\gamma$ to $(a_1,\dots,a_m)$, and write
\begin{equation*}
\mathcal{R}
:=
\mathcal{R}_{\alpha_n^*}
\bigl(\tilde{p}_1^{\text{psp}},\dots,\tilde{p}_m^{\text{psp}}\bigr).
\end{equation*}
We have
\begin{align*}
\text{FDR}_{\text{o}}
&=\mathbb{E}\left[ \frac{\sum_{j=1}^m E_{n+j} \mathbb{I}\{j \in \mathcal{R}\}}{\max\{1, |\mathcal{R}|\}} \right]\\
&=\sum_{j=1}^{m}\mathbb{E}\left[\frac{\mathbb{I}\{j\in\mathcal{R}\}}{\max\{1, |\mathcal{R}|\}} \Big{|}~E_{n+j}=1\right]\mathbb{P}(E_{n+j}=1)+\sum_{j=1}^m 0\cdot \mathbb{P}(E_{n+j}=0)\\
&=\theta\sum_{j=1}^{m}\mathbb{E}\left[\frac{\mathbb{I}\{j\in\mathcal{R}\}}{\max\{1, |\mathcal{R}|\}} \Big{|}~E_{n+j}=1\right].
\end{align*}
Define the $\sigma$-algebra
\begin{equation*}
\mathcal{E}_{\text{cal}}
:=
\sigma(\tilde{E}_1,\dots,\tilde{E}_n).
\end{equation*}
Conditional on $\mathcal{E}_{\text{cal}}$, $N_n$ is fixed and
$\tilde{Z}_1,\dots,\tilde{Z}_{N_n}\overset{\mathrm{i.i.d.}}{\sim}F_{V|\tilde{E}=1}$.
For each $j\in[m]$, let
\begin{equation*}
\mathcal{R}_{j\to0}
:=
\mathcal{R}_{\alpha_n^*}
\bigl(
\tilde{p}_1^{\text{psp}},\dots,
\tilde{p}_{j-1}^{\text{psp}},0,
\tilde{p}_{j+1}^{\text{psp}},\dots,
\tilde{p}_m^{\text{psp}}
\bigr).
\end{equation*}
Conditional on $\mathcal{E}_{\text{cal}}$, the level $\alpha_n^*$ is fixed. On the event $\{j\in\mathcal{R}\}$, the step-up property of the BH procedure implies that replacing $\tilde{p}_j^{\text{psp}}$ by $0$ does not change the rejection set, so $\mathcal{R}_{j\to0}=\mathcal{R}$. Consequently,
\begin{equation*}
\frac{\mathbb{I}\{j\in\mathcal{R}\}}
{\max\{1,|\mathcal{R}|\}}
\leq
\frac{
\mathbb{I}\left\{
\tilde{p}_j^{\text{psp}}
\leq
|\mathcal{R}_{j\to0}|\alpha_n^*/m
\right\}}
{|\mathcal{R}_{j\to0}|},
\end{equation*}
where $|\mathcal{R}_{j\to0}|\geq1$ because its $j$th input equals zero. Therefore, we have
\begin{equation}\label{psp_fdr_reduction}
\text{FDR}_{\text{o}}
\leq
\theta\sum_{j=1}^{m}
\mathbb{E}\left[
\frac{
\mathbb{I}\left\{
\tilde{p}_j^{\text{psp}}
\leq
|\mathcal{R}_{j\to0}|\alpha_n^*/m
\right\}}
{|\mathcal{R}_{j\to0}|}
\,\Big|\,
E_{n+j}=1
\right].
\end{equation}

Define the empirical distribution function
\begin{equation*}
\tilde{F}_n(v)
:=
\frac{1+\sum_{k=1}^{N_n}\mathbb{I}\{\tilde{Z}_k\leq v\}}
{N_n+1},
\end{equation*}
so that $\tilde{p}_j^{\text{psp}}=\tilde{F}_n(V_{n+j})$. Let
\begin{equation*}
\mathcal{V}_{-j}:=\{V_{n+l}\}_{l\neq j},
\quad
\tilde{\mathcal{Z}}_{\text{cal}}
:=
\{\tilde{Z}_1,\dots,\tilde{Z}_{N_n}\},
\end{equation*}
and define
\begin{equation*}
\mathcal{G}_j
:=
\sigma\bigl(
\mathcal{E}_{\text{cal}},
\tilde{\mathcal{Z}}_{\text{cal}},
\mathcal{V}_{-j}
\bigr).
\end{equation*}
For every $l\neq j$, the value of $\tilde{p}_l^{\text{psp}}$ is determined by $V_{n+l}$ and $\tilde{\mathcal{Z}}_{\text{cal}}$. Hence $\mathcal{R}_{j\to0}$, $\alpha_n^*$, and $\tilde{F}_n$ are $\mathcal{G}_j$-measurable.
For $t\in\mathbb{R}$, define the upper generalized inverse
\begin{equation*}
\tilde{F}_{n,+}^{-1}(t)
:=
\inf\{v\in\mathbb{R}:\tilde{F}_n(v)>t\},
\end{equation*}
with $\inf\varnothing=+\infty$. Since $F_{V|E=1}$ is continuous, conditional on $\mathcal{G}_j$ and $E_{n+j}=1$ we have
\begin{align*}
&\mathbb{P}\left(
\tilde{p}_j^{\text{psp}}
\leq
\frac{|\mathcal{R}_{j\to0}|\alpha_n^*}{m}
\,\Big|\,
\mathcal{G}_j,E_{n+j}=1
\right)\\
&=
\mathbb{P}\left(
V_{n+j}
<
\tilde{F}_{n,+}^{-1}
\left(
\frac{|\mathcal{R}_{j\to0}|\alpha_n^*}{m}
\right)
\,\Big|\,
\mathcal{G}_j,E_{n+j}=1
\right)\\
&=
F_{V|E=1}\left(
\tilde{F}_{n,+}^{-1}
\left(
\frac{|\mathcal{R}_{j\to0}|\alpha_n^*}{m}
\right)
\right),
\end{align*}
where the last equality holds because, conditional on $E_{n+j}=1$, the variable $V_{n+j}$ is independent of $\mathcal{G}_j$ and has distribution function $F_{V|E=1}$. By applying the tower property to \eqref{psp_fdr_reduction}, we have
\begin{equation*}
\text{FDR}_{\text{o}}
\leq
\theta\sum_{j=1}^{m}
\mathbb{E}\left[
\frac{1}{|\mathcal{R}_{j\to0}|}
F_{V|E=1}\left(
\tilde{F}_{n,+}^{-1}
\left(
\frac{|\mathcal{R}_{j\to0}|\alpha_n^*}{m}
\right)
\right)
\Big|E_{n+j}=1\right].
\end{equation*}
Since the integrand is $\mathcal{G}_j$-measurable and $\mathcal{G}_j$ is independent of $E_{n+j}$, we have
\begin{equation*}\label{decop}
\text{FDR}_{\text{o}}
\leq
\theta\sum_{j=1}^{m}
\mathbb{E}\left[
\frac{1}{|\mathcal{R}_{j\to0}|}
F_{V|E=1}\left(
\tilde{F}_{n,+}^{-1}
\left(
\frac{|\mathcal{R}_{j\to0}|\alpha_n^*}{m}
\right)
\right)
\right].
\end{equation*}

By the strong law of large numbers, Slutsky's theorem, and the continuous mapping theorem, we have
\begin{equation}\label{psp_alpha_limit}
\alpha_n^*
=
\alpha\frac{n+1}{N_n+1}=\alpha \frac{1}{N_n/(n+1)+1/(n+1)}
\xrightarrow{\mathrm{a.s.}}
\frac{\alpha}{\tilde{\theta}}
=:\tilde{\alpha},
\qquad n\to\infty.
\end{equation}
Moreover, the Glivenko--Cantelli theorem gives
\begin{equation}\label{psp_cdf_limit}
\sup_{v\in\mathbb{R}}
\left|
\tilde{F}_n(v)-F_{V|\tilde{E}=1}(v)
\right|
\xrightarrow{\mathrm{a.s.}}0.
\end{equation}
In particular, for every $l\in[m]$,
\begin{equation}\label{psp_pvalue_limit}
\tilde{p}_l^{\text{psp}}
=
\tilde{F}_n(V_{n+l})
\xrightarrow{\mathrm{a.s.}}
q_l
:=
F_{V|\tilde{E}=1}(V_{n+l}).
\end{equation}
For each $j\in[m]$, let
\begin{equation*}
\bar{\mathcal{R}}_{j\to0}
:=
\mathcal{R}_{\tilde{\alpha}}
\bigl(
q_1,\dots,q_{j-1},0,q_{j+1},\dots,q_m
\bigr).
\end{equation*}
Since $F_{V|\tilde{E}=1}(V)$ has a continuous distribution,
\begin{equation*}
\mathbb{P}\left(
q_l=\frac{r\tilde{\alpha}}{m}
\text{ for some }l,r\in[m]
\right)=0.
\end{equation*}
Hence, for fixed $m$, almost surely,
\begin{equation*}
\Delta_j
:=
\min_{\substack{l\neq j\\r\in[m]}}
\left|
q_l-\frac{r\tilde{\alpha}}{m}
\right|>0.
\end{equation*}
Combining \eqref{psp_alpha_limit}, \eqref{psp_pvalue_limit}, and the finiteness of $m$, we obtain, eventually almost surely,
\begin{equation*}
\mathbb{I}\left\{
\tilde{p}_l^{\text{psp}}
\leq
\frac{r\alpha_n^*}{m}
\right\}
=
\mathbb{I}\left\{
q_l
\leq
\frac{r\tilde{\alpha}}{m}
\right\},
\qquad l\neq j,\quad r\in[m].
\end{equation*}
Therefore,
\begin{equation*}\label{psp_rejection_stability}
\mathcal{R}_{j\to0}
=
\bar{\mathcal{R}}_{j\to0}
\qquad\text{eventually almost surely}.
\end{equation*}
In particular,
\begin{equation}\label{threshold_psp}
\frac{\alpha_n^*|\mathcal{R}_{j\to0}|}{m}
\xrightarrow{\mathrm{a.s.}}
\frac{\alpha|\bar{\mathcal{R}}_{j\to0}|}{m\tilde{\theta}}.
\end{equation}

Since $|\bar{\mathcal{R}}_{j\to0}|\geq1$ and $\alpha<\tilde{\theta}$, we have
\begin{equation*}
0<
\frac{\alpha}{m\tilde{\theta}}
\leq
\frac{\alpha|\bar{\mathcal{R}}_{j\to0}|}{m\tilde{\theta}}
\leq
\frac{\alpha}{\tilde{\theta}}
<1.
\end{equation*}
Let
\begin{equation*}
a:=\frac{\alpha}{2m\tilde{\theta}},
\qquad
b:=\frac{1+\alpha/\tilde{\theta}}{2}.
\end{equation*}
By \eqref{threshold_psp}, for all sufficiently large $n$,
$\alpha_n^*|\mathcal{R}_{j\to0}|/m\in[a,b]$ almost surely. Since $F_{V|\tilde{E}=1}$ is continuous and strictly increasing on its support, \eqref{psp_cdf_limit} implies
\begin{equation}\label{psp_quantile_limit}
\sup_{t\in[a,b]}
\left|
\tilde{F}_{n,+}^{-1}(t)
-
F_{V|\tilde{E}=1}^{-1}(t)
\right|
\xrightarrow{\mathrm{a.s.}}0.
\end{equation}
Combining \eqref{threshold_psp}, \eqref{psp_quantile_limit}, and the continuity of $F_{V|\tilde{E}=1}^{-1}$ on $[a,b]$,
by continuous mapping theorem, we obtain
\begin{equation*}
\tilde{F}_{n,+}^{-1}
\left(
\frac{|\mathcal{R}_{j\to0}|\alpha_n^*}{m}
\right)
\xrightarrow{\mathrm{a.s.}}
F_{V|\tilde{E}=1}^{-1}
\left(
\frac{\alpha|\bar{\mathcal{R}}_{j\to0}|}{m\tilde{\theta}}
\right).
\end{equation*}
Since $F_{V|E=1}$ is continuous, the continuous mapping theorem yields
\begin{equation}\label{psp_integrand_limit}
\frac{1}{|\mathcal{R}_{j\to0}|}
F_{V|E=1}\left(
\tilde{F}_{n,+}^{-1}
\left(
\frac{|\mathcal{R}_{j\to0}|\alpha_n^*}{m}
\right)
\right)
\xrightarrow{\mathrm{a.s.}}
\frac{1}{|\bar{\mathcal{R}}_{j\to0}|}
F_{V|E=1}\left(
F_{V|\tilde{E}=1}^{-1}
\left(
\frac{\alpha|\bar{\mathcal{R}}_{j\to0}|}{m\tilde{\theta}}
\right)
\right).
\end{equation}
The random variables on the left-hand side of \eqref{psp_integrand_limit} are bounded by one. Therefore, by \eqref{decop} and the dominated convergence theorem,
\begin{align*}
\limsup_{n\to\infty}\text{FDR}_{\text{o}}
&\leq
\theta\sum_{j=1}^{m}
\mathbb{E}\left[
\frac{1}{|\bar{\mathcal{R}}_{j\to0}|}
F_{V|E=1}\left(
F_{V|\tilde{E}=1}^{-1}
\left(
\frac{\alpha|\bar{\mathcal{R}}_{j\to0}|}{m\tilde{\theta}}
\right)
\right)
\right].
\end{align*}
Let $\bar{\mathcal{R}}_{\text{psp}}$ be the rejection set obtained by applying the BH procedure at level $\alpha/\tilde{\theta}$ to $(0,q_2,\dots,q_m)$.
Since $q_1,\dots,q_m$ are i.i.d. and the size of the BH rejection set is invariant under permutations of its inputs, we have
\begin{equation*}
|\bar{\mathcal{R}}_{j\to0}|
\overset{d}{=}
|\bar{\mathcal{R}}_{\text{psp}}|,
\qquad j\in[m].
\end{equation*}
Consequently,
\begin{equation*}
\limsup_{n\to\infty}\text{FDR}_{\text{o}}
\leq
m\theta\mathbb{E}\left[
\frac{1}{|\bar{\mathcal{R}}_{\text{psp}}|}
F_{V|E=1}\left(
F_{V|\tilde{E}=1}^{-1}
\left(
\frac{\alpha|\bar{\mathcal{R}}_{\text{psp}}|}{m\tilde{\theta}}
\right)
\right)
\right].
\end{equation*}
This completes the proof.
\end{proof}

\subsection{Proof of Proposition~\ref{random_flip_psp}}
\begin{proof}
For $\tilde{\theta}$, by definition of the random flip and the condition that $\theta<1/2$, we have
\begin{align*}
\tilde{\theta}&=\mathbb{P}(\tilde{E}=1|E=1)\mathbb{P}(E=1)+\mathbb{P}(\tilde{E}=1|E=0)\mathbb{P}(E=0)\\
&=(1-\epsilon)\theta+\epsilon(1-\theta)\\
&=\theta+\epsilon(1-2\theta)>\theta.
\end{align*}
For any $v\leq F_{V|\tilde{E}=1}^{-1}(\alpha/\tilde{\theta})$, we have 
\begin{align*}
F_{V|\tilde{E}=1}(v)
&=\frac{\mathbb{P}(V\leq v, \tilde{E}=1,E=1)+\mathbb{P}(V\leq v,\tilde{E}=1,E=0)}{\mathbb{P}(\tilde{E}=1)}\\
&\overset{(i)}{=}\frac{\mathbb{P}(V\leq v~|~E=1)\mathbb{P}(E=\tilde{E}=1)+\mathbb{P}(V\leq v~|~E=0)\mathbb{P}(\tilde{E}=1,E=0)}{\mathbb{P}(\tilde{E}=1)}\\
&=\frac{(1-\epsilon)\theta}{\tilde{\theta}}\mathbb{P}(V\leq v~|~E=1)+\frac{\epsilon(1-\theta)}{\tilde{\theta}}\mathbb{P}(V\leq v~|~E=0)\\
&=\frac{(1-\epsilon)\theta}{\tilde{\theta}} F_{V|E=1}(v)+\frac{\epsilon(1-\theta)}{\tilde{\theta}}F_{V|E=0}(v)\\
&\overset{(ii)}{>}\frac{(1-\epsilon)\theta+\epsilon(1-\theta)}{\tilde{\theta}}F_{V|E=1}(v)\\
&=F_{V|E=1}(v),
\end{align*}
where $(i)$ holds by Bayes' rule and the fact that $V\perp\!\!\!\perp \tilde{E} ~|~E$, and $(ii)$ holds by the assumption. Therefore, for any $t\leq \alpha/\tilde{\theta}$, we have
\begin{equation*}
F_{V|E=1}\left(F^{-1}_{V|\tilde{E}=1}(t)\right)<F_{V|\tilde{E}=1}\left(F^{-1}_{V|\tilde{E}=1}(t)\right)=t.
\end{equation*}
By Theorem~\ref{thm:psp_general}, we have
\begin{equation*}
\limsup_{n\to\infty}\text{FDR}_{\text{o}}\leq m\theta\mathbb{E}\left[ \frac{1}{|\bar{\mathcal{R}}_{\text{psp}}|}F_{V|E=1}\left(F_{V|\tilde{E}=1}^{-1}\left(\frac{\alpha|\bar{\mathcal{R}}_{\text{psp}}|}{m\tilde{\theta}}\right)\right)\right]<\alpha\frac{\theta}{\tilde{\theta}}.
\end{equation*}
\end{proof}
\subsection{Proof of Theorem~\ref{theorem2}}
We fix $g\in[G]$ and write $\mathcal A=\mathcal A_g$ and $\alpha_{\mathcal A}=\alpha_{\mathcal A_g}$. Since $\hat f$ and $U$ are fixed or trained on an independent dataset, we condition on the training stage and treat them as deterministic.
For $t\in[0,1]$, define
\begin{align*}
R^\infty_{\mathcal A}(t)
&=\sum_{k\in\mathcal A}\mathbb P(\hat Y=k,U\le t),\\
V^\infty_{\mathcal A}(t)
&=\sum_{k\in\mathcal A}\mathbb P(\hat Y=k,Y\ne k,U\le t),\\
T^\infty_{\mathcal A}(t)
&=\sum_{k\in\mathcal A}\mathbb P(\hat Y=Y=k,U\le t).
\end{align*}
Then $R^\infty_{\mathcal A}(t)=V^\infty_{\mathcal A}(t)+T^\infty_{\mathcal A}(t)$. The oracle feasible set for the threshold is rewritten as
\[
\mathcal F_{\mathcal A}
=
\left\{
t\in[0,1]:
R^\infty_{\mathcal A}(t)>0,\ 
\frac{V^\infty_{\mathcal A}(t)}{R^\infty_{\mathcal A}(t)}
\le \alpha_{\mathcal A}
\right\}.
\]
For the test sample, define
\begin{align}
R_{m,\mathcal A}(t)
&=\frac1m\sum_{k\in\mathcal A}\sum_{j=1}^m
\mathbb I\{\hat Y_{n+j}=k,U_{n+j}\le t\},\label{equ:R_m_A}\\
V_{m,\mathcal A}(t)
&=\frac1m\sum_{k\in\mathcal A}\sum_{j=1}^m
\mathbb I\{\hat Y_{n+j}=k,Y_{n+j}\ne k,U_{n+j}\le t\},\label{equ:V_m_A}\\
T_{m,\mathcal A}(t)
&=\frac1m\sum_{k\in\mathcal A}\sum_{j=1}^m
\mathbb I\{\hat Y_{n+j}=Y_{n+j}=k,U_{n+j}\le t\}.\label{equ:T_m_A}
\end{align}
The calibration estimate of the false-selection mass under threshold $t$ is
\[
\widehat V_{n,m,\mathcal A}(t)
=
\sum_{k\in\mathcal A}
\frac{|\mathcal S_k^{\rm test}|}{m}
\left[
\frac{1}{|\mathcal S_k^{\rm cal}|}
\sum_{i\in\mathcal S_k^{\rm cal}}
w_k(X_i)\mathbb I\{\tilde Y_i\ne k,U_i\le t\}
\right].
\]
If $|\mathcal S_k^{\rm cal}|=0$, we let $\hat{V}_{n,m,\mathcal{A}}(t)=0$ for simplicity, since the event $\{|\mathcal S_k^{\rm cal}|=0\}$ occurs only finitely often almost surely due to $\mathbb P(\hat Y=k)>0$, $k\in\mathcal A$ (Assumption \ref{techniqe_condition}). We rewrite the
$\widehat{\mathrm{FDP}}_{\mathcal{A}_g}(t)$ in \eqref{estimate_fdp} as
\[
\widehat{\mathrm{FDP}}_{\mathcal A}^{(n,m)}(t)=
\frac{\widehat V_{n,m,\mathcal A}(t)}
{R_{m,\mathcal A}(t)\vee m^{-1}}.
\]
Define the empirical feasible set as
\[
\widehat{\mathcal F}_{n,m}
:=
\left\{
t\in[0,1]:
\widehat V_{n,m,\mathcal A}(t)
\le
\alpha_{\mathcal A}\{R_{m,\mathcal A}(t)\vee m^{-1}\}
\right\}.
\]
Therefore, the selected threshold $\hat t_{\mathcal A}$ maximizes $R_{m,\mathcal A}(t)$ over $\widehat{\mathcal F}_{n,m}$ whenever this set is nonempty; we will show below that it is eventually nonempty.
We first establish the following lemma.

\begin{lemma}\label{convergence_lemma}
Under the same assumptions of Theorem~\ref{theorem2}, as $n,m\to\infty$, we have
\begin{align}
\sup_{t\in[0,1]}|R_{m,\mathcal A}(t)-R^\infty_{\mathcal A}(t)|&\xrightarrow{\rm a.s.}0,\label{eq:unif_R}\\
\sup_{t\in[0,1]}|V_{m,\mathcal A}(t)-V^\infty_{\mathcal A}(t)|&\xrightarrow{\rm a.s.}0,\label{eq:unif_V}\\
\sup_{t\in[0,1]}|T_{m,\mathcal A}(t)-T^\infty_{\mathcal A}(t)|&\xrightarrow{\rm a.s.}0,\label{eq:unif_T}\\
\sup_{t\in[0,1]}|\widehat V_{n,m,\mathcal A}(t)-V^\infty_{\mathcal A}(t)|&\xrightarrow{\rm a.s.}0.\label{eq:unif_Vhat}
\end{align}
\end{lemma}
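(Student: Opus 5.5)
The plan is to reduce all four statements to uniform laws of large numbers over the class of indicator functions indexed by the threshold $t$, i.e. Glivenko--Cantelli-type arguments. For \eqref{eq:unif_R}--\eqref{eq:unif_T}, fix $k\in\mathcal A$. The summands $\mathbb I\{\hat Y_{n+j}=k,U_{n+j}\le t\}$ (and likewise with $Y_{n+j}\ne k$ or $Y_{n+j}=k$ added) are i.i.d. across $j$. In $t$ they are nondecreasing, right-continuous step functions. Hence, for each $k$, $t\mapsto \mathbb P(\hat Y=k,U\le t)$ is a (sub-)distribution function. The classical Glivenko--Cantelli argument then applies verbatim: bracket $[0,1]$ by finitely many points at which the limit function increases by at most $\varepsilon$, apply the SLLN at those points, and use monotonicity in between. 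This gives uniform a.s. convergence of each class-$k$ term. Summing over the finitely many $k\in\mathcal A$ gives \eqref{eq:unif_R}--\eqref{eq:unif_T}. Continuity of $G_{k,l}$ from Assumption~\ref{techniqe_condition}(2) is not even needed here, but it makes the bracketing cleaner because there are no atoms.

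For \eqref{eq:unif_Vhat}, I would decompose $\widehat V_{n,m,\mathcal A}(t)=\sum_{k\in\mathcal A}\frac{|\mathcal S_k^{\rm test}|}{m}\cdot\frac{n}{|\mathcal S_k^{\rm cal}|}\cdot\widehat W_{n,k}(t)$, where
\[
\widehat W_{n,k}(t)=\frac1n\sum_{i=1}^n w_k(X_i)\mathbb I\{\hat Y_i=k,\tilde Y_i\ne k,U_i\le t\}.
\]
By the SLLN and $\mathbb P(\hat Y=k)>0$, the ratio $|\mathcal S_k^{\rm test}|/m\to \mathbb P(\hat Y=k)$ a.s. and $n/|\mathcal S_k^{\rm cal}|\to 1/\mathbb P(\hat Y=k)$ a.s., so these two factors cancel in the limit. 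It then remains to show
\[
\sup_t\bigl|\widehat W_{n,k}(t)-\mathbb E[w_k(X)\mathbb I\{\hat Y=k,\tilde Y\ne k,U\le t\}]\bigr|\to0 \quad\text{a.s.}
\]
I would again use monotone bracketing. Since $w_k\ge0$ and is bounded by $M$, the summands are nonnegative, bounded, and nondecreasing in $t$. The limit $t\mapsto\mathbb E[\cdots]$ is a bounded nondecreasing function, so the same finite-grid plus monotonicity argument works, with the SLLN applied at grid points and at left limits (to handle possible jumps). Finally I would identify the limit, following Remark~\ref{bayes_fdp}. Because $\hat Y=\hat f(X)$ and $U=U(X)$ are functions of $X$, conditioning on $X$ gives
\[
\mathbb E[w_k(X)\mathbb I\{\hat Y=k,U\le t\}\mathbb P(\tilde Y\ne k\mid X)]=\mathbb E[\mathbb I\{\hat Y=k,U\le t\}\mathbb P(Y\ne k\mid X)]=\mathbb P(\hat Y=k,Y\ne k,U\le t).
\]
Summing over $k$ yields $V^\infty_{\mathcal A}(t)$. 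The uniform bound then follows from the triangle inequality, since $\widehat W_{n,k}\le M$ is bounded and the prefactors converge.

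Two points need care. The first is that the limit is taken jointly in $n,m\to\infty$. Here I would observe that calibration and test quantities each depend only on their own sample size, so each almost-sure statement holds along any way $n,m$ grow jointly. The second is the convention on the event $|\mathcal S_k^{\rm cal}|=0$, which occurs only finitely often a.s. I expect the main (mild) obstacle to be the uniformity in $t$ for the weighted empirical process in $\widehat W_{n,k}$. Handling jumps carefully via left limits, or invoking a Glivenko--Cantelli theorem for the VC class $\{x\mapsto w_k(x)\mathbb I\{U(x)\le t\}\}$ with bounded envelope, resolves it.
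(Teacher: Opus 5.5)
Your proposal is correct and takes essentially the same route as the paper. You use uniform laws of large numbers over the threshold-indexed indicator classes for $R_{m,\mathcal A}$, $V_{m,\mathcal A}$ and $T_{m,\mathcal A}$. For $\widehat V_{n,m,\mathcal A}$ you use the same factorization into $(|\mathcal S_k^{\rm test}|/m)(n/|\mathcal S_k^{\rm cal}|)$ times a weighted empirical average, and you identify the limit by conditioning on $X$ and using the definition of $w_k$. The only difference is cosmetic: the paper cites a VC-class Glivenko--Cantelli theorem with a bounded envelope, whereas you carry out the monotone bracketing argument by hand, which is exactly how that theorem is proved in this one-parameter monotone setting.
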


\begin{proof}
Consider the function class that
\begin{equation*}
\mathcal{G}_R = \left\{ x \mapsto \sum_{k \in \mathcal{A}} \mathbb{I}\left\{\hat{f}(x) = k, U(x) \le t\right\} : t \in [0,1] \right\}.
\end{equation*}
Since each function in $\mathcal{G}_R$ is a finite sum of a one-dimensional threshold indicator of the form $\mathbb{I}\{U(x)\leq t\}$ multiplied by fixed indicators $\mathbb{I}\{\hat{f}(x)=k\}$, its Vapnik-Chervonenkis (VC) dimension is finite.
Therefore, by the Glivenko-Cantelli theorem, we have
\begin{equation}\label{concergence_1}
\sup_{t \in [0,1]} \left| R_{m,\mathcal{A}}(t) - R_{\mathcal{A}}^\infty(t) \right| \xrightarrow{a.s.} 0 \quad \text{as } m \to \infty.
\end{equation}
Similarly, \eqref{eq:unif_V} and \eqref{eq:unif_T} can also be proved.
It remains only to justify \eqref{eq:unif_Vhat}.
For $\widehat{V}_{n,m,\mathcal{A}}(t)$, we first fix any $k\in\mathcal A$. 
Define
\begin{equation*}
\widehat{V}_{n,m,k}(t) := \frac{\hat{\pi}_k^{\text{test}}}{\hat{\pi}_k^{\text{cal}}} \cdot C_{n,k}(t),
\end{equation*}
where $\hat{\pi}_k^{\text{test}} := |\mathcal{S}_k^{\text{test}}|/m$, $\hat{\pi}_k^{\text{cal}} := |\mathcal{S}_k^{\text{cal}}|/n$, and
\begin{equation*}
C_{n,k}(t) := \frac{1}{n} \sum_{i=1}^n w_k(X_i)\mathbb{I}\{\hat{Y}_i=k\} \mathbb{I}\{\tilde{Y}_i \neq k\} \mathbb{I}\{U_i\leq t\}.
\end{equation*}
Then $\widehat{V}_{n,m,\mathcal{A}}(t)=\sum_{k\in\mathcal{A}}\widehat{V}_{n,m,k}(t)$. 
We first show that for each $k\in\mathcal{A}$, as $n,m\to\infty$, we have
$\sup_{t} |\widehat{V}_{n,m,k}(t) - V_k^{\infty}(t)| \xrightarrow{a.s.} 0$, where 
$V_k^{\infty}(t)=\mathbb{P}(\hat{Y}=k,Y\neq k, U\leq t)$.
For $V_k^\infty(t)$, 
\begin{align*}
\mathbb{E} \left[ w_k(X) \mathbb{I}\{\hat{Y}=k\} \mathbb{I}\{\tilde{Y} \neq k\} \mathbb{I}\{U \le t\} \right]
&\overset{(i)}{=}\mathbb{E}_{X}\left[w_k(X)\mathbb{I}\{\hat{Y}=k\}\mathbb{I}\{U(X)\leq t\}\mathbb{E}\left[\mathbb{I}\{\tilde{Y}\neq k\}|X\right]\right]\\
&\overset{(ii)}{=}\mathbb{E}_{X}\left[\frac{\mathbb{P}(Y\neq k|X)}{\mathbb{P}(\tilde{Y}\neq k|X)}\mathbb{I}\{\hat{Y}=k\}\mathbb{I}\{U(X)\leq t\}\mathbb{P}(\tilde{Y}\neq k|X)\right]\\
&=\mathbb{E}_X\left[\mathbb{P}(Y\neq k|X)\mathbb{I}\{\hat{Y}=k\}\mathbb{I}\{U(X)\leq t\}\right]\\
&\overset{(iii)}{=} \mathbb{P}\left(\hat{Y}=k, Y \neq k, U \le t\right)\\
&=V_k^{\infty}(t),
\end{align*}
where $(i)$ and $(iii)$ hold by the tower property and the fact that $\hat{Y}$ and $U$ are deterministic functions of $X$, and $(ii)$ holds by the definition of $w_k(X)$.
Define the function class
\begin{equation*}
\mathcal{G}_C = \{(x,\tilde{y}) \mapsto w_k(x) \mathbb{I}\{\hat{f}(x)=k\} \mathbb{I}\{\tilde{y} \neq k\} \mathbb{I}\{U(x) \le t\}:t\in[0,1] \},
\end{equation*}
which is formed by taking the product of a VC-subgraph indicator class and a deterministic weight function. Since $w_k(\cdot)$ is uniformly bounded, $\mathcal{G}_C$ is bounded by an envelope. By the Glivenko-Cantelli theorem, as $n\to\infty$, we have
\begin{align}\label{each_class_convergence}
\sup_{t} \left|C_{n,k}(t) - V_k^\infty(t) \right| \xrightarrow{\mathrm{a.s.}}0.
\end{align}
For $\hat{\pi}_k^{\text{test}}$, by the Strong Law of Large Numbers, we have
\begin{align*}
\hat{\pi}_k^{\text{test}}=\frac{1}{m}\sum_{j=1}^{m}\mathbb{I}\{\hat{Y}_{n+j}=k\}\xrightarrow{\mathrm{a.s.}} \mathbb{P}(\hat{Y}=k)=:\pi_k>0, \quad \text{as } m\to \infty.
\end{align*}
Similarly, we have $\hat{\pi}_k^{\text{cal}} \xrightarrow{\mathrm{a.s.}} \pi_k$ as $n\to \infty$. By the Continuous Mapping Theorem, we have
\begin{equation}\label{prop_to_1}
\frac{\hat{\pi}_k^{\text{test}}}{\hat{\pi}_k^{\text{cal}}}\xrightarrow{\mathrm{a.s.}} 1, \quad \text{as } n\to\infty,~m\to \infty.
\end{equation}
Therefore, we have
\begin{align*}
\sup_{t} \left| \widehat{V}_{n,m,k}(t) - V_k^\infty(t) \right|
&=\sup_{t} \left| \frac{\hat{\pi}_k^{\text{test}}}{\hat{\pi}_k^{\text{cal}}} \left( C_{n,k}(t) -V_k^{\infty}(t)\right) + \left(\frac{\hat{\pi}_k^{\text{test}}}{\hat{\pi}_k^{\text{cal}}}-1\right)V_k^{\infty}(t)
\right|\\
&\le \left| \frac{\hat{\pi}_k^{\text{test}}}{\hat{\pi}_k^{\text{cal}}} \right| \cdot \sup_{t} \left| C_{n,k}(t) - V_k^\infty(t) \right| + \left| \frac{\hat{\pi}_k^{\text{test}}}{\hat{\pi}_k^{\text{cal}}} - 1 \right| \cdot \sup_{t} \left| V_k^\infty(t) \right|\\
&\xrightarrow{\mathrm{a.s.}} 0 \quad \text{as } n\to\infty, m\to\infty,
\end{align*}
where the last step follows from \eqref{prop_to_1}, \eqref{each_class_convergence}, and the fact that $\sup_{t} \left| V_k^\infty(t) \right|\leq \pi_k<1$.
Thus,
\begin{align}\label{concergence_2}
\sup_{t} \left| \widehat{V}_{n,m,\mathcal{A}}(t) - V_{\mathcal{A}}^\infty(t) \right|
&=\sup_{t} \left| \sum_{k\in\mathcal{A}}\left(\widehat{V}_{n,m,k}(t) - V_{k}^\infty(t)\right) \right|  \nonumber\\
&\leq \sum_{k\in\mathcal{A}} \sup_{t} \left|\widehat{V}_{n,m,k}(t) - V_{k}^\infty(t) \right| \xrightarrow{\mathrm{a.s.}} 0, \quad \text{as }  n\to\infty, m\to\infty.
\end{align}
\end{proof}

We now prove FDR control. By Assumption~\ref{techniqe_condition}, there exists $t_0\in[0,1]$ such that
\[
R^\infty_{\mathcal A}(t_0)>0,
\qquad
V^\infty_{\mathcal A}(t_0)<\alpha_{\mathcal A}R^\infty_{\mathcal A}(t_0).
\]
Thus, by Lemma~\ref{convergence_lemma},
\[
\frac{\widehat V_{n,m,\mathcal A}(t_0)}
{R_{m,\mathcal A}(t_0)\vee m^{-1}}
\xrightarrow{\rm a.s.}
\frac{V^\infty_{\mathcal A}(t_0)}
{R^\infty_{\mathcal A}(t_0)}
<\alpha_{\mathcal A},\quad \text{as }n\to\infty,m\to\infty.
\]
Hence $t_0\in\widehat{\mathcal F}_{n,m}$ eventually almost surely, which proves that $\widehat{\mathcal F}_{n,m}$ is eventually nonempty. By the definition of $\hat t_{\mathcal{A}}$ and Lemma~\ref{convergence_lemma}, we have
\[
R_{m,\mathcal A}(\hat t_{\mathcal A})
\geq R_{m,\mathcal A}(t_0)
\xrightarrow{\rm a.s.}
R^\infty_{\mathcal A}(t_0)>0.
\]
Consequently, there exists a constant $c_0>0$ such that
\begin{equation}\label{eq:denominator_lower_bound}
R_{m,\mathcal A}(\hat t_{\mathcal A})\ge c_0
\end{equation}
eventually almost surely. Again by the definition of $\hat{t}_{\mathcal{A}}$, we have
\[
\widehat V_{n,m,\mathcal A}(\hat t_{\mathcal A})
\le
\alpha_{\mathcal A}\max\left\{R_{m,\mathcal A}(\hat t_{\mathcal A}), m^{-1}\right\}.
\]
Using \eqref{eq:denominator_lower_bound}, the maximum on the right-hand side equals $R_{m,\mathcal A}(\hat t_{\mathcal A})$ eventually almost surely. Let
\[
\Delta_{n,m}:=\sup_{t\in[0,1]}\left|\widehat{V}_{n,m,\mathcal{A}}(t)-V_{m,\mathcal{A}}(t)\right|.
\]
Therefore, we have
\[
V_{m,\mathcal A}(\hat t_{\mathcal A})
\le
\widehat V_{n,m,\mathcal A}(\hat t_{\mathcal A})+\Delta_{n,m}
\le
\alpha_{\mathcal A}R_{m,\mathcal A}(\hat t_{\mathcal A})+\Delta_{n,m}\quad \text{a.s.}.
\]
Dividing by $R_{m,\mathcal A}(\hat t_{\mathcal A})$ gives
\begin{align*}
\limsup_{n,m\to\infty}
\frac{V_{m,\mathcal A}(\hat t_{\mathcal A})}
{R_{m,\mathcal A}(\hat t_{\mathcal A})}
&\leq
\alpha_{\mathcal A}+ \limsup_{n,m\to\infty} \frac{\Delta_{n,m}}{R_{m,\mathcal{A}}(\hat{t}_{\mathcal{A}})}\\
&\overset{(i)}{\leq}\alpha_{\mathcal{A}}+\limsup_{n,m\to\infty}\frac{\Delta_{n,m}}{c_0}\\
&\overset{(ii)}{\leq} \alpha_{\mathcal{A}},
\end{align*}
where $(i)$ and $(ii)$ hold by \eqref{eq:denominator_lower_bound} and Lemma~\ref{convergence_lemma}, respectively.
The realized FDP satisfies
\[
\mathrm{FDP}_{\mathcal A}
=
\frac{mV_{m,\mathcal A}(\hat t_{\mathcal A})}
{1\vee mR_{m,\mathcal A}(\hat t_{\mathcal A})}.
\]
By \eqref{eq:denominator_lower_bound}, $mR_{m,\mathcal A}(\hat t_{\mathcal A})\to\infty$ eventually, which implies that
\[
\limsup_{n,m\to\infty}\mathrm{FDP}_{\mathcal A}
\le
\alpha_{\mathcal A}
\qquad\text{a.s.}
\]
Since $0\le\mathrm{FDP}_{\mathcal A}\le1$, the reverse Fatou lemma yields
\[
\limsup_{n,m\to\infty}\mathrm{FDR}_{\mathcal A}
=
\limsup_{n,m\to\infty}\mathbb E[\mathrm{FDP}_{\mathcal A}]
\le
\mathbb E\!\left[\limsup_{n,m\to\infty}\mathrm{FDP}_{\mathcal A}\right]
\le
\alpha_{\mathcal A}.
\]

It remains to prove the power statement. 
We rewrite the selected threshold $\hat{t}_{\mathcal{A}}$ under sample sizes $(n,m)$ by $\hat{t}_{\mathcal{A}}^{(n,m)}$.
We first show that every accumulation point of $\{\hat t_{\mathcal A}^{(n,m)}\}$ maximizes $R^\infty_{\mathcal A}$ over $\mathcal F_{\mathcal A}$. Let $\hat t_{\mathcal A}^{(n_\ell,m_\ell)}\to t^{**}$ be any convergent subsequence. By \eqref{eq:denominator_lower_bound} and \eqref{eq:unif_R}, we have $R^\infty_{\mathcal A}(t^{**})>0.$
By the definition of the data-dependent threshold and the uniform convergences established in Lemma~\ref{convergence_lemma}, we have 
\begin{align*}
V_{\mathcal{A}}^{\infty}(t^{**})
&=
\limsup_{\ell \to\infty}\widehat{V}_{n_{\ell},m_{\ell},\mathcal{A}}(\hat{t}_{\mathcal{A}}^{(n_{\ell},m_{\ell})})\\
&\leq \limsup_{\ell \to\infty} \max\{\alpha_{\mathcal{A}}R_{m_{\ell},\mathcal{A}}(\hat{t}_{\mathcal{A}}^{(n_{\ell},m_{\ell})}),\alpha_{\mathcal{A}}/m_{\ell}\}\\
&=\alpha_{\mathcal{A}}R^\infty_{\mathcal A}(t^{**}).
\end{align*}
Therefore, we have $t^{**}\in\mathcal{F}_{\mathcal A}$.
Let
$R^*_{\mathcal A}:=\sup_{t\in\mathcal F_{\mathcal A}}R^\infty_{\mathcal A}(t)$.
We fix any $\delta>0$. By $(3)$ of Assumption~\ref{techniqe_condition}, there exists $t_\delta\in[0,1]$ such that
\[
V^\infty_{\mathcal A}(t_\delta)
<
\alpha_{\mathcal A}R^\infty_{\mathcal A}(t_\delta),
\quad
R^\infty_{\mathcal A}(t_\delta)\ge R^*_{\mathcal A}-\epsilon.
\]
The same argument used for $t_0$ shows that $t_\delta\in\widehat{\mathcal F}_{n,m}$ eventually. Hence
\[
R_{m_\ell,\mathcal A}(\hat t_{\mathcal A}^{(n_\ell,m_\ell)})
\ge
R_{m_\ell,\mathcal A}(t_\delta)
\]
eventually. Letting $\ell\to\infty$ and using \eqref{eq:unif_R}, we obtain
\[
R^\infty_{\mathcal A}(t^{**})
=
\limsup_{l\to\infty}R_{m_\ell,\mathcal A}(\hat t_{\mathcal A}^{(n_\ell,m_\ell)})
\geq \limsup_{l\to\infty}
R_{m_\ell,\mathcal A}(t_\delta)
=
R^\infty_{\mathcal A}(t_\delta)
\ge
R^*_{\mathcal A}-\delta.
\]
Since $\epsilon>0$ is arbitrary and $t^{**}\in\mathcal F_{\mathcal A}$, we have
$R^\infty_{\mathcal A}(t^{**})=R^*_{\mathcal A}.$

Note that the rejection regions $\{U\le t\}$ are nested for different $t$. Therefore, if $t^{**}$ maximizes $R^\infty_{\mathcal A}$ over $\mathcal F_{\mathcal A}$, then it also maximizes $T^\infty_{\mathcal A}$ over $\mathcal F_{\mathcal A}$. Specifically, for any $s\in\mathcal F_{\mathcal A}$, if $s\le t^{**}$, then $T^\infty_{\mathcal A}(s)\le T^\infty_{\mathcal A}(t^{**})$. If $s>t^{**}$, maximality and monotonicity give $R^\infty_{\mathcal A}(s)=R^\infty_{\mathcal A}(t^{**})$, and hence
\[
\mathbb P(\hat Y\in\mathcal A,t^{**}<U\le s)=0.
\]
This also leads to
\[
\mathbb P(\hat Y=Y\in\mathcal A,t^{**}<U\le s)=0,
\]
which further implies $T^\infty_{\mathcal A}(s)=T^\infty_{\mathcal A}(t^{**})$. Therefore, we have
\[
T^\infty_{\mathcal A}(t^{**})
=
\sup_{t\in\mathcal F_{\mathcal A}}T^\infty_{\mathcal A}(t).
\]
Together with the uniform convergence in \eqref{eq:unif_T}, we have
\[
T_{m,\mathcal A}(\hat t_{\mathcal A}^{(n,m)})
\xrightarrow{\rm a.s.}
\sup_{t\in\mathcal F_{\mathcal A}}T^\infty_{\mathcal A}(t)\quad \text{as }n\to\infty,m\to\infty.
\]
By the strong law of large numbers, as $m\to\infty$, we have
\[
D_m:=\frac1m\sum_{j=1}^m\mathbb I\{Y_{n+j}\in\mathcal A\}
\xrightarrow{\rm a.s.}
\mathbb P(Y\in\mathcal A)>0.
\]
Recall that the realized recovery-type power equals
\[
\frac{
\sum_{k\in\mathcal A}\sum_{j=1}^{m}
\mathbb I\{\hat Y_{n+j}=Y_{n+j}=k,U_{n+j}\le\hat t_{\mathcal A}^{(n,m)}\}
}{
1\vee\sum_{j=1}^{m}\mathbb I\{Y_{n+j}\in\mathcal A\}
}
=
\frac{T_{m,\mathcal A}(\hat t_{\mathcal A}^{(n,m)})}
{m^{-1}\vee D_m}.
\]
By the uniform convergence in \eqref{eq:unif_T}, as $n,m\to \infty$, we have
\[
\frac{T_{m,\mathcal A}(\hat t_{\mathcal A}^{(n,m)})}
{m^{-1}\vee D_m}
\xrightarrow{\rm a.s.}
\frac{\sup_{t\in\mathcal F_{\mathcal A}}T^\infty_{\mathcal A}(t)}
{\mathbb P(Y\in\mathcal A)}
=
\sup_{t\in\mathcal F_{\mathcal A}}\mathrm{rPower}^\infty_{\mathcal A}(t).
\]
By the dominated convergence theorem, we have
\[
\lim_{n,m\to\infty}\mathrm{rPower}_{\mathcal A}
=\mathbb{E}\left[\lim_{n,m\to\infty}\frac{T_{m,\mathcal{A}}(\hat t_{\mathcal{A}}^{(n,m)})}{m^{-1}\vee D_m}\right]
=
\sup_{t\in\mathcal F_{\mathcal A}}\mathrm{rPower}^\infty_{\mathcal A}(t).
\]
This completes the proof of Theorem~\ref{theorem2}.

\subsection{Proof of Theorem~\ref{optimal_uncertainty}}
\begin{proof}
Fix $g\in[G]$ and write $\mathcal A=\mathcal A_g$ and $\alpha_{\mathcal{A}}=\alpha_{\mathcal A_g}$. Define
\[
S(X):=\mu_{\hat f(X)}(X)=\mathbb P\{Y=\hat f(X)\mid X\},
\qquad
B(X):=\mathbb I\{\hat f(X)\in\mathcal A\}.
\]
Let $\phi(X)$ denote the thresholding component of a selection rule. Therefore, the actual group-wise selection indicator is $B(X)\phi(X)$. In particular, a score-threshold pair $(U,\tau)$ corresponds to $\phi_{U,\tau}(X)=\mathbb I\{U(X)\le \tau\}$.
Since the denominator $\mathbb P(Y\in\mathcal A)$ in $\mathrm{rPower}^\infty_{\mathcal A}$ is independent of the selection rule, the population problem is equivalent to maximizing
\[
T(\phi):=\mathbb E\left[B(X)S(X)\phi(X)\right]
\]
subject to the oracle FDP constraint. For any measurable $\phi:\mathcal X\to\{0,1\}$, write
\[
V(\phi)=\mathbb E\left[B(X)(1-S(X))\phi(X)\right],
\qquad
R(\phi)=\mathbb E\left[B(X)\phi(X)\right].
\]
Thus the constraint $V(\phi)\le \alpha_{\mathcal{A}} R(\phi)$ is equivalent to
\begin{equation}\label{eq:t12_constraint}
C(\phi):=\mathbb E\{B(X)(1-\alpha_{\mathcal{A}}-S(X))\phi(X)\}\le0.
\end{equation}
It is sufficient to solve the more general problem in which $\phi$ ranges over all measurable $\{0,1\}$-valued rules.
Let $M$ be the finite measure on $[0,1]$ defined by
\[
M(A):=\mathbb P\{B(X)=1,\ S(X)\in A\},\quad A\in\mathcal{B}([0,1]).
\]
For a measurable rule $\phi$, define
\begin{equation}\label{conditional_exp}
a_{\phi}(s):=\mathbb E\left[\phi(X)\mid B(X)=1,S(X)=s\right],\quad s\in[0,1].
\end{equation}
Then $0\le a_{\phi}\le1$ and by conditioning on $(B(X),S(X))$, we have
\begin{equation}\label{eq:t12_projection}
T(\phi)=\int s a_{\phi}(s)\,dM(s),
\quad
C(\phi)=\int (1-\alpha_{\mathcal{A}}-s)a_{\phi}(s)\,dM(s).
\end{equation}
Therefore, the supremum over all measurable rules $\phi$ is bounded above by the value of the relaxed problem
\begin{equation}\label{eq:t12_relaxed}
\sup_{a}
\int s a(s)\,dM(s)
\quad \text{subject to }
\int (1-\alpha_{\mathcal{A}}-s)a(s)\,dM(s)\le0,
\end{equation}
where the supremum is taken over all measurable functions $a:[0,1]\to[0,1]$. 

We now solve \eqref{eq:t12_relaxed}. Let
\[
A_+:=\{s\in[0,1]:s\ge 1-\alpha_{\mathcal{A}}\},
\qquad
A_-:=\{s\in[0,1]:s<1-\alpha_{\mathcal{A}}\}.
\]
On $A_+$, the coefficient $1-\alpha_{\mathcal{A}}-s$ in the constraint is non-positive. Hence, if a feasible function $a(\cdot)$ satisfies $a(s)<1$ on a subset of $A_+$ with positive $M$-measure, replacing $a(s)$ by $1$ on that subset preserves feasibility and increases the objective. Thus an optimizer of \eqref{eq:t12_relaxed} must satisfy $a(s)=1$ for $M$-almost every $s\in A_+$.
The remaining problem on $A_-$ is
\begin{equation}\label{eq:t12_positive_cost}
\sup_{0\le a\le1}
\int_{A_-} s a(s)\,dM(s)\quad \text{subject to }
\int_{A_-}(1-\alpha_{\mathcal{A}}-s)a(s)\,dM(s)
\le b,
\end{equation}
where $
b:=-\int_{A_+}(1-\alpha_{\mathcal{A}}-s)\,dM(s)\ge0$.
On $A_-$, the cost $1-\alpha_{\mathcal{A}}-s$ is strictly positive, and the value-to-cost ratio is
\[
\rho(s):=\frac{s}{1-\alpha_{\mathcal{A}}-s}.
\]
We note that the function $\rho(s)$ is strictly increasing on $s\in[0,1-\alpha_{\mathcal{A}})$. By the Neyman--Pearson lemma applied to the value measure $sdM(s)$ and the cost measure $(1-\alpha_{\mathcal{A}}-s)dM(s)$ on $A_-$, the supremum in \eqref{eq:t12_positive_cost} is attained by selecting an upper level set of $\rho$, or equivalently an upper level set of $s$. Hence there exists $c\in[0,1-\alpha_{\mathcal{A}}]$ such that an optimal relaxed rule is
\[
a^*(s)=\mathbb I\{s\ge c\},\qquad M\text{-a.e.}
\]
Indeed, if the Neyman--Pearson rule randomizes on the boundary $s=c$, this randomization is immaterial here because $U^*\in\mathcal U$ implies that
\[
M(\{c\})=\mathbb P\{B(X)=1,S(X)=c\}
=\mathbb P\{B(X)=1,U^*(X)=1-c\}=0.
\]
Combining this with $a^*(s)=1$ on $A_+$, which is already included in the event $\{s\ge c\}$, the solution of \eqref{eq:t12_relaxed} is exactly of the form $a^*(s)=\mathbb I\{s\ge c\}$.

Write $\phi_c(X):=\mathbb I\{S(X)\ge c\}$. We have $a_{\phi_c}(s)=a^*(s)$ for $M$-almost every $s$, where $a_{\phi_c}(s)$ follows the formulation of \eqref{conditional_exp}.
Hence, by \eqref{eq:t12_projection}, $\phi_c$ attains the same value as the relaxed optimizer $a^*$. Since the relaxed problem is an upper bound for the original optimization over measurable $\{0,1\}$-valued rules, $\phi_c$ is also optimal for the original problem.
Recall that
\[
U^*(X):=1-S(X)=1-\mu_{\hat f(X)}(X).
\]
Then, for $\tau_c=1-c$, we have
\[
\phi_c(X)B(X)
=
\mathbb I\{\hat f(X)\in\mathcal A,\ U^*(X)\le \tau_c\}.
\]
Therefore the oracle optimum over all measurable rules is attained by thresholding $U^*$.
Since every score-threshold rule is a measurable rule, we have
\[
\sup_{U\in\mathcal U}\sup_{\tau\in\mathcal F_{\mathcal A}(U)}
\mathrm{rPower}_{\mathcal A}^{\infty}(\tau;U)
\le
\sup_{\tau\in\mathcal F_{\mathcal A}(U^*)}
\mathrm{rPower}_{\mathcal A}^{\infty}(\tau;U^*).
\]
The reverse inequality is obvious due to $U^*\in\mathcal U$. Therefore, we have
\[
\sup_{\tau\in\mathcal F_{\mathcal A}(U^*)}
\mathrm{rPower}_{\mathcal A}^{\infty}(\tau;U^*)
=
\sup_{U\in\mathcal U}\sup_{\tau\in\mathcal F_{\mathcal A}(U)}
\mathrm{rPower}_{\mathcal A}^{\infty}(\tau;U).
\]
By applying Theorem~\ref{theorem2} to score $U^*$, we have
\[
\lim_{n,m\to\infty}\mathrm{rPower}_{\mathcal A}(U^*) = \sup_{\tau\in\mathcal F_{\mathcal A}(U^*)} \mathrm{rPower}_{\mathcal A}^{\infty}(\tau;U^*).
\]
This completes the proof.

\end{proof}

\subsection{Proof of Proposition~\ref{prop:estimated_weights}}
\begin{proof}
We have
\begin{align*}
\mathbb{P}(\tilde{Y}=k|X=x)
&=\sum_{j=1}^K \mathbb{P}(\tilde{Y}=k,Y=j|X=x)\\
&=\sum_{j=1}^K\mathbb{P}(\tilde{Y}=k|Y=j,X=x)\mathbb{P}(Y=j|X=x)\\
&\overset{(1)}{=}\sum_{j=1}^K \mathbb{P}(\tilde{Y}=k|Y=j)\mathbb{P}(Y=j|X=x)\\
&=\sum_{j=1}^KT_{jk}\mu_j(x),
\end{align*}
where $(1)$ holds by the assumption that $\tilde Y\perp X\mid Y$. Therefore, we have 
\begin{equation*}
(\mathbb{P}(\tilde{Y}=1~|~X=x),\dots,\mathbb{P}(\tilde{Y}=K~|~X=x))=(\mu_1(x),\dots,\mu_K(x))\mathbf{T}.
\end{equation*}
Consequently, we have
\begin{equation*}
w_k(x)=\frac{\mathbb{P}(Y\neq k|X=x)}{\mathbb{P}(\tilde{Y}\neq k|X=x)}=\frac{1- \mu_k(x)}{1-\sum_{j=1}^KT_{jk}\mu_j(x)},
\end{equation*}
which completes the proof.
\end{proof}
\subsection{Proof of Theorem~\ref{robust_rcs}}
\begin{proof}
Consider any fixed $g\in[G]$ and write $\mathcal A=\mathcal A_g$, $\alpha_{\mathcal{A}}=\alpha_{\mathcal A_g}$. 
Let $\hat t^{\mathrm{est}}_{\mathcal A}$ be the threshold selected by \eqref{threshold} after replacing $w_k(\cdot)$ with $\hat w_{n,m,k}(\cdot)$. The corresponding FDP and FDR are denoted by $\mathrm{FDP}_{\mathcal A}$ and $\mathrm{FDR}_{\mathcal A}$, respectively.
For $t\in[0,1]$, let $R_{m,\mathcal{A}}(t)$ and $V_{m,\mathcal{A}}(t)$ be defined in \eqref{equ:R_m_A} and \eqref{equ:V_m_A}, respectively. Let
\begin{align*}
\widehat V^{\mathrm{oracle}}_{n,m,\mathcal A}(t)
&:=
\sum_{k\in\mathcal A}
\frac{|\mathcal S_k^{\mathrm{test}}|}{m}
\left[
\frac1{|\mathcal S_k^{\mathrm{cal}}|}
\sum_{i\in\mathcal S_k^{\mathrm{cal}}}
w_k(X_i)\mathbb I\{\tilde Y_i\ne k,U_i\le t\}
\right],\\
\widehat V^{\mathrm{est}}_{n,m,\mathcal A}(t)
&:=
\sum_{k\in\mathcal A}
\frac{|\mathcal S_k^{\mathrm{test}}|}{m}
\left[
\frac1{|\mathcal S_k^{\mathrm{cal}}|}
\sum_{i\in\mathcal S_k^{\mathrm{cal}}}
\hat w_{n,m,k}(X_i)\mathbb I\{\tilde Y_i\ne k,U_i\le t\}
\right].
\end{align*}
It suffices to consider the non-trivial case $|\mathcal{S}_k^{\text{cal}}|>0$ since $\mathbb{P}(\hat{Y}=k)>0$ for $k\in \mathcal{A}$.
By the proof of Theorem~\ref{theorem2}, we have
\begin{equation}\label{eq:oracle_weight_uniform_aligned}
\sup_{t\in[0,1]}
\left|
\widehat V^{\mathrm{oracle}}_{n,m,\mathcal A}(t)-V_{m,\mathcal A}(t)
\right|
\xrightarrow{p}0,\quad \text{as }n\to\infty,m\to\infty.
\end{equation}
For $k\in\mathcal A$, define
\[
D_{n,k}(t):=
\frac{1}{n}\sum_{i=1}^{n}
\left(\hat w_{n,m,k}(X_i)-w_k(X_i)\right)
\mathbb I\{\hat Y_i=k,\tilde Y_i\ne k,U_i\le t\}.
\]
Condition on $\hat w_{n,m,k}$, the samples $\{(X_i,Y_i,\tilde{Y}_i)\}_{i=1}^{n+m}$ remain i.i.d. Moreover,
\[
\sup_{t\in[0,1]}|D_{n,k}(t)|
\le
\frac1n\sum_{i=1}^{n}
|\hat w_{n,m,k}(X_i)-w_k(X_i)|
\mathbb I\{\hat Y_i=k,\tilde Y_i\ne k\}.
\]
Taking conditional expectation and using Cauchy's inequality, we have
\begin{align*}
\mathbb{E} \left[\sup_{t\in[0,1]}|D_{n,k}(t)|
\,\middle|\,\hat w_{n,m,k}
\right]
&\le
\mathbb E\left[|\hat w_{n,m,k}(X)-w_k(X)|\right] \\
&\le
\|\hat w_{n,m,k}-w_k\|_{L_2(P_X)}.
\end{align*}
For any $\epsilon,\delta>0$, we have
\begin{align*}
&\quad~\mathbb{P}\left(\sup_{t\in[0,1]}|D_{n,k}(t)|>\epsilon\right)\\
&\le
\mathbb P\left(
\|\hat w_{n,m,k}-w_k\|_{L_2(P_X)}>\epsilon\delta
\right)+
\mathbb P\left(
\sup_{t\in[0,1]}|D_{n,k}(t)|>\epsilon,\
\|\hat w_{n,m,k}-w_k\|_{L_2(P_X)}\le \epsilon\delta
\right)\\
&=
\mathbb P\left(
\|\hat w_{n,m,k}-w_k\|_{L_2(P_X)}>\epsilon\delta
\right)+
\mathbb E\left[
\mathbb I\left\{
\|\hat w_{n,m,k}-w_k\|_{L_2(P_X)}\le \epsilon\delta
\right\}
\mathbb P\left(
\sup_{t\in[0,1]}|D_{n,k}(t)|>\epsilon
\middle|
\hat w_{n,m,k}
\right)
\right]\\
&\overset{(i)}{\le} \mathbb P\left(
\|\hat w_{n,m,k}-w_k\|_{L_2(P_X)}>\epsilon\delta
\right)+\frac{1}{\epsilon}\mathbb E\left[\mathbb I\left\{\|\hat w_{n,m,k}-w_k\|_{L_2(P_X)}\leq \epsilon\delta \right\}\mathbb E\left[ \sup_{t\in[0,1]}|D_{n,k}(t)| \middle|\hat w_{n,m,k}\right]\right]\\
&\leq\mathbb{P}\left(\|\hat w_{n,m,k}-w_k\|_{L_2(P_X)}>\epsilon\delta\right)+\delta,
\end{align*}
where $(i)$ holds by the conditional Markov inequality.
By the $L_2(P_X)$ convergence, we have
\[
\mathbb P\left(
\|\hat w_{n,m,k}-w_k\|_{L_2(P_X)}>\epsilon\delta
\right)\to 0, \quad \text{as }n,m\to\infty.
\]
Therefore, we have
\[
\limsup_{n,m\to\infty}
\mathbb P\left(
\sup_{t\in[0,1]}|D_{n,k}(t)|>\epsilon
\right)
\le \delta.
\]
Since $\delta>0$ is arbitrary, we have
\begin{equation}\label{convergence_d}
\sup_{t\in[0,1]}|D_{n,k}(t)|=o_p(1),
\quad k\in\mathcal A.
\end{equation}
In the proof of Theorem~\ref{theorem2}, we established that
\[
\frac{|\mathcal S_k^{\mathrm{test}}|}{m}\xrightarrow{p}\mathbb{P}(\hat Y=k),
\quad
\frac{|\mathcal S_k^{\mathrm{cal}}|}{n}\xrightarrow{p}\mathbb{P}(\hat{Y}=k),
\]
which then implies that $n|\mathcal S_k^{\mathrm{test}}|/(m|\mathcal S_k^{\mathrm{cal}}|)=O_p(1)$. Since $\mathcal A$ is finite, by \eqref{convergence_d}, we have
\begin{equation}\label{eq:estimated_oracle_aligned}
\sup_{t\in[0,1]}
\left|
\widehat V^{\mathrm{est}}_{n,m,\mathcal A}(t)
-
\widehat V^{\mathrm{oracle}}_{n,m,\mathcal A}(t)
\right|=o_p(1).
\end{equation}
Combining \eqref{eq:oracle_weight_uniform_aligned} and \eqref{eq:estimated_oracle_aligned}, we obtain
\begin{equation}\label{eq:estimated_uniform_aligned}
\Delta_{n,m}
:=
\sup_{t\in[0,1]}
\left|
\widehat V^{\mathrm{est}}_{n,m,\mathcal A}(t)-V_{m,\mathcal A}(t)
\right|=o_p(1).
\end{equation}

We now prove FDR control. By Assumption~\ref{techniqe_condition}, there exists $t_0\in[0,1]$ such that
\[
R^{\infty}_{\mathcal A}(t_0)>0,
\qquad
V^{\infty}_{\mathcal A}(t_0)<\alpha_{\mathcal{A}} R^{\infty}_{\mathcal A}(t_0).
\]
Since $R_{m,\mathcal A}(t_0)\xrightarrow{p}R^{\infty}_{\mathcal A}(t_0)$ and $\widehat V^{\mathrm{est}}_{n,m,\mathcal A}(t_0)\xrightarrow{p}V^{\infty}_{\mathcal A}(t_0)$,
we have
\[
\frac{\widehat V^{\mathrm{est}}_{n,m,\mathcal A}(t_0)}
{R_{m,\mathcal A}(t_0)\vee m^{-1}}
\xrightarrow{p}
\frac{V^{\infty}_{\mathcal A}(t_0)}{R^{\infty}_{\mathcal A}(t_0)}
<\alpha_{\mathcal{A}}.
\]
Therefore, threshold $t_0$ belongs to the estimated feasible set with probability tending to one. On this event, the definition of $\hat t^{\mathrm{est}}_{\mathcal A}$ gives
\[
R_{m,\mathcal A}(\hat t^{\mathrm{est}}_{\mathcal A})
\ge
R_{m,\mathcal A}(t_0).
\]
Consequently, there exists $c_0>0$ such that
\begin{equation}\label{eq:denom_est_aligned}
\mathbb P\left\{R_{m,\mathcal A}(\hat t^{\mathrm{est}}_{\mathcal A})\ge c_0\right\}\to1.
\end{equation}
By the definition of $\hat t^{\mathrm{est}}_{\mathcal A}$, we have
\[
\widehat V^{\mathrm{est}}_{n,m,\mathcal A}(\hat t^{\mathrm{est}}_{\mathcal A})
\le
\alpha_{\mathcal{A}}\{R_{m,\mathcal A}(\hat t^{\mathrm{est}}_{\mathcal A})\vee m^{-1}\}.
\]
Combining this with \eqref{eq:denom_est_aligned} and \eqref{eq:estimated_uniform_aligned}, we get, with probability tending to one,
\begin{align*}
V_{m,\mathcal A}(\hat t^{\mathrm{est}}_{\mathcal A})\le
\hat V^{\mathrm{est}}_{n,m,\mathcal A}(\hat t^{\mathrm{est}}_{\mathcal A})+
\Delta_{n,m}\le
\alpha_{\mathcal{A}} R_{m,\mathcal A}(\hat t^{\mathrm{est}}_{\mathcal A})+o_{p}(1).
\end{align*}
Therefore,
\[
\frac{V_{m,\mathcal A}(\hat t^{\mathrm{est}}_{\mathcal A})}
{R_{m,\mathcal A}(\hat t^{\mathrm{est}}_{\mathcal A})}
\le
\alpha_{\mathcal{A}}+\frac{o_p(1)}{c_0}
=
\alpha_{\mathcal{A}}+o_p(1).
\]
The realized FDP is
\[
\mathrm{FDP}_{\mathcal A}
=
\frac{mV_{m,\mathcal A}(\hat t^{\mathrm{est}}_{\mathcal A})}
{1\vee mR_{m,\mathcal A}(\hat t^{\mathrm{est}}_{\mathcal A})}.
\]
By \eqref{eq:denom_est_aligned}, the denominator equals $mR_{m,\mathcal A}(\hat t^{\mathrm{est}}_{\mathcal A})$ with probability tending to one, which implies that $
\mathrm{FDP}_{\mathcal A}\le \alpha_{\mathcal{A}}+o_p(1)$.
Since $0\le \mathrm{FDP}_{\mathcal A}\le1$, for any $\eta>0$,
\[
\mathbb E[\mathrm{FDP}_{\mathcal A}]
\le
\alpha_{\mathcal{A}}+\eta+
\mathbb P\{\mathrm{FDP}_{\mathcal A}>\alpha_{\mathcal{A}}+\eta\}.
\]
Since $
\mathrm{FDP}_{\mathcal A}\le \alpha_{\mathcal{A}}+o_p(1)$, we have
\[
\limsup_{n,m\to\infty}\mathrm{FDR}_{\mathcal A}
=
\limsup_{n,m\to\infty}\mathbb E[\mathrm{FDP}_{\mathcal A}]
\le \alpha_{\mathcal{A}}+\eta.
\]
Letting $\eta\downarrow0$ gives $\limsup_{n,m\to\infty}\mathrm{FDR}_{\mathcal A}\le\alpha_{\mathcal{A}}$,
which completes the proof.
\end{proof}

\subsection{Proof of Proposition~\ref{FDR_uniform_label}}
\begin{proof}
Fix $g\in[G]$ and write $\mathcal A=\mathcal A_g$ and $\alpha_{\mathcal{A}}=\alpha_{\mathcal A_g}$. Set
\[
\tilde\alpha_{\mathcal{A}}:=(1-\epsilon)\alpha_{\mathcal{A}}+
\epsilon\left(1-\frac1K\right)
=
\alpha_{\mathcal{A}}+
\epsilon\left(1-\alpha_{\mathcal{A}}-\frac1K\right).
\]
For $t\in[0,1]$, let $R_{m,\mathcal A}(t),V_{m,\mathcal A}(t),R^\infty_{\mathcal A}(t)$, and $V^\infty_{\mathcal A}(t)$ be defined as before. Also define
\[
\tilde V^\infty_{\mathcal A}(t)
:=
\sum_{k\in\mathcal A}
\mathbb P\{\hat Y=k,\tilde Y\ne k,U\le t\}.
\]
Under the randomized response model with noise rate $\epsilon\in(0,1)$, we have
\[
\mathbb P(\widetilde Y\ne k\mid X)
=(1-\epsilon)\mathbb P(Y\ne k\mid X)
+
\epsilon\left(1-\frac1K\right).
\]
Since $\hat Y$ and $U$ are functions of $X$, it follows that, for every $t\in[0,1]$, we have
\begin{equation}\label{eq:uniform_noise_population_identity}
\begin{aligned}
\tilde{V}_\mathcal{A}^{\infty}(t)
&=\sum_{k\in\mathcal{A}}\mathbb{E}[\mathbb{I}\{\hat{Y}=k,U\leq t\}\mathbb{P}(\tilde{Y}\neq k|X)]\\
&= \sum_{k\in\mathcal{A}}\mathbb{E} \left[ \mathbb{I}\{\hat{Y} = k, U \le t\} \left( (1 - \epsilon)\mathbb{P}(Y \neq k \mid X) + \epsilon \left(1 - 1/K\right) \right) \right]\\
&= \sum_{k\in\mathcal{A}}(1 - \epsilon) \mathbb{E}_X \big[ \mathbb{I}\{\hat{Y} = k, U \le t\} \mathbb{P}(Y \neq k \mid X) \big] + \sum_{k\in\mathcal{A}}\epsilon (1 - 1/K) \mathbb{E}_X \big[ \mathbb{I}\{\hat{Y} = k, U \le t\} \big]\\
&=\sum_{k\in\mathcal{A}}(1-\epsilon)\mathbb{P}(\hat{Y} = k, Y \neq k, U \le t)+\sum_{k\in\mathcal{A}}\epsilon \left(1 - 1/K\right) \mathbb{P}(\hat{Y} = k, U \le t)\\
&=(1-\epsilon)V_{\mathcal{A}}^{\infty}(t)+\epsilon\left(1-1/K\right) R_{\mathcal{A}}^{\infty}(t).
\end{aligned}
\end{equation}
Consequently, whenever $R^\infty_{\mathcal A}(t)>0$, we have
\begin{equation}\label{eq:uniform_noise_feasible_equiv}
\frac{\tilde V^\infty_{\mathcal A}(t)}{R^\infty_{\mathcal A}(t)}
\le \tilde\alpha_{\mathcal{A}}
\quad\Longleftrightarrow\quad
\frac{V^\infty_{\mathcal A}(t)}{R^\infty_{\mathcal A}(t)}
\le \alpha_{\mathcal{A}}.
\end{equation}
Let
\begin{equation}\label{oracle_threshold_uni}
\tilde{\mathcal{F}}_{\mathcal{A}} = \left\{t \in [0,1]: R_{\mathcal{A}}^{\infty}(t)>0,~ \frac{\tilde{V}_{\mathcal{A}}^{\infty}(t)}{R_{\mathcal{A}}^{\infty}(t)}\le \tilde{\alpha}_{\mathcal{A}}\right\}.
\end{equation}
By \eqref{eq:uniform_noise_feasible_equiv}, $\tilde{\mathcal{F}}_{\mathcal{A}}=\mathcal F_{\mathcal A}$.
Let
\[
\widehat{\tilde V}_{n,m,\mathcal A}(t)
:=
\sum_{k\in\mathcal A}
\frac{|\mathcal S_k^{\mathrm{test}}|}{m}
\left[
\frac1{|\mathcal S_k^{\mathrm{cal}}|}
\sum_{i\in\mathcal S_k^{\mathrm{cal}}}
\mathbb I\{\tilde Y_i\ne k,U_i\le t\}
\right]
\]
be the normalized numerator of $\widetilde{\mathrm{FDP}}_{\mathcal A}(t)$. 
Since $\mathbb P(\hat Y=k)>0$ for $k\in\mathcal A$, we only focus on the case $|\mathcal S_k^{\mathrm{cal}}|>0$.
The same Glivenko--Cantelli argument used in the proof of Theorem~\ref{theorem2}, applied to the one-dimensional threshold class $\{\mathbb I(U\le t):t\in[0,1]\}$ and the finite collection of classes $k\in\mathcal A$, gives
\begin{equation}\label{eq:uniform_noise_calib_uniform}
\sup_{t\in[0,1]}
\left|
\widehat{\tilde V}_{n,m,\mathcal A}(t)-
\tilde V^\infty_{\mathcal A}(t)
\right|=o_p(1).
\end{equation}

We first prove FDR control. By Assumption~\ref{techniqe_condition} and \eqref{eq:uniform_noise_feasible_equiv}, there exists $t_0\in[0,1]$ such that
\[
R^\infty_{\mathcal A}(t_0)>0,
\qquad
\tilde V^\infty_{\mathcal A}(t_0)<\tilde\alpha_{\mathcal A} R^\infty_{\mathcal A}(t_0).
\]
Combining \eqref{eq:uniform_noise_calib_uniform} and $\sup_{t\in[0,1]}|R_{m,\mathcal A}(t)-R^\infty_{\mathcal A}(t)|=o_p(1)$ established before, we have
\[
\frac{\widehat{\tilde V}_{n,m,\mathcal A}(t_0)}
{R_{m,\mathcal A}(t_0)\vee m^{-1}}
\xrightarrow{p}
\frac{\tilde V^\infty_{\mathcal A}(t_0)}{R^\infty_{\mathcal A}(t_0)}
<\tilde\alpha_{\mathcal{A}}.
\]
Therefore $t_0$ belongs to the empirical feasible set in \eqref{uniform_noise_threshold} with probability tending to one. On this event, the definition of $\tilde t_{\mathcal A}$ gives
$R_{m,\mathcal A}(\tilde t_{\mathcal A})\ge R_{m,\mathcal A}(t_0)$.
Thus there exists a constant $c_0>0$ such that
\begin{equation}\label{eq:uniform_noise_denominator}
\mathbb P\left\{R_{m,\mathcal A}(\tilde t_{\mathcal A})\ge c_0\right\}
\to1.
\end{equation}
By the definition of $\tilde t_{\mathcal A}$, we have
$
\widehat{\tilde V}_{n,m,\mathcal A}(\tilde t_{\mathcal A})
\le
\tilde\alpha_{\mathcal{A}}\{R_{m,\mathcal A}(\tilde t_{\mathcal A})\vee m^{-1}\}.
$
Using \eqref{eq:uniform_noise_denominator} and \eqref{eq:uniform_noise_calib_uniform}, we have
\begin{equation}\label{eq:uniform_noise_feasible_population}
\tilde V^\infty_{\mathcal A}(\tilde t_{\mathcal A})
\le
\tilde\alpha_{\mathcal{A}} R_{m,\mathcal A}(\tilde t_{\mathcal A})+o_p(1).
\end{equation}
By \eqref{eq:uniform_noise_population_identity} and $\sup_{t\in[0,1]}|V_{m,\mathcal A}(t)-V^\infty_{\mathcal A}(t)|
=o_p(1)$ established before, we have
\[
\tilde V^\infty_{\mathcal A}(\tilde t_{\mathcal A})
=
(1-\epsilon)V_{m,\mathcal A}(\tilde t_{\mathcal A})
+
\epsilon\left(1-\frac1K\right)R_{m,\mathcal A}(\tilde t_{\mathcal A})
+o_p(1).
\]
Combining this with \eqref{eq:uniform_noise_feasible_population} and using
$\tilde\alpha_{\mathcal{A}}=(1-\epsilon)\alpha_{\mathcal{A}}+
\epsilon(1-1/K)$, we have
\[
(1-\epsilon)V_{m,\mathcal A}(\tilde t_{\mathcal A})
\le
(1-\epsilon)\alpha_{\mathcal{A}} R_{m,\mathcal A}(\tilde t_{\mathcal A})+o_p(1).
\]
Since the randomized response model has $1-\epsilon\in(0,1)$, we have 
\[
V_{m,\mathcal A}(\tilde t_{\mathcal A})
\le
\alpha_{\mathcal A} R_{m,\mathcal A}(\tilde t_{\mathcal A})+o_p(1).
\]
Together with \eqref{eq:uniform_noise_denominator}, we have
\[
\mathrm{FDP}_{\mathcal A}
=
\frac{mV_{m,\mathcal A}(\tilde t_{\mathcal A})}
{1\vee mR_{m,\mathcal A}(\tilde t_{\mathcal A})}
\le
\alpha_{\mathcal A}+o_p(1).
\]
By the same argument in the proof of Theorem~\ref{robust_rcs}, we have $\limsup_{n,m\to\infty}\mathrm{FDR}_{\mathcal A}\leq \alpha_{\mathcal{A}}$.

It remains to verify the power statement. We have established that $\tilde{\mathcal{F}}_{\mathcal{A}}=\mathcal{F}_{\mathcal{A}}$.
Rewrite $\tilde{t}_{\mathcal{A}}$ by $\tilde{t}_{\mathcal{A}}^{(n,m)}$.
Therefore, the same argmax argument used in the proof of Theorem~\ref{theorem2} implies that every accumulation point of $\tilde t_{\mathcal A}^{(n,m)}$ belongs to $\arg\max_{t\in\mathcal F_{\mathcal A}}R_{\mathcal A}^\infty(t)$.
Since the selection set is nested for different $t$, any maximizer of $R_{\mathcal A}^\infty$ over $\mathcal F_{\mathcal A}$ also maximizes $T_{\mathcal A}^\infty$ over $\mathcal F_{\mathcal A}$. Consequently, we have
$
T_{\mathcal A}^\infty(\tilde t^{(n,m)}_{\mathcal A})
\xrightarrow{p}
\sup_{t\in\mathcal F_{\mathcal A}}T_{\mathcal A}^\infty(t)
$.
Together with the uniform convergence of $T_{m,\mathcal A}$ established in Theorem~\ref{theorem2}, we have
\[
T_{m,\mathcal A}(\tilde t_{\mathcal A}^{(n,m)})
\xrightarrow{p}
\sup_{t\in\mathcal F_{\mathcal A}}T_{\mathcal A}^\infty(t).
\]
Moreover, we have
\[
\frac1m\sum_{j=1}^m\mathbb I\{Y_{n+j}\in\mathcal A\}
\xrightarrow{p}
\mathbb P(Y\in\mathcal A)>0,
\]
which then implies that
\[
\frac{T_{m,\mathcal A}(\tilde t_{\mathcal A}^{(n,m)})}
{m^{-1}\vee m^{-1}\sum_{j=1}^m\mathbb I\{Y_{n+j}\in\mathcal A\}}
\xrightarrow{p}
\frac{\sup_{t\in\mathcal F_{\mathcal A}}T_{\mathcal A}^\infty(t)}
{\mathbb P(Y\in\mathcal A)}.
\]
Since the left-hand side is bounded by one, the convergence in probability above implies convergence of expectations. Thus, we have
\[
\lim_{n,m\to\infty}\mathrm{rPower}_{\mathcal A}
=
\frac{\sup_{t\in\mathcal F_{\mathcal A}}T_{\mathcal A}^\infty(t)}
{\mathbb P(Y\in\mathcal A)}
=
\sup_{t\in\mathcal F_{\mathcal A}}\mathrm{rPower}_{\mathcal A}^\infty(t).
\]
This completes the proof.

\end{proof}

\section{Additional Details for Simulations}

\subsection{Supplementary Details about Simulations for Task 1}\label{simul:estimate_T}

In Section~\ref{simu:estimate}, we introduce different strategies for estimating the transition matrix $\mathbf{T}$. In Section~\ref{simu:bounded_asymmetric}, we introduce additional simulation results for Task 1.

\subsubsection{Estimation of contamination models}\label{simu:estimate}

\paragraph*{Estimation of the randomized response model.}
Let 
\[
\psi=\mathbb{P}(\hat{f}(X)=Y),\quad \tilde{\psi}=\mathbb{P}(\hat{f}(X)=\tilde{Y}).
\]
Denote the additional clean and noisy dataset by $\mathcal{D}_0=\left\{(X_i^0,Y_i^0)\right\}_{i=1}^{Kn_{\text{clean}}^{\text{fit}}}$ and $\mathcal{D}_1=\{(X_i^1,\tilde{Y}_i^1)\}_{i=1}^{Kn_{\text{noisy}}^{\text{fit}}}$, respectively. We estimate $\psi$ and $\tilde{\psi}$ by 
\[
\hat{\psi}=\frac{1}{Kn_{\text{clean}}^{\text{fit}}}\sum_{i=1}^{Kn_{\text{clean}}^{\text{fit}}}\mathbb{I}\{\hat{f}(X_i^{0})=Y_i^{0}\},\quad \hat{\tilde{\psi}}=\frac{1}{Kn_{\text{noisy}}^{\text{fit}}}\sum_{i=1}^{Kn_{\text{noisy}}^{\text{fit}}}\mathbb{I}\{\hat{f}(X_i^{1})=\tilde{Y}_i^{1}\}.
\]
By the definition of the randomized response model with level $\epsilon\in(0,1)$, we have
\[
\tilde{\psi}=(1-\epsilon)\psi+\frac{\epsilon}{K}.
\]
Therefore, a plug-in estimate of $\epsilon$ is 
\[
\hat{\epsilon}_{\text{plug-in}}=\frac{\hat{\psi}-\hat{\tilde{\psi}}}{\hat{\psi}-1/K}.
\]

We also leverage bootstrap~\citep{bootstrap1979} to obtain a $(1-\beta)\times100\%$ one-sided lower bound. In practice, practitioners can also use other methods such as the random weighting method~\citep{ming2025random} to obtain it.
Specifically, for $b\in[B]$, we resample a dataset $\mathcal{D}_0^{(b)}$ with size $Kn_{\text{clean}}^{\text{fit}}$ from $\mathcal{D}_0$ and a dataset $\mathcal{D}_1^{(b)}$ with size $Kn_{\text{noisy}}^{\text{fit}}$ from $\mathcal{D}_1$.
By the same estimation procedure, we can obtain $\hat{\psi}^{(b)}$ and $\hat{\tilde{\psi}}^{(b)}$, and then compute $\hat{\epsilon}_{\text{plug-in}}^{(b)}=(\hat{\psi}^{(b)}-\hat{\tilde{\psi}}^{(b)})/(\hat{\psi}^{(b)}-1/K)$.
Finally, the $(1-\beta)\times100\%$ one-sided lower bound of $\epsilon$ can be estimated by the $\beta$ quantile of $\left\{\hat\epsilon^{(1)}_{\text{plug-in}},\dots,\hat{\epsilon}^{(B)}_{\text{plug-in}}\right\}$.
In our simulations, we set $\beta=0.05$ and $B=200$.

\paragraph*{Estimation of general contamination models.}
\cite{sesia2025adaptive} provide a strategy to estimate an unknown label contamination model with sets $\mathcal{D}_0$ and $\mathcal{D}_1$. In our experiments, we consider a more straightforward setting where we have a dataset of $\mathcal{D}_{\text{fit}}=\{(X_i^{\text{fit}},Y_i^{\text{fit}},\tilde{Y}_i^{\text{fit}})\}_{i=1}^{Kn_{\text{pair}}}$, where both the clean and noisy labels are observed. This setting is common in practice, as practitioners often manually relabel a small subset to obtain paired clean and noisy labels.
The plug-in estimator is then constructed by
\begin{equation*}    \label{eq:plugin_transition_block}
    \hat T_{jk}
    =
    \frac{\#\{i:Y_i^{\text{fit}}=j,\tilde Y_i^{\text{fit}}=k\}+\lambda}
    {\#\{i:Y_i^{\text{fit}}=j\}+K\lambda},
\end{equation*}
where the term $\lambda$ represents the Laplace smoothing. In our simulations, we set $\lambda=1$.

\paragraph{Details of model mis-specification.}
We introduce details on the experiments about model mis-specification in Section~\ref{task1:case3}. The underlying contamination model is a block-wise matrix $\mathbf{T}$. We incorrectly assume the randomized-response model and therefore estimates only a scalar noise level $\epsilon$ from the paired fitting sample. Specifically, let
\[
\hat{q}=\frac{1}{K n_{\mathrm{pair}}^{\mathrm{fit}}}\sum_{i=1}^{Kn_{\text{pair}}} \mathbb{I}\{\tilde Y_i^{\text{fit}}\neq Y_i^{\text{fit}}\}.
\]
Under the randomized response model, we have $\mathbb{P}(\tilde Y\neq Y)=\epsilon(1-1/K)$. Therefore, the estimate of $\epsilon$ under model mis-specification is given by $\hat\epsilon_{\mathrm{mis}}=\hat{q}/(1-1/K)$.
The corresponding working transition matrix is
\begin{equation*}  \label{eq:rr_misspecified_transition}
    \hat T^{\mathrm{mis}}_{jk}=(1-\hat\epsilon_{\mathrm{mis}})\mathbb{I}\{j=k\} +\frac{\hat\epsilon_{\mathrm{mis}}}{K},
    \quad j,k\in[K].
\end{equation*}
Then the \texttt{RCS (mis-specified $\mathbf{T}$)} is exactly defined by the \texttt{RCS (plug-in $\mathbf{T}$)} with the mis-specified $\hat{\mathbf{T}}^{\mathrm{mis}}=(\hat T^{\mathrm{mis}}_{jk})$.

\subsubsection{Results under bounded asymmetric label noise}\label{simu:bounded_asymmetric}
We report the results under bounded asymmetric label noises in Figure~\ref{1_asym_overall}, where the parameter settings are identical to those in Figure~\ref{1_uni_overall}. 

\begin{figure}[ht]
\centering
\includegraphics[width=0.8\textwidth]{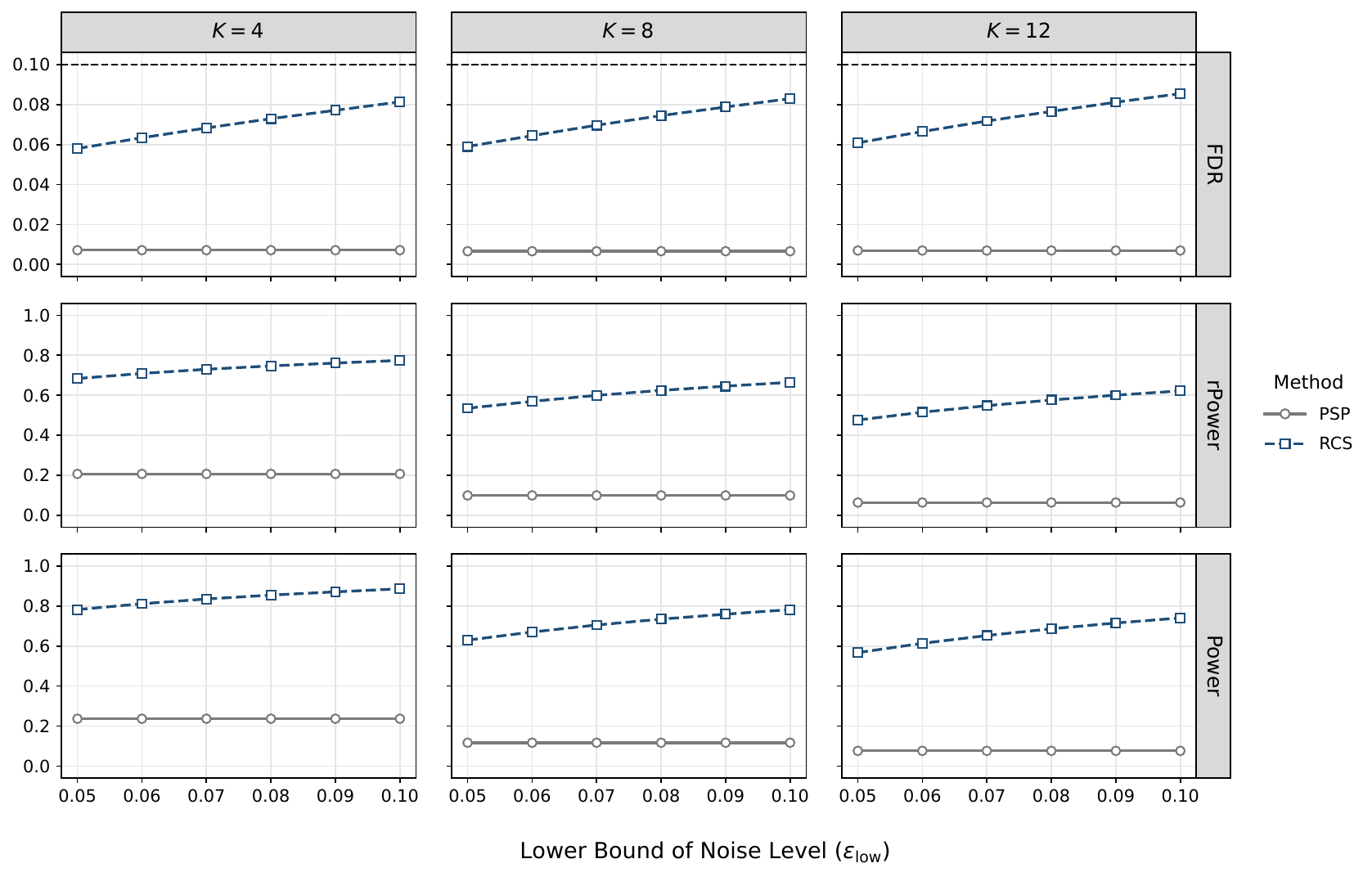}
    \caption{Performance of methods under asymmetric label noises ($\epsilon=0.1$) with bounded $\epsilon_{\text{low}}$ in overall classification.}
    \label{1_asym_overall}
\end{figure}

\subsection{Supplementary Details about Simulations for Task 2}\label{task_2_appendix}

In Section~\ref{task_2_appendix_1}, we introduce detailed information about the additive Gaussian contamination used in Section~\ref{Additive_response_noise}.
In Section~\ref{task_2_appendix_2}, we introduce the score-localized weight estimator used in Figure~\ref{fig:task2-additive}.

\subsubsection{Details of additive Gaussian contamination}\label{task_2_appendix_1}
In Section~\ref{sec:regression}, we consider the common additive Gaussian contamination model. The contaminated response is constructed by $\tilde Y=Y+\eta$, where $\eta\mid X=x\sim N(0,\tau_\rho^2(x))$.
For each replication, we let $\mathcal Y_{\rm all}$ and $\mathcal M_{\rm all}$  denote the pooled clean responses $\{Y_i\}$ and the pooled scores $\{\mu(X_i)\}$ from the training, calibration, test, and audit samples, respectively.
Let
\begin{align*}
s_Y
&= \max\left\{\frac{\mathrm{Quantile}_{0.75}(\mathcal Y_{\rm all})-\mathrm{Quantile}_{0.25}(\mathcal Y_{\rm all})}{1.349},0.05 \right\},
\end{align*}
where $1.349$ is the interquartile range (IQR) of a standard normal distribution, so that $s_Y$ is an IQR-based robust estimate of the marginal response standard deviation.
Similarly, let
\begin{align*}
s_\mu
&= \max\left\{ \mathrm{Quantile}_{0.75}(\mathcal M_{\rm all})-\mathrm{Quantile}_{0.25}(\mathcal M_{\rm all}),10^{-6}
    \right\}.
\end{align*}
Fixed a noise-level parameter $\rho\geq0$.
For the homoscedastic contamination, we set $\tau_{\rho}=\rho s_Y$ such that the additive measurement error has the same standard deviation for all $x\in\mathcal{X}$.
For the score-dependent setting, we let
\[
\tau_\rho(x) =\rho s_Y(0.5+2r(x)), \quad
    r(x) =\frac{1}{1+\exp\left\{ -3\mu(x)/s_\mu \right\} }.
\]
Thus, the contamination level ranges approximately from $0.5\rho s_Y$ in low-response regions to \(2.5\rho s_Y\) in high-response regions, mimicking heterogeneous measurement error that is stronger for promising candidates with larger conditional responses.

\subsubsection{Details of score-localized RCS}\label{task_2_appendix_2}

We describe the implementation of \texttt{RCS (score-localized)} used in Figure~\ref{fig:task2-additive}. We have
\begin{align*}
\tilde{\nu}_1(x)
&=\mathbb{P}(\tilde{L}=1~|~L=1,X=x)\mathbb{P}(L=1~|~X=x)+\mathbb{P}(\tilde{L}=1~|~L=2,X=x)\mathbb{P}(L=2~|~X=x)\\
&=(a_1(x)-a_2(x))\mathbb{P}(L=1~|~X=x)+a_2(x).
\end{align*}
Therefore, the localized weight can be represented by
\begin{equation*}
w(x)=\frac{\mathbb{P}(L=1~|~X=x)}{\mathbb{P}(\tilde{L}=1~|~X=x)}=\frac{\tilde{\nu}_1(x)-a_2(x)}{(a_1(x)-a_2(x))\tilde{\nu}_1(x)}
\end{equation*}
whenever \(a_1(x)\neq a_2(x)\).
To estimate $\tilde{\nu}_1(x)$, we first translate the original training data $\{(X_i^{\text{train}},\tilde{Y}_i^{\text{train}})\}$ to $\{(X_i^{\text{train}},\tilde{L}_i^{\text{train}})\}$, where $\tilde{L}_i^{\text{train}}=1+\mathbb{I}\{\tilde{Y}_i^{\text{train}}>c\}$. Then the estimator $\hat{\nu}^{\text{noisy}}_1(x)$ is obtained by training a soft-classifier on the translated training data.

The remaining problem is to estimate $\{a_{\ell}(x)\}_{\ell=1}^2$ by leveraging an additional dataset 
$\mathcal{D}_{\text{fit}}=\{(X_i^{\text{fit}},Y_i^{\text{fit}},\tilde{Y}_i^{\text{fit}})\}_{i=1}^{n_{\text{fit}}}$, which is a well-studied problem in the past few decades. Although there are several parametric and nonparametric methods, in our simulations, to show the robustness of the RCS framework, we simply estimate $a_{\ell}(x)$ through a one-dimensional score-localized approximation. Specifically, we define the signed score $S(x)=\hat\mu(x)-c$, where $\hat\mu(x)$ is the noisy-response regressor used by both \texttt{BH\_clip}
and RCS. We divide $\mathcal{D}_{\text{fit}}$ into $B$ bins, denoted by $\{\mathcal I_b\}_{b=1}^B$, based on the empirical quantiles of
\(\{S(X_i^{\rm fit})\}_{i=1}^{n_{\rm fit}}\). Let $b(x)$ be the index of $S(x)$, i.e., $S(x)\in\mathcal{I}_{b(x)}$.
For each $b\in[B]$, we compute
\begin{equation*}
    \hat a_{\ell b}= \frac{  \sum_{i=1}^{n_{\rm fit}}\mathbb I\{L_i^{\rm fit}=\ell,\, \tilde L_i^{\rm fit}=1,\,
              S(X_i^{\rm fit})\in\mathcal I_b\} +1}{\sum_{i=1}^{n_{\rm fit}}
    \mathbb I\{L_i^{\rm fit}=\ell, S(X_i^{\rm fit})\in\mathcal I_b\}+2}, \quad \ell\in\{1,2\}.
\end{equation*}
For a new point \(x\), we set
\begin{equation*}
    \hat a_\ell(x)=\hat a_{\ell,b(x)},\quad \ell=1,2.
\end{equation*}
Then we obtain the plug-in estimated localized weight
\[
    \hat w(x)
    =
    \frac{\hat{\nu}^{\text{noisy}}_1(x)-\hat a_2(x)}
    {(\hat a_1(x)-\hat a_2(x))\hat{\nu}^{\text{noisy}}_1(x)}.
\]
In our simulations, we take a small $n_{\text{fit}}=500$ and simply set $B=3$.


\vskip 0.2in
\bibliography{reference}

\end{document}